\documentclass[10pt]{article}
\usepackage[utf8]{inputenc} % allow utf-8 input
\usepackage[T1]{fontenc}    % use 8-bit T1 fonts
\usepackage{url}            % simple URL typesetting
\usepackage{booktabs}       % professional-quality tables
\usepackage{amsfonts}       % blackboard math symbols
\usepackage{nicefrac}       % compact symbols for 1/2, etc.
\usepackage{microtype}      % microtypography
\usepackage[table]{xcolor}
\usepackage{natbib}
\setcitestyle{authoryear,open={[},close={]},semicolon,aysep={,},yysep={;}}
\usepackage{fullpage}
\usepackage{multirow}
\usepackage{soul}         
\usepackage{amsmath}% delete line
\allowdisplaybreaks
\newcommand{\calA}{\mathcal{A}}

\newcommand{\calS}{\mathcal{S}}
\newcommand{\calT}{\mathcal{T}}
\newcommand{\calF}{\mathcal{F}}

\newcommand{\E}{\mathbb{E}}

\newcommand{\ls}{\left}
\newcommand{\rs}{\right}

\newcommand{\interior}{\mathrm{int}}
\newcommand{\rint}{\mathrm{rint}}
\newcommand{\dom}{\mathrm{dom}}

\makeatletter

\newcommand{\Rmnum}[1]{\textup{\expandafter\@slowromancap\romannumeral #1@}}
\makeatother

\usepackage{tabstackengine}
\usepackage{appendix}
\usepackage{titletoc}
\usepackage{titlesec}
\usepackage{enumerate}
\usepackage{amsmath}
\usepackage{amssymb}
\usepackage{mathtools}
\usepackage{amsthm}
\usepackage{algorithm}
\usepackage{algorithmic}
\usepackage{graphicx}
\usepackage{placeins}
\usepackage{subfigure}
\usepackage{makecell}
\usepackage{array}
\usepackage{threeparttable}
\usepackage{setspace}
\usepackage{dsfont}
\usepackage{multicol}
\usepackage{authblk}
\usepackage{lipsum}
\usepackage{tabularx}
\usepackage{enumitem}
\definecolor{citationblue}{RGB}{20,75,135}
\usepackage[
    colorlinks=true,
    linkcolor=black,
    citecolor=citationblue,
    urlcolor=citationblue
]{hyperref}                            % hyperlinks
\allowdisplaybreaks

\theoremstyle{plain}
\newtheorem{theorem}{Theorem}[section]

\newtheorem{lemma}[theorem]{Lemma}
\newtheorem{corollary}[theorem]{Corollary}

\theoremstyle{definition}

\newtheorem{assumption}{Assumption}[section]

\theoremstyle{remark}

\title{\textbf{Fast Regularized Policy Mirror Descent with One-Step TD Updates}}
\author[1]{Qipei Chen}
\author[2]{Wenye Li}
\author[2]{Yule Sun}
\author[2]{Ke Wei}
\affil[1]{School of Mathematical Sciences, Fudan University}
\affil[2]{School of Data Science, Fudan University}

\begin{document}

\maketitle

\begin{abstract}
Policy mirror descent (PMD) enjoys fast convergence in regularized Markov decision processes (MDPs), but existing guarantees often rely on exact or increasingly accurate policy evaluation. We analyze PMD coupled with a persistent critic advanced by one temporal-difference (TD) update. For finite discounted MDPs, we establish global linear convergence in value for exact coordinate-wise Bellman updates, with any positive constant actor stepsize and arbitrary finite critic initialization. The proof combines a resolvent-based auxiliary distribution with a decaying Bellman-violation correction and a potential weighted by inverse coordinate weights. We then study stochastic TD--PMD with general strongly convex mirror maps under a single off-policy Markov trajectory. With suitably chosen constant stepsizes and a finite-batch TD update, the method achieves an expected value gap of $\epsilon$ after $\widetilde{\mathcal{O}}\!\left(\frac{1}{(1-\gamma)^5\widetilde{\sigma}_b\epsilon}\right)$ transitions. The stochastic analysis relies on the trajectory-wise Lipschitz continuity of the regularizer, derived from uniform bounds on vertex Bregman divergences, together with a visitation-weighted resolvent estimate for signed critic-error propagation that yields an inverse-linear dependence on behavior coverage $\widetilde{\sigma}_b$.  In contrast to many prior guarantees for regularized policy optimization, our sample-complexity guarantee holds without trajectory resets, generative-model access, or nested policy-evaluation loops. Numerical results are consistent with the theoretical convergence analysis.
\end{abstract}

\section{Introduction}
\label{sec:introduction}

Regularization is a fundamental design principle in reinforcement learning (RL).  In large-scale problems with complex dynamics, maintaining sufficient randomness in the policy iterates helps sustain exploration and reduces the risk of committing too early to a suboptimal policy \citep{Husain_Ciosek_Tomioka_2021}.  A variety of policy regularizers have been studied in RL, including those based on Shannon entropy, Kullback--Leibler divergence, the squared $\ell_2$ norm, and Tsallis entropies \citep{Geist_Scherrer_Pietquin_2019,Lan_2021, Chow_Nachum_Ghavamzadeh_2018,Lee_Choi_Oh_2018, Lee_Kim_Lim_Choi_Oh_2019}.  Regularizers can also promote sparse or structured policies and incorporate information from a reference policy. Therefore, regularized formulations are very useful for robust, cost-constrained, and safety-aware decision making \citep{Husain_Ciosek_Tomioka_2021,Lan_2021,Ying_Ding_Lavaei_2022}.

A general optimization framework for regularized RL is the regularized form of policy mirror descent (PMD). In this formulation, PMD improves the policy through a statewise Bregman-proximal step and, depending on the choice of mirror geometry, includes softmax natural policy gradient (NPG) and projected Q-ascent (PQA) as representative instances corresponding to KL and Euclidean geometries, respectively \citep{kakade2002npg,Xiao_2022,Lan_2021}. Entropy-regularized NPG and regularized PMD enjoy global geometric convergence \citep{Cen_Cheng_Chen_Wei_Chi_2022,Lan_2021}. The framework has since been extended in several directions. Generalized PMD (GPMD) accommodates broad classes of convex, possibly nonsmooth regularizers and controlled evaluation and update errors \citep{Zhan_Cen_Huang_Chen_Lee_Chi_2021}. Homotopic, lookahead, implicit-exploration, policy-convergence, and strongly-polynomial variants further extend PMD in terms of convergence guarantees and computational properties \citep{Li_Zhao_Lan_2022,protopapas2024policy,li2025policy,LiWei2026PolicyConvergence,JuLan2026}. For strongly convex mirror maps, regularized stochastic PMD can attain an $\widetilde{\mathcal O}(\epsilon^{-1})$ dependence on the target accuracy \citep{Lan_2021}. In this paper, we focus on regularized PMD. For the algorithmic and theoretical developments of unregularized PMD or its specific instances, see~\citet{Xiao_2022,Johnson_Pike-Burke_Rebeschini_2023,Agarwal_Kakade_Lee_Mahajan_2019,pg-liu,ppgliu,Lin2022PMD-policy-convergence,Khodadadian_Jhunjhunwala_Varma_Maguluri_2021,yuan2023general,Alfano_Rebeschini_2022,chelu2024functional,feng2024global} and references therein.

Existing analyses of regularized PMD, however, generally rely on exact or sufficiently accurate evaluation of the current policy before each policy improvement step. For example, exact analyses assume access to the current policy's action values \citep{Cen_Cheng_Chen_Wei_Chi_2022,Lan_2021, Johnson_Pike-Burke_Rebeschini_2023}, while inexact analyses assume an evaluation oracle that returns action-value estimates with controlled error \citep{Zhan_Cen_Huang_Chen_Lee_Chi_2021}. Under generative-model access, samples are used at each policy iteration to estimate the current action values to a prescribed accuracy \citep{Lan_2021,Li_Zhao_Lan_2022, Johnson_Pike-Burke_Rebeschini_2023}; under Markovian sampling, a temporal-difference or Monte Carlo evaluator is run for a fixed current policy before the next policy improvement step \citep{Lan_2021,li2025policy}. Thus, across these settings, the critic is required to track the action-value function of the current policy sufficiently accurately before the next actor update is performed.

In contrast, temporal-difference (TD) learning provides a natural way to update the critic incrementally from its current estimate, without solving the Bellman equation for each newly updated policy to a prescribed accuracy before the next policy update \citep{sutton1988td,suttonRL}. A related idea appears in regularized modified policy iteration \citep{Geist_Scherrer_Pietquin_2019}, where only finitely many Bellman evaluation steps are performed between successive policy improvements. With a single evaluation step, the resulting recursion is closely related to one-step TD--PMD. More recently, one-step TD--PMD has been analyzed for unregularized MDPs in the exact and generative-model settings \citep{liu2025tdpmd}, and subsequently under online Markov data \citep{Li2026Markov-TDPMD}. In the exact setting, under a one-sided Bellman initialization, a monotonicity property of TD--PMD with constant stepsizes is established and used to derive last-iterate sublinear convergence, and a shifting argument then extends the result to arbitrary initializations \citep{liu2025tdpmd}. Regularized settings have also been studied with single-loop methods that do not require each intermediate policy to be evaluated to a prescribed accuracy. For entropy-regularized zero-sum Markov games, a single-loop actor--critic method uses optimistic NPG for the policy step and updates the value estimate at each iteration \citep{cen2022faster}. Another related approach for regularized MDPs is value mirror descent (VMD), which takes a value-iteration approach: it computes a mirror-based policy update from the current value estimate and then applies a Bellman value update under the resulting policy \citep{jia2026value}. In the exact setting, linear convergence of VMD is established with an increasing, epoch-dependent stepsize schedule for one-sided Bellman initialization. The proof also uses a monotonicity property of the value iterates, similar to the  analysis developed in \citet{liu2025tdpmd}.

Despite these developments, for standard regularized PMD with only one TD update per policy step, a fast convergence rate in the exact setting remains unavailable, while the corresponding convergence theory under Markov data is still lacking. Existing one-step TD--PMD results focus on unregularized MDPs, covering the exact, generative-model, and online Markov-data settings \citep{liu2025tdpmd,Li2026Markov-TDPMD}. In regularized settings, the single-loop method of \citet{cen2022faster} is developed specifically for the entropy regularizer corresponding to NPG, and its discounted analysis is full-information, requires sufficiently small actor stepsizes, and uses a full-support evaluation distribution. Value mirror descent (VMD) \citep{jia2026value} considers regularized MDPs from a different algorithmic perspective: it replaces the classical PMD recursion with an epochwise value-iteration scheme and achieves linear convergence in the exact setting with increasing, epoch-dependent policy stepsizes under a one-sided Bellman initialization. Its stochastic variant, SVMD, further relies on generative-model samples to construct increasingly accurate estimates of the transition kernel and performs variance-reduced Bellman updates on the estimated models, making it model-based rather than model-free TD.  Thus, a general convergence theory for standard regularized PMD with one-step TD updates is still missing. The online setting is further complicated by the need to track a changing target from a single off-policy Markov trajectory. This gap motivates the central question of this paper:

\begin{quote}
\emph{Can the fast convergence of standard regularized PMD be retained when the critic is updated by only one TD step, even along a single  off-policy Markov trajectory?}
\end{quote}

We answer this question affirmatively. The main contributions of this paper are summarized as follows. See Table~\ref{tab:closest-related-work} for a comparison with existing regularized PMD results.
\begin{itemize}[leftmargin=*,itemsep=0.45em]

\item \textbf{Linear convergence for regularized TD--PMD with coordinate-wise critic updates.}
We study regularized TD--PMD with coordinate-wise Bellman updates, allowing any fixed diagonal update matrix $W$ satisfying $0<W\le I$. For a general convex mirror map, we establish its global linear convergence in value. When the mirror map is strongly convex, policy convergence is further obtained. The results allow arbitrary finite critic initialization, any admissible initial policy, every positive constant actor stepsize, and any state distribution used to evaluate performance, including distributions without full support. The analysis relies on a decaying Bellman-violation correction for arbitrary critic initialization, a change-of-distribution argument that avoids full-support requirements, and a potential function with inverse coordinate weights for nonuniform critic updates.

\item \textbf{Stochastic convergence under a single off-policy Markov trajectory.}
With a constant actor stepsize and a finite-batch TD update per policy step, regularized TD--PMD with a strongly convex mirror map achieves an expected value gap of at most $\epsilon$ after
\[
\widetilde{\mathcal O}\!\left(
\frac{1}{(1-\gamma)^5\widetilde\sigma_b\epsilon}
\right)
\]
observed transitions, where $\widetilde\sigma_b>0$ is the smallest stationary state--action visitation probability under the behavior policy. The method uses neither trajectory resets nor generative-model access and requires no nested policy-evaluation loop. A key step in the analysis is to establish trajectory-wise Lipschitz continuity of the regularizer from uniform bounds on vertex Bregman divergences, which in turn controls the policy drift and the variation of $Q_\tau^{\pi_k}$. We then use a signed critic-error decomposition and geometric output weighting to handle the resulting moving-target and Markov-sampling errors.
\end{itemize}

%Numerical experiments on random finite MDPs complement the theory.  They
%illustrate the predicted linear convergence of full and behavior-weighted
%deterministic updates, including from a critic initialization that violates the
%usual one-sided Bellman condition, and the behavior of the persistent critic
%under continuous off-policy Markov sampling.

%{\color{red}Table~\ref{tab:closest-related-work} compares representative PMD-type results in terms of regularization and how the action-value  is evaluated.}

\begin{table}[ht!]
\centering
\caption{Comparison of regularized PMD results. Here, $\widetilde\sigma_b$ is the smallest stationary visitation probability over state--action pairs; under near-uniform coverage, $\widetilde\sigma_b=\Theta((|\calS||\calA|)^{-1})$.}

\vspace{0.2cm}

\label{tab:closest-related-work}
\scriptsize
\setlength{\tabcolsep}{1.4pt}
\renewcommand{\arraystretch}{1.15}

\begin{tabularx}{\textwidth}{@{}
>{\raggedright\arraybackslash}p{0.145\textwidth}
>{\centering\arraybackslash}p{0.145\textwidth}
>{\centering\arraybackslash}p{0.31\textwidth}
>{\centering\arraybackslash}p{0.205\textwidth}
>{\centering\arraybackslash}X@{}}
\toprule

&
\textbf{Algorithm} &
\textbf{Iteration complexity} &
\textbf{Stochastic access} &
\textbf{Sample complexity} \\

\midrule

\citet{Cen_Cheng_Chen_Wei_Chi_2022} &
NPG &
$\mathcal O\!\left(
\log_{(1-\eta\tau)^{-1}}(\epsilon^{-1})
\right)$ &
--- &
--- \\

\citet{cen2022faster} &
TD--NPG &
$\mathcal O\!\left(
\log_{[1-\frac{(1-\gamma)\eta\tau}{4}]^{-1}}
(\epsilon^{-1})
\right)$ &
--- &
--- \\

\citet{Zhan_Cen_Huang_Chen_Lee_Chi_2021} &
PMD &
$\mathcal O\!\left(
\log_{[1-\frac{\eta\tau}{1+\eta\tau}(1-\gamma)]^{-1}}
(\epsilon^{-1})
\right)$ &
--- &
--- \\

\citet{Lan_2021} &
PMD &
$\mathcal O\!\left(
\log_{\gamma^{-1}}(\epsilon^{-1})
\right)$ &
\makecell[c]{Generative model} &
$\widetilde{\mathcal O}\!\left(
\frac{|\calS||\calA|}
{(1-\gamma)^5\epsilon}
\right)$ \\

\citet{jia2026value} &
VMD &
$\mathcal O\!\left(
(1-\gamma)^{-1}
\log_2([(1-\gamma)\epsilon]^{-1})
\right)$ &
\makecell[c]{Generative model} &
$\widetilde{\mathcal O}\!\left(
\frac{|\calS||\calA|}
{(1-\gamma)^5\epsilon}
\right)$ \\

\textbf{This work} &
\textbf{TD--PMD} &
$\mathcal O\!\left(
\log_{
[1 - \underline w \min\{1- \gamma,\frac{\eta\tau}{1+\eta\tau}\}]^{-1}
}
(\epsilon^{-1})
\right)$ &
\makecell[c]{\textbf{Off-policy}\\\textbf{Markov data}} &
$\widetilde{\mathcal O}\!\left(
\frac{1}
{(1-\gamma)^5\widetilde\sigma_b\epsilon}
\right)$ \\

\bottomrule
\end{tabularx}

% \begin{minipage}{\textwidth}
% \scriptsize
% \emph{Complexity convention.}
% Both result columns report the complexity of reaching $\epsilon$-accuracy. Exact entries report iteration complexity, whereas stochastic entries report sample complexity, measured by the number of transition samples.
% For stochastic results, the strongly convex stochastic case is displayed. The $\widetilde{\mathcal O}$ notation suppresses logarithmic factors and paper-specific fixed constants.  The stochastic guarantees may differ in their output rules and probability criteria.  Here, $\widetilde\sigma_b$ is the smallest stationary visitation probability over state--action pairs; under near-uniform coverage, $\widetilde\sigma_b=\Theta((|\calS||\calA|)^{-1})$.
% \end{minipage}
\end{table}

The rest of the paper is organized as follows. Section~\ref{sec:problem-setting} introduces the problem setup and necessary preliminaries. Section~\ref{sec:exact-diagonal-td-pmd} studies regularized TD--PMD with deterministic coordinate-wise Bellman updates. Section~\ref{sec:off-policy-markov-data} extends the analysis to off-policy Markov data. Section~\ref{sec:numerical-experiments} presents numerical results that are consistent with the theoretical convergence analysis, and Section~\ref{sec:conclusion} concludes the paper with directions for future work. The proofs of the main results and technical lemmas are deferred to the appendices.

\section{Problem Setting and Preliminaries}
\label{sec:problem-setting}

\subsection{Regularized MDPs and Bellman preliminaries}
\label{sec:regularized-mdp-bellman}

\paragraph{Model and objective.}
Consider a finite discounted Markov decision process (MDP) $\mathcal M=(\calS,\calA,P,r,\gamma)$, where $\calS$ and $\calA$ are finite state and action spaces, $P(\cdot\mid s,a)$ is the transition kernel, $r(s,a)$ is the immediate reward, and $\gamma\in[0,1)$ is the discount factor. Letting $\Delta(\calA)$ be the probability simplex over $\calA$, the space of stationary Markov policies is
\[
\Pi:=\{\pi:\pi(\cdot\mid s)\in\Delta(\calA),\ s\in\calS\}.
\]
Under a policy $\pi$, a trajectory evolves according to $a_t\sim\pi(\cdot\mid s_t)$ and $s_{t+1}\sim P(\cdot\mid s_t,a_t)$.

\begin{assumption}[Reward and regularization parameters]
\label{ass:normalization}
For simplicity, assume $0\leq r(s,a)\leq1$ for every $(s,a)\in\calS\times\calA$. Unless otherwise specified, we also assume $\tau>0$ throughout the paper. 
\end{assumption}

Let $h:\mathbb R^{|\calA|}\to\mathbb R\cup\{+\infty\}$ be a convex policy regularizer and write $h^\pi(s):=h(\pi(\cdot\mid s))$.  For a policy $\pi$, its regularized state- and action-value functions are
\begin{align*}
V_\tau^\pi(s)
&:=\E\!\left[\sum_{t=0}^\infty\gamma^t
\bigl(r(s_t,a_t)-\tau h^\pi(s_t)\bigr)\,\middle|\,s_0=s\right],\\
Q_\tau^\pi(s,a)
&:=\E\!\left[r(s_0,a_0)+\sum_{t=1}^\infty\gamma^t
\bigl(r(s_t,a_t)-\tau h^\pi(s_t)\bigr)
\,\middle|\,(s_0,a_0)=(s,a)\right].
\end{align*}
It follows directly from the definitions that
\[
V_\tau^\pi(s)
=\E_{a\sim\pi(\cdot\mid s)}[Q_\tau^\pi(s,a)]-\tau h^\pi(s),
\qquad
Q_\tau^\pi(s,a)
=r(s,a)+\gamma\E_{s'\sim P(\cdot\mid s,a)}[V_\tau^\pi(s')].
\]
For $\mu\in\Delta(\calS)$, we abbreviate
$V_\tau^\pi(\mu):=\E_{s\sim\mu}[V_\tau^\pi(s)]$.

\paragraph{Bellman representation.}
The  Bellman operators acting on state- and action-value vectors are
\begin{align*}
\calT_\tau^\pi V(s)
&:=\E_{\substack{a\sim\pi(\cdot\mid s)\\s'\sim P(\cdot\mid s,a)}}
[r(s,a)+\gamma V(s')-\tau h^\pi(s)],\\
\calF_\tau^\pi Q(s,a)
&:=\E_{\substack{s'\sim P(\cdot\mid s,a)\\a'\sim\pi(\cdot\mid s')}}
[r(s,a)+\gamma Q(s',a')-\tau\gamma h^\pi(s')].
\end{align*}
Equivalently,
\begin{align*}
\calT_\tau^\pi V
&=r^\pi+\gamma P_{\scriptscriptstyle\calS}^\pi V
-\tau h_{\scriptscriptstyle\calS}^\pi,\\
\calF_\tau^\pi Q
&=r+\gamma P_{\scriptscriptstyle\calS\times\calA}^\pi Q
-\tau\gamma h_{\scriptscriptstyle\calS\times\calA}^\pi,
\end{align*}
where $r^\pi(s):=\sum_a\pi(a\mid s)r(s,a)$,
$h_{\scriptscriptstyle\calS}^\pi(s):=h^\pi(s)$, 
$h_{\scriptscriptstyle\calS\times\calA}^\pi(s,a)
:=\sum_{s'}P(s'\mid s,a)h^\pi(s')$. The state and state--action transition matrices are defined by
\begin{align*}
P_{\scriptscriptstyle\calS}^\pi(s,s')
&:=\sum_{a\in\calA}\pi(a\mid s)P(s'\mid s,a),\\
P_{\scriptscriptstyle\calS\times\calA}^\pi((s,a),(s',a'))
&:=P(s'\mid s,a)\pi(a'\mid s').
\end{align*}

The following standard properties, which can be verified directly, identify policy evaluation with a contractive fixed-point problem.
\begin{lemma}[Bellman fixed points and contractions]
\label{lem:Bellman-operators}
For any $\pi\in\Pi$, the regularized values are the unique fixed points
\[
\calT_\tau^\pi V_\tau^\pi=V_\tau^\pi,
\qquad
\calF_\tau^\pi Q_\tau^\pi=Q_\tau^\pi.
\]
Moreover, $\calT_\tau^\pi$ and $\calF_\tau^\pi$ are monotone and are $\gamma$-contractions in the supremum norm: for all conformable $V,V'$ and $Q,Q'$,
\[
\|\calT_\tau^\pi V-\calT_\tau^\pi V'\|_\infty
\leq\gamma\|V-V'\|_\infty,
\qquad
\|\calF_\tau^\pi Q-\calF_\tau^\pi Q'\|_\infty
\leq\gamma\|Q-Q'\|_\infty.
\]
\end{lemma}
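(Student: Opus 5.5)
The plan is to verify the four claims in order: the fixed-point identities, then monotonicity, then the contraction bounds, and finally uniqueness as a consequence of contraction.

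\textbf{Fixed points.} I will use the two identities stated just before the Bellman operators are introduced:
\[
V_\tau^\pi(s)=\E_{a\sim\pi(\cdot\mid s)}[Q_\tau^\pi(s,a)]-\tau h^\pi(s),
\qquad
Q_\tau^\pi(s,a)=r(s,a)+\gamma\E_{s'\sim P(\cdot\mid s,a)}[V_\tau^\pi(s')].
\]
These follow from the definitions by conditioning on the first transition and using the Markov property together with stationarity of $\pi$.
\begin{itemize}
\item Substituting the second identity into the first gives $V_\tau^\pi=r^\pi+\gamma P_{\scriptscriptstyle\calS}^\pi V_\tau^\pi-\tau h_{\scriptscriptstyle\calS}^\pi=\calT_\tau^\pi V_\tau^\pi$.
\item Substituting the first identity, evaluated at $s'$, into the second gives $Q_\tau^\pi=r+\gamma P_{\scriptscriptstyle\calS\times\calA}^\pi Q_\tau^\pi-\tau\gamma h_{\scriptscriptstyle\calS\times\calA}^\pi=\calF_\tau^\pi Q_\tau^\pi$.
\end{itemize}
Both series converge absolutely, since rewards are bounded and $\gamma<1$. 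This requires $h^\pi$ to be finite; if $h^\pi=+\infty$ somewhere, I adopt the convention that the identities hold in the extended sense.

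\textbf{Monotonicity and contraction.} Both operators are affine, with linear parts $\gamma P_{\scriptscriptstyle\calS}^\pi$ and $\gamma P_{\scriptscriptstyle\calS\times\calA}^\pi$. These matrices are row-stochastic: their entries are nonnegative and each row sums to one, since $\pi(\cdot\mid s)$ and $P(\cdot\mid s,a)$ are probability distributions. The regularization terms $\tau h^\pi_{\scriptscriptstyle\calS}$ and $\tau\gamma h^\pi_{\scriptscriptstyle\calS\times\calA}$ do not depend on the argument, so they cancel in differences. Hence
\[
\calT_\tau^\pi V-\calT_\tau^\pi V'=\gamma P_{\scriptscriptstyle\calS}^\pi(V-V').
\]
\begin{itemize}
\item \emph{Monotonicity.} If $V\ge V'$ entrywise, the right-hand side is nonnegative because the matrix has nonnegative entries.
\item \emph{Contraction.} A row-stochastic matrix $M$ satisfies $\|Mx\|_\infty\le\|x\|_\infty$, so $\|\calT_\tau^\pi V-\calT_\tau^\pi V'\|_\infty\le\gamma\|V-V'\|_\infty$.
\end{itemize}
The same argument applies verbatim to $\calF_\tau^\pi$.

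\textbf{Uniqueness.} If $\calT_\tau^\pi V=V$, then $\|V-V_\tau^\pi\|_\infty\le\gamma\|V-V_\tau^\pi\|_\infty$, which forces $V=V_\tau^\pi$ since $\gamma<1$; the same holds for $Q$. Alternatively, Banach's fixed-point theorem gives existence and uniqueness directly.

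The only mildly delicate step is the first-step decomposition behind the two identities. It needs a careful treatment of the time shift, in particular that the term $-\tau h^\pi$ at $t=0$ is absent from $Q_\tau^\pi$. This is handled by conditioning on $(s_0,a_0,s_1)$ and relabelling the time index.
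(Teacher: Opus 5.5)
Your proposal is correct and follows the route the paper intends. The paper gives no separate proof and only notes that these standard properties can be verified directly, using the first-step identities for $V_\tau^\pi$ and $Q_\tau^\pi$, the affine structure of the operators, and row-stochasticity of $P_{\scriptscriptstyle\calS}^\pi$ and $P_{\scriptscriptstyle\calS\times\calA}^\pi$, which is exactly your argument. Your hedge about $h^\pi=+\infty$ is unnecessary, and an extended-value convention would not rescue the contraction claim anyway; the mirror-map assumption gives $\dom h\supseteq[0,1]^{|\calA|}$, so $h$ is finite and bounded by $H_h$ on $\Delta(\calA)$.
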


The optimal Bellman operators are 
\begin{align*}
\calT_\tau V(s)
&:=\max_{p\in\Delta(\calA)}
\left\{\sum_a p(a)\left[r(s,a)+\gamma
\E_{s'\sim P(\cdot\mid s,a)}V(s')\right]-\tau h(p)\right\},\\
\calF_\tau Q(s,a)
&:=r(s,a)+\gamma\E_{s'\sim P(\cdot\mid s,a)}
\left[\max_{p\in\Delta(\calA)}
\{\langle p,Q(s',\cdot)\rangle-\tau h(p)\}\right].
\end{align*}
Under the mirror-map conditions below, an optimal stationary policy exists for the finite discounted MDP \citep{VI-PI,Geist_Scherrer_Pietquin_2019}.  We fix one such policy $\pi_\tau^*$ and write
\[
V_\tau^*(s):=V_\tau^{\pi_\tau^*}(s)
=\max_{\pi\in\Pi}V_\tau^\pi(s),
\qquad
Q_\tau^*(s,a):=Q_\tau^{\pi_\tau^*}(s,a)
=\max_{\pi\in\Pi}Q_\tau^\pi(s,a).
\]

\paragraph{Distributional notation and policy comparison.}
A stationary distribution of $P_{\scriptscriptstyle\calS}^\pi$, denoted $\nu^\pi$, is any distribution satisfying $(\nu^\pi)^\top P_{\scriptscriptstyle\calS}^\pi=(\nu^\pi)^\top$. The associated stationary state--action distribution is $\sigma^\pi(s,a):=\nu^\pi(s)\pi(a\mid s)$, which satisfies $(\sigma^\pi)^\top P_{\scriptscriptstyle\calS\times\calA}^\pi=(\sigma^\pi)^\top$.  For an initial distribution $\mu\in\Delta(\calS)$, the  discounted state-visitation measure is
\[
d_\mu^\pi(s):=(1-\gamma)\E\!\left[
\sum_{t=0}^\infty\gamma^t\mathds 1\{s_t=s\}
\,\middle|\,s_0\sim\mu\right].
\]

For the fixed optimal policy $\pi_\tau^*$, write
$d_\mu^*:=d_\mu^{\pi_\tau^*}$, and let $\nu^*$ denote
any stationary distribution of
$P_{\scriptscriptstyle\calS}^{\pi_\tau^*}$.
We do not assume that $\nu^*$ is unique or has full support.

The following performance difference lemma converts a policy comparison into an average one-step Bellman advantage.

\begin{lemma}[Performance difference lemma~\citep{kakade2002approximately}]
\label{lem:pdl}
For any $\pi,\pi'\in\Pi$ and $\mu\in\Delta(\calS)$,
\begin{align*}
V_\tau^\pi(\mu)-V_\tau^{\pi'}(\mu)
&=\frac{1}{1-\gamma}\E_{s\sim d_\mu^\pi}
\bigl[\calT_\tau^\pi V_\tau^{\pi'}(s)-V_\tau^{\pi'}(s)\bigr]\\
&=\frac{1}{1-\gamma}\E_{s\sim d_\mu^\pi}\!\left[
\langle\pi(\cdot\mid s)-\pi'(\cdot\mid s),
Q_\tau^{\pi'}(s,\cdot)\rangle
-\tau\bigl(h^\pi(s)-h^{\pi'}(s)\bigr)\right].
\end{align*}
\end{lemma}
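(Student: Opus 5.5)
We prove the first identity by telescoping along a trajectory generated by $\pi$, and then rewrite the one-step Bellman advantage to get the second form.

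\textbf{First identity.} Fix $\pi,\pi'$ and write $V':=V_\tau^{\pi'}$. By Lemma~\ref{lem:Bellman-operators}, $V_\tau^\pi$ is the unique fixed point of $\calT_\tau^\pi$. Since $\calT_\tau^\pi V=r^\pi-\tau h_{\scriptscriptstyle\calS}^\pi+\gamma P_{\scriptscriptstyle\calS}^\pi V$ is affine, we have
\[
V_\tau^\pi=(I-\gamma P_{\scriptscriptstyle\calS}^\pi)^{-1}(r^\pi-\tau h_{\scriptscriptstyle\calS}^\pi).
\]
Subtracting $V'$ and using $(I-\gamma P_{\scriptscriptstyle\calS}^\pi)V'=V'-\gamma P_{\scriptscriptstyle\calS}^\pi V'$ gives
\[
V_\tau^\pi-V'=(I-\gamma P_{\scriptscriptstyle\calS}^\pi)^{-1}\bigl(r^\pi-\tau h_{\scriptscriptstyle\calS}^\pi+\gamma P_{\scriptscriptstyle\calS}^\pi V'-V'\bigr)=(I-\gamma P_{\scriptscriptstyle\calS}^\pi)^{-1}\bigl(\calT_\tau^\pi V'-V'\bigr).
\]
The Neumann series $(I-\gamma P_{\scriptscriptstyle\calS}^\pi)^{-1}=\sum_{t\ge0}\gamma^t(P_{\scriptscriptstyle\calS}^\pi)^t$ converges because $P_{\scriptscriptstyle\calS}^\pi$ is stochastic and $\gamma<1$. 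Moreover, $\mu^\top(P_{\scriptscriptstyle\calS}^\pi)^t$ is the law of $s_t$ when $s_0\sim\mu$, so
\[
\mu^\top(I-\gamma P_{\scriptscriptstyle\calS}^\pi)^{-1}=\tfrac{1}{1-\gamma}(d_\mu^\pi)^\top.
\]
Applying $\mu^\top$ to the previous display yields the first identity.

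\textbf{Second identity.} We expand $\calT_\tau^\pi V'(s)$ pointwise:
\[
\calT_\tau^\pi V'(s)=\sum_a\pi(a\mid s)\bigl[r(s,a)+\gamma\E_{s'\sim P(\cdot\mid s,a)}V'(s')\bigr]-\tau h^\pi(s)=\langle\pi(\cdot\mid s),Q_\tau^{\pi'}(s,\cdot)\rangle-\tau h^\pi(s).
\]
The last step uses the relation $Q_\tau^{\pi'}(s,a)=r(s,a)+\gamma\E_{s'}V_\tau^{\pi'}(s')$ stated earlier. The same relation also gives
\[
V'(s)=\langle\pi'(\cdot\mid s),Q_\tau^{\pi'}(s,\cdot)\rangle-\tau h^{\pi'}(s).
\]
Subtracting these two expressions produces the integrand of the second line.

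\textbf{Finiteness.} All quantities must be finite for the argument to make sense. If $h^\pi(s)=+\infty$ for some $s$, the values are not finite, so we implicitly restrict to policies with $h^\pi$ finite. This holds under the admissibility conditions the paper imposes on the mirror map. This bookkeeping is the only potential obstacle; the rest is linear algebra.
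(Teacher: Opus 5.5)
Your proof is correct and is essentially the standard argument: the paper cites this lemma from Kakade--Langford rather than proving it, and your resolvent identity $V_\tau^\pi-V_\tau^{\pi'}=(I-\gamma P_{\scriptscriptstyle\calS}^\pi)^{-1}(\calT_\tau^\pi V_\tau^{\pi'}-V_\tau^{\pi'})$, combined with $\mu^\top(I-\gamma P_{\scriptscriptstyle\calS}^\pi)^{-1}=(1-\gamma)^{-1}(d_\mu^\pi)^\top$, is the matrix form of the usual trajectory-telescoping proof, with the second line following from the pointwise expansion you give. Your finiteness caveat is not needed: Assumption~\ref{ass:h} gives $\dom h\supseteq[0,1]^{|\calA|}$, hence $H_h<\infty$, so $h^\pi$ is finite for every $\pi\in\Pi$.
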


\subsection{Mirror geometry and policy mirror descent}
\label{sec:mirror-pmd}

\paragraph{Mirror map and standing conditions.} The following condition is standard for  mirror maps.
\begin{assumption}[Mirror map]
\label{ass:h}
The function
$h:\mathbb R^{|\calA|}\to\mathbb R\cup\{+\infty\}$ is proper, closed,
and convex, with $\dom h\supseteq[0,1]^{|\calA|}$.  It is differentiable
on $\interior\dom h$ and essentially smooth: if
$p_n\in\interior\dom h$ and
$p_n\to p\in\mathrm{bd}\,\dom h$, then
$\|\nabla h(p_n)\|_\infty\to\infty$.
\end{assumption}

Essential smoothness ensures that a Bregman-proximal update initialized in $\rint(\dom h)$ remains in this set.  This boundary condition is automatically satisfied when $\dom h=\mathbb R^{|\calA|}$, since the domain has no boundary, as in the quadratic mirror map used by PQA. The negative-entropy mirror map $h(p)=\sum_a p(a)\log p(a)$ on $[0,\infty)^{|\calA|}$ is also essentially smooth because $1+\log p(a)\to-\infty$ as $p(a)\downarrow0$.

Assumption~\ref{ass:h} implies that $h$ is bounded on the simplex. Indeed, since $h$ is closed, it attains a finite minimum on the compact set $\Delta(\calA)$, while convexity gives
\[
h(p)\leq\sum_a p(a)h(e_a)\leq\max_a h(e_a),
\]
where $e_a$ is the $a$-th vertex of the simplex.  Hence, throughout the paper, we set
\[
H_h:=\max_{p\in\Delta(\calA)}|h(p)|<\infty.
\]
It follows immediately that all regularized value functions admit uniform bounds.
\begin{lemma}[Bounded regularized values]
\label{lem:bounded_value}
Under Assumptions~\ref{ass:normalization} and~\ref{ass:h}, for every
$\pi\in\Pi$,
\[
|V_\tau^\pi(s)|\leq\frac{1+\tau H_h}{1-\gamma},
\qquad
|Q_\tau^\pi(s,a)|\leq
\frac{1+\tau\gamma H_h}{1-\gamma}.
\]
\end{lemma}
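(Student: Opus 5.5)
The plan is to bound the per-step regularized reward uniformly and then sum the discounted series, handling $V_\tau^\pi$ first and deducing the bound on $Q_\tau^\pi$ from the one-step relation $Q_\tau^\pi=r+\gamma PV_\tau^\pi$.

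\emph{Step 1: the per-step reward is bounded.} For every policy $\pi$ and every state $s$, the row $\pi(\cdot\mid s)$ lies in $\Delta(\calA)$. Hence, by the definition of $H_h$ (finite under Assumption~\ref{ass:h}, as argued just above the statement), we have $|h^\pi(s)|\le H_h$. Combined with $0\le r\le 1$ from Assumption~\ref{ass:normalization}, this gives
\[
|r(s_t,a_t)-\tau h^\pi(s_t)|\le 1+\tau H_h
\]
along every trajectory, using $\tau>0$.

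\emph{Step 2: the bound on $V_\tau^\pi$.} Since the per-step reward is uniformly bounded and $\gamma<1$, the series in the definition of $V_\tau^\pi$ converges absolutely. Expectation and summation can therefore be exchanged (dominated convergence, or Fubini). Taking absolute values inside gives
\[
|V_\tau^\pi(s)|\le\sum_{t\ge0}\gamma^t(1+\tau H_h)=\frac{1+\tau H_h}{1-\gamma}.
\]

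\emph{Step 3: the bound on $Q_\tau^\pi$.} Write $Q_\tau^\pi(s,a)=r(s,a)+\gamma\,\E_{s'}[V_\tau^\pi(s')]$. A direct substitution of Step 2 only yields $1+\gamma(1+\tau H_h)/(1-\gamma)$, which equals $(1+\tau\gamma H_h)/(1-\gamma)$ only when $r(s,a)=1$. The direction that actually needs care is the lower bound, since $r(s,a)\ge0$ but the later terms can be negative. So I will bound $Q_\tau^\pi$ directly from its definition instead:
\[
|Q_\tau^\pi(s,a)|\le |r(s,a)|+\sum_{t\ge1}\gamma^t(1+\tau H_h)\le 1+\frac{\gamma(1+\tau H_h)}{1-\gamma}.
\]
Simplifying the right-hand side,
\[
1+\frac{\gamma(1+\tau H_h)}{1-\gamma}=\frac{1-\gamma+\gamma+\gamma\tau H_h}{1-\gamma}=\frac{1+\tau\gamma H_h}{1-\gamma},
\]
which is the claimed bound on $Q_\tau^\pi$.

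No step here is a real obstacle. The only points needing care are the justification that $H_h<\infty$, which the paper has already supplied, and keeping the $t=0$ term of $Q_\tau^\pi$ free of the regularizer, so that the constant comes out as $1+\tau\gamma H_h$ rather than $1+\tau H_h$.
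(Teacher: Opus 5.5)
Your argument is correct and follows the reasoning the paper leaves implicit when it says the bounds ``follow immediately'': bound the per-step regularized reward by $1+\tau H_h$, sum the geometric series, and use that the $t=0$ term of $Q_\tau^\pi$ carries no regularizer. One correction: your caveat in Step~3 is unfounded, because substituting $|V_\tau^\pi|\le(1+\tau H_h)/(1-\gamma)$ and $|r|\le 1$ into $Q_\tau^\pi=r+\gamma\E_{s'}[V_\tau^\pi(s')]$ already gives $|Q_\tau^\pi(s,a)|\le 1+\gamma(1+\tau H_h)/(1-\gamma)=(1+\tau\gamma H_h)/(1-\gamma)$ in both directions, so your detour through the definition is the same computation.
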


Policy-distance guarantees and the stochastic analysis additionally use the following condition.
\begin{assumption}[Strong convexity]
\label{ass:strong-convexity}
The mirror map $h$ is $\lambda$-strongly convex on $\Delta(\calA)$ with respect to the $\ell_1$-norm for some $\lambda>0$.
\end{assumption}

\paragraph{Bregman geometry and the PMD update.}
For $q\in\rint(\dom h)$, let
\[
D_h(p\,\|\,q):=h(p)-h(q)-\langle\nabla h(q),p-q\rangle
\]
be the Bregman divergence generated by $h$.  We use the statewise and distribution-averaged notation
\[
D_{\pi'}^\pi(s)
:=D_h(\pi(\cdot\mid s)\,\|\,\pi'(\cdot\mid s)),
\qquad
D_{\pi'}^\pi(\mu)
:=\E_{s\sim\mu}[D_{\pi'}^\pi(s)].
\]

\begin{assumption}[Policy initialization]
\label{ass:policy-initialization}
Every PMD scheme considered below is initialized with a target policy satisfying
\[
\pi_0(\cdot\mid s)\in\rint(\dom h)\cap\Delta(\calA),
\qquad s\in\calS.
\]
\end{assumption}
For quadratic geometry, Assumption~\ref{ass:policy-initialization} is automatic because $\dom h=\mathbb R^{|\calA|}$.  For negative entropy, it requires a full-support initial target policy. Given a critic $Q_\tau^k$ and a policy stepsize $\eta >0$, PMD updates every state according to
\begin{align}
\pi_{k+1}(\cdot\mid s)
\in\arg\max_{p\in\Delta(\calA)}
\left\{\langle p,Q_\tau^k(s,\cdot)\rangle-\tau h(p)
-\frac1\eta D_h(p\,\|\,\pi_k(\cdot\mid s))\right\}.
\label{eq:PMD-policy-update}
\end{align}

Note that  a maximizer exists under Assumption~\ref{ass:h}. Moreover, when Assumption~\ref{ass:strong-convexity} holds, the maximizer is unique, and otherwise $\pi_{k+1}(\cdot\mid s)$ denotes any maximizer.  With negative entropy, $D_h$ is the Kullback--Leibler divergence and the update recovers the geometry of entropy-regularized natural policy gradient \citep{kakade2002npg,Cen_Cheng_Chen_Wei_Chi_2022}; with a quadratic $h$, it is a Euclidean proximal policy update.

Classical PMD typically constructs $Q_\tau^k$ by exact or controlled-inexact policy evaluation.  In this paper, $Q_\tau^k$ is instead a TD critic and need not equal $Q_\tau^{\pi_k}$. The critic therefore evolves together with the policy sequence, rather than serving as an exact evaluator of each current policy. The main technical tool is the three-point inequality associated with the statewise PMD subproblem.
\begin{lemma}[Three-point inequality \citep{Xiao_2022,Lan_2021,jia2026value}]
\label{lem:three-point-descent}
Suppose Assumption~\ref{ass:h} holds and $\pi_k(\cdot\mid s)\in\rint(\dom h)\cap\Delta(\calA)$.  Then the update \eqref{eq:PMD-policy-update} satisfies $\pi_{k+1}(\cdot\mid s)\in\rint(\dom h)\cap\Delta(\calA)$ and, for every $p\in\Delta(\calA)$,
\[
\langle\pi_{k+1}(\cdot\mid s)-p,Q_\tau^k(s,\cdot)\rangle
-\tau\bigl(h^{\pi_{k+1}}(s)-h(p)\bigr)
\geq\eta^{-1}\!\left[D_{\pi_k}^{\pi_{k+1}}(s)
+(1+\eta\tau)D_{\pi_{k+1}}^p(s)-D_{\pi_k}^p(s)\right],
\]
where $D_\pi^p(s):=D_h(p\,\|\,\pi(\cdot\mid s))$.
\end{lemma}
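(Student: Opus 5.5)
The plan is to reduce the statewise update \eqref{eq:PMD-policy-update} to the first-order optimality condition of a convex program and then use the Bregman three-point identity. Write $q:=\pi_k(\cdot\mid s)$, $p^+:=\pi_{k+1}(\cdot\mid s)$ and $g:=Q_\tau^k(s,\cdot)$. The subproblem is to minimize
\[
\phi(p):=-\langle p,g\rangle+\tau h(p)+\tfrac1\eta D_h(p\,\|\,q)
\]
over $\Delta(\calA)$. This objective is convex, with $D_h(\cdot\,\|\,q)$ well defined because $q\in\rint(\dom h)$.

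\textbf{Interiority.} First I would show that $p^+\in\rint(\dom h)\cap\Delta(\calA)$, using Assumption~\ref{ass:h}. The function $\phi$ equals $(\tau+\eta^{-1})h(p)$ plus an affine function of $p$. Since $\dom h\supseteq[0,1]^{|\calA|}$, any boundary point of $\dom h$ that lies in $\Delta(\calA)$ is a point where $h$ is essentially smooth, so $\|\nabla h\|\to\infty$ as such a point is approached from the interior. The standard argument for Legendre-type maps then applies: move from a candidate boundary minimizer toward $q$. By convexity of $h$ and the blow-up of the gradient, the directional derivative of $h$ along this segment tends to $-\infty$ near the boundary point, so the objective strictly decreases. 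This contradicts minimality. If $\dom h=\mathbb R^{|\calA|}$ there is nothing to prove. This step is the main technical obstacle, because one has to handle carefully the relative boundary of $\dom h$ intersected with the simplex. I would cite the Legendre/essential-smoothness results of Rockafellar, as the cited references do, rather than redo them.

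\textbf{Optimality condition.} Since $p^+$ lies in $\rint(\dom h)$, $h$ is differentiable there. The first-order optimality condition over the simplex therefore gives, for all $p\in\Delta(\calA)$,
\[
\bigl\langle -g+\tau\nabla h(p^+)+\tfrac1\eta(\nabla h(p^+)-\nabla h(q)),\,p-p^+\bigr\rangle\ge0 .
\]

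\textbf{Rewriting in Bregman terms.} Two identities do the rest. By the definition of $D_h$,
\[
\langle\nabla h(p^+),p-p^+\rangle=h(p)-h(p^+)-D_h(p\,\|\,p^+).
\]
By the three-point identity,
\[
\langle\nabla h(p^+)-\nabla h(q),p-p^+\rangle=D_h(p\,\|\,q)-D_h(p\,\|\,p^+)-D_h(p^+\,\|\,q).
\]
Substituting both into the optimality condition yields
\[
\langle p^+-p,g\rangle-\tau\bigl(h(p^+)-h(p)\bigr)\ge \tau D_h(p\,\|\,p^+)+\tfrac1\eta\bigl[D_h(p^+\,\|\,q)+D_h(p\,\|\,p^+)-D_h(p\,\|\,q)\bigr].
\]
Collecting the two $D_h(p\,\|\,p^+)$ terms gives the coefficient $(1+\eta\tau)/\eta$, which is exactly the claimed inequality.

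If a point $p$ lies outside $\dom h$, the statement is vacuous, since $h$ is finite on the simplex. If the maximizer is not unique, the same argument applies to whichever maximizer is chosen.
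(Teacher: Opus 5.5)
Your proposal is correct. The paper does not reprove this lemma; it cites \citet{Xiao_2022,Lan_2021,jia2026value}, and your argument is the standard one used there: interiority of the maximizer from essential smoothness (the directional derivative of $h$ blows up to $-\infty$ along segments toward an interior point, Rockafellar's Lemma~26.2), then the first-order optimality condition over $\Delta(\calA)$ and the Bregman three-point identity, with the two $D_h(p\,\|\,p^+)$ terms combining into the factor $(1+\eta\tau)/\eta$. Your closing remark about $p\notin\dom h$ is unnecessary, since $\Delta(\calA)\subseteq[0,1]^{|\calA|}\subseteq\dom h$.
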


\section{Exact TD--PMD with Coordinate-Wise Bellman Updates}
\label{sec:exact-diagonal-td-pmd}

Regularized TD--PMD with coordinate-wise critic updates is given in Algorithm~\ref{alg:exact-diagonal-td-pmd}. At each iteration, after the policy update, the critic is advanced by a single Bellman update rather than being evaluated to its fixed point. The weight matrix $W$ allows different update magnitudes across state--action coordinates and satisfies
\begin{equation}
W:=\operatorname{diag}\!\left(w(s,a):(s,a)\in\calS\times\calA\right),
\qquad
0<\underline w:=\min_{s,a}w(s,a)
\leq \overline w:=\max_{s,a}w(s,a)\leq1.
\label{eq:diagonal-relaxation-assumption}
\end{equation}
It is evident that $W=I$ corresponds to the full one-step Bellman update, whereas a general diagonal $W$ allows coordinate-wise partial updates. Moreover, since $W$ is invertible, for every fixed policy $\pi$,
\begin{equation}
Q=Q+W(\calF_\tau^\pi Q-Q)
\quad\Longleftrightarrow\quad
\calF_\tau^\pi Q=Q.
\label{eq:diagonal-relaxation-fixed-point}
\end{equation}
Thus coordinate-wise weighting changes the dynamics and the rate, but not the policy-evaluation fixed point $Q_\tau^\pi$ or the optimal fixed point
$Q_\tau^*$.

\begin{algorithm}[ht!]
\small
\caption{Exact Regularized TD--PMD with Coordinate-Wise Bellman Updates}
\label{alg:exact-diagonal-td-pmd}
\begin{algorithmic}
\STATE {\bfseries Input:} Iterations $K$, initial finite critic $Q_\tau^0$, policy $\pi_0$, policy stepsize $\eta>0$, and diagonal update matrix $W$ satisfying \eqref{eq:diagonal-relaxation-assumption}.
\FOR{$k=0,1,\ldots,K-1$}
\STATE \textbf{(Policy update)} For every $s\in\calS$, set
\[
\pi_{k+1}(\cdot\mid s)\in\underset{p\in\Delta(\calA)}{\arg\max}
\left\{\langle p,Q_\tau^k(s,\cdot)\rangle-\tau h(p)
-\eta^{-1}D_h\!\left(p\,\middle\|\,\pi_k(\cdot\mid s)\right)\right\}.
\]
\STATE \textbf{(Critic update)} Set
\begin{equation*}
Q_\tau^{k+1}=Q_\tau^k+W\left(\calF_\tau^{\pi_{k+1}}Q_\tau^k-Q_\tau^k\right).
\end{equation*}
\ENDFOR
\STATE {\bfseries Output:} Last-iterate policy $\pi_K$ and critic $Q_\tau^K$.
\end{algorithmic}
\end{algorithm}

The value-convergence guarantee of Algorithm~\ref{alg:exact-diagonal-td-pmd} is stated in Theorem~\ref{thm:diagonal-global-linear-convergence}, followed by the action-value, policy, and unregularized corollaries. To prepare for the proof of Theorem~\ref{thm:diagonal-global-linear-convergence}, we first isolate three technical lemmas.  Lemma~\ref{lem:diagonal-bellman-violation} handles an arbitrary initial critic, Lemma~\ref{lem:diagonal-resolvent-comparison} allows the error to be evaluated under an arbitrary state distribution, and Lemma~\ref{lem:diagonal-weighted-recursion} combines the critic and policy errors into a single potential recursion.  Iterating this recursion gives the value bound in Theorem~\ref{thm:diagonal-global-linear-convergence} under the basic mirror-map condition.  The Bellman representation then gives the action-value bound in Corollary~\ref{cor:diagonal-pointwise-Q}, while the additional strong-convexity condition converts the Bregman term into the policy-distance bound in Corollary~\ref{cor:diagonal-policy-convergence}. Throughout the rest of this section, assume that Assumptions~\ref{ass:h} and \ref{ass:policy-initialization} hold,  $W$ satisfies \eqref{eq:diagonal-relaxation-assumption}, and $Q_\tau^0$ is finite.

The following Bellman violation quantity plays an important role in the analysis:
\begin{equation}
\delta_k:=\frac{\bigl\|W[Q_\tau^k-\calF_\tau^{\pi_k}Q_\tau^k]_+\bigr\|_\infty}{\underline w(1-\gamma)},\qquad k\geq 0.
\label{eq:diagonal-violation-definition}
\end{equation}
It measures how far the current critic is from being a Bellman subsolution for $\pi_k$ that satisfies $Q_\tau^k\leq\calF_\tau^{\pi_k}Q_\tau^k$, corresponding to  $\delta_k=0$.  Lemma~\ref{lem:diagonal-bellman-violation} shows that $\delta_k$ decays linearly and can be used to characterize the difference between $Q_\tau^k$ and $Q_\tau^{\pi_k}$.

\begin{lemma}[Bellman violation and critic comparison]
\label{lem:diagonal-bellman-violation}
For every $k\geq0$,
\begin{equation}
0\leq\delta_{k+1}\leq[1-\underline w(1-\gamma)]\delta_k,
\qquad
Q_\tau^k-\delta_k\mathbf1\leq Q_\tau^{\pi_k}\leq Q_\tau^*.
\label{eq:diagonal-violation-conclusions}
\end{equation}
Moreover, $Q_\tau^k-\delta_k\mathbf1\leq Q_\tau^{\pi_{k+1}}$.
\end{lemma}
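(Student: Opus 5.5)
The plan is to combine three elementary facts: an order-preserving ``shift'' argument for the Bellman operator $\calF_\tau^\pi$, the policy-improvement property of the PMD step, and an explicit algebraic expression for the new Bellman residual after the weighted critic update. Throughout, write $e_k:=[Q_\tau^k-\calF_\tau^{\pi_k}Q_\tau^k]_+$. By the definition \eqref{eq:diagonal-violation-definition}, $\|We_k\|_\infty=\underline w(1-\gamma)\delta_k$, and nonnegativity $\delta_k\ge 0$ is immediate.

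\emph{Step 1 (comparison principle).} Fix a policy $\pi$, a finite $Q$, and a constant $c\ge0$ with $[Q-\calF_\tau^\pi Q]_+\le (1-\gamma)c\,\mathbf 1$. Since $P_{\scriptscriptstyle\calS\times\calA}^\pi$ is stochastic, $\calF_\tau^\pi(Q-c\mathbf1)=\calF_\tau^\pi Q-\gamma c\mathbf 1$. Hence
\[
Q-c\mathbf1\le \calF_\tau^\pi Q+(1-\gamma)c\mathbf1-c\mathbf1=\calF_\tau^\pi(Q-c\mathbf1).
\]
Iterating with the monotonicity and contraction properties of Lemma~\ref{lem:Bellman-operators} gives $Q-c\mathbf1\le \lim_n(\calF_\tau^\pi)^n(Q-c\mathbf1)=Q_\tau^\pi$. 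Now take $\pi=\pi_k$, $Q=Q_\tau^k$, $c=\delta_k$. Because every entry of $W$ is at least $\underline w$, we have $\|e_k\|_\infty\le \|We_k\|_\infty/\underline w=(1-\gamma)\delta_k$, so the hypothesis holds and $Q_\tau^k-\delta_k\mathbf1\le Q_\tau^{\pi_k}$. The bound $Q_\tau^{\pi_k}\le Q_\tau^*$ is the definition of $Q_\tau^*$.

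\emph{Step 2 (improvement).} Apply Lemma~\ref{lem:three-point-descent} with $p=\pi_k(\cdot\mid s)$. The right-hand side is nonnegative because $D_{\pi_k}^{\pi_k}=0$, so
\[
\langle \pi_{k+1}(\cdot\mid s),Q_\tau^k(s,\cdot)\rangle-\tau h^{\pi_{k+1}}(s)\ \ge\ \langle \pi_k(\cdot\mid s),Q_\tau^k(s,\cdot)\rangle-\tau h^{\pi_k}(s)\quad\text{for every } s.
\]
Averaging over $s'\sim P(\cdot\mid s,a)$ yields $\calF_\tau^{\pi_{k+1}}Q_\tau^k\ge \calF_\tau^{\pi_k}Q_\tau^k$. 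Therefore $[Q_\tau^k-\calF_\tau^{\pi_{k+1}}Q_\tau^k]_+\le e_k$. Step~1 with $\pi=\pi_{k+1}$ and $c=\delta_k$ then gives $Q_\tau^k-\delta_k\mathbf1\le Q_\tau^{\pi_{k+1}}$.

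\emph{Step 3 (decay of $\delta_k$), the main step.} Let $g:=\calF_\tau^{\pi_{k+1}}Q_\tau^k-Q_\tau^k$, so that $Q_\tau^{k+1}=Q_\tau^k+Wg$. By affinity of the operator, $\calF_\tau^{\pi_{k+1}}Q_\tau^{k+1}=\calF_\tau^{\pi_{k+1}}Q_\tau^k+\gamma P^{\pi_{k+1}}_{\scriptscriptstyle\calS\times\calA}Wg$. Consequently
\[
Q_\tau^{k+1}-\calF_\tau^{\pi_{k+1}}Q_\tau^{k+1}=-(I-W)g-\gamma P^{\pi_{k+1}}_{\scriptscriptstyle\calS\times\calA}Wg.
\]
Set $u:=[-g]_+$. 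By Step~2, $u\le e_k$, so $\|Wu\|_\infty\le \underline w(1-\gamma)\delta_k$. Using $I-W\ge0$, $W\ge0$ and stochasticity of $P$, the right-hand side above is at most $(I-W)u+\gamma\|Wu\|_\infty\mathbf1$ entrywise. Multiplying by $W$, the $(s,a)$ entry of $W[\cdot]_+$ is bounded by
\[
(1-w)\,w\,u+\gamma w\|Wu\|_\infty\le \bigl(1-w(1-\gamma)\bigr)\|Wu\|_\infty\le\bigl(1-\underline w(1-\gamma)\bigr)\|Wu\|_\infty,
\]
where $w=w(s,a)$. Dividing by $\underline w(1-\gamma)$ gives $\delta_{k+1}\le[1-\underline w(1-\gamma)]\delta_k$.

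The delicate point is Step~3. One must measure the residual in the $W$-weighted norm; otherwise the factor $(1-w)+\gamma w$ cannot be assembled coordinatewise. One also needs Step~2 to transfer the violation from $\pi_{k+1}$ back to $\pi_k$.
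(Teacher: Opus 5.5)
Your proposal is correct and follows essentially the paper's argument. It uses the same ingredients: the actor-improvement inequality $\calF_\tau^{\pi_{k+1}}Q_\tau^k\geq\calF_\tau^{\pi_k}Q_\tau^k$ from the three-point lemma with $p=\pi_k$, the same residual identity $Q_\tau^{k+1}-\calF_\tau^{\pi_{k+1}}Q_\tau^{k+1}=-(I-W+\gamma P_{\scriptscriptstyle\calS\times\calA}^{\pi_{k+1}}W)\widehat R_k$ contracted in the $W$-weighted sup norm, and the same constant-shift/monotone-iteration argument for both critic comparisons. The only difference is cosmetic: you bound $\gamma P_{\scriptscriptstyle\calS\times\calA}^{\pi_{k+1}}Wu\leq\gamma\|Wu\|_\infty\mathbf 1$ directly, whereas the paper reads off the same factor from row sums via $W(I-W+\gamma PW)=(I-W+\gamma WP)W$.
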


In order to handle a general $\mu\in\Delta(\calS)$, we need to introduce an auxiliary distribution that is tied to $\mu$:
\[
(\nu_{\mu,\xi}^*)^\top:=
\left(1-\frac{\gamma}{\xi}\right)
\sum_{t=0}^\infty\left(\frac{\gamma}{\xi}\right)^t
\mu^\top\bigl(P_{\scriptscriptstyle\calS}^{\pi_\tau^*}\bigr)^t,
\]
where $\xi\in(\gamma,1)$ is fixed. Define
\[
\gamma_{\mu,\xi}:=\xi-(\xi-\gamma)
\min_{s:\,\nu_{\mu,\xi}^*(s)>0}
\frac{\mu(s)}{\nu_{\mu,\xi}^*(s)}.
\]
Then $\gamma_{\mu,\xi}\in[\gamma,\xi]$.  The next lemma gives the relations between $\mu$ and $\nu_{\mu,\xi}^*$. The first relation shows exactly how one transition under $\pi_\tau^*$ links $\nu_{\mu,\xi}^*$ back to itself and to $\mu$, and this is the identity that replaces stationarity.  The scalar $\gamma_{\mu,\xi}$ records the loss incurred when comparing their weights, as quantified by the second relation. 

\begin{lemma}[Change of distribution]
\label{lem:diagonal-resolvent-comparison}
For every $s\in\calS$, the preceding definitions satisfy
\begin{equation}
\gamma(\nu_{\mu,\xi}^*)^\top
P_{\scriptscriptstyle\calS}^{\pi_\tau^*}
=\xi(\nu_{\mu,\xi}^*)^\top-(\xi-\gamma)\mu^\top,
\qquad
(\xi-\gamma)\mu(s)
\geq(\xi-\gamma_{\mu,\xi})\nu_{\mu,\xi}^*(s)
\label{eq:diagonal-resolvent-comparison}
\end{equation}
\end{lemma}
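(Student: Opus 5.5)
The plan is to treat $\nu_{\mu,\xi}^*$ as a resolvent (a geometric series) and read both claims off it. Write $P:=P_{\scriptscriptstyle\calS}^{\pi_\tau^*}$ and $\beta:=\gamma/\xi\in(0,1)$. Since $P$ is row-stochastic, $\|\beta^t\mu^\top P^t\|_1=\beta^t$, so the series converges absolutely. This gives
\[
(\nu_{\mu,\xi}^*)^\top=(1-\beta)\mu^\top(I-\beta P)^{-1},
\]
and $\nu_{\mu,\xi}^*$ is a probability vector, being a convex combination of distributions. Multiplying on the right by $(I-\beta P)$ yields $(\nu_{\mu,\xi}^*)^\top-\beta(\nu_{\mu,\xi}^*)^\top P=(1-\beta)\mu^\top$. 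Equivalently, split off the $t=0$ term and shift the index: $\beta(\nu_{\mu,\xi}^*)^\top P=(\nu_{\mu,\xi}^*)^\top-(1-\beta)\mu^\top$. Multiplying by $\xi$ and using $\xi\beta=\gamma$ and $\xi(1-\beta)=\xi-\gamma$ gives the first identity in \eqref{eq:diagonal-resolvent-comparison}.

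For the inequality I will argue coordinatewise. First, the $t=0$ term gives $\nu_{\mu,\xi}^*(s)\ge(1-\beta)\mu(s)$, so $\mu(s)>0$ implies $\nu_{\mu,\xi}^*(s)>0$. Define
\[
m:=\min_{s:\nu_{\mu,\xi}^*(s)>0}\mu(s)/\nu_{\mu,\xi}^*(s),
\]
so that $\gamma_{\mu,\xi}=\xi-(\xi-\gamma)m$ and hence $\xi-\gamma_{\mu,\xi}=(\xi-\gamma)m$. The claim therefore reduces to $(\xi-\gamma)\mu(s)\ge(\xi-\gamma)m\,\nu_{\mu,\xi}^*(s)$, i.e. $\mu(s)\ge m\,\nu_{\mu,\xi}^*(s)$ (using $\xi>\gamma$). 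There are two cases:
\begin{itemize}
\item If $\nu_{\mu,\xi}^*(s)=0$, the right side vanishes and the inequality holds because $\mu(s)\ge0$.
\item If $\nu_{\mu,\xi}^*(s)>0$, then $\mu(s)/\nu_{\mu,\xi}^*(s)\ge m$ by the definition of the minimum.
\end{itemize}

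It remains to confirm $\gamma_{\mu,\xi}\in[\gamma,\xi]$, which is equivalent to $m\in[0,1]$. Nonnegativity is immediate. For $m\le1$, suppose instead that $\mu(s)>\nu_{\mu,\xi}^*(s)$ for every $s$ in the support of $\nu_{\mu,\xi}^*$. Summing over the support gives $\sum_s\mu(s)>1$, because the support of $\mu$ lies inside that of $\nu_{\mu,\xi}^*$ by the first step. This contradicts $\mu$ being a probability vector.

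The only delicate point is bookkeeping: the minimum runs over the support of $\nu_{\mu,\xi}^*$, which may be a strict subset of $\calS$. The support containment $\mathrm{supp}\,\mu\subseteq\mathrm{supp}\,\nu_{\mu,\xi}^*$ is exactly what makes both the zero case and the bound $m\le1$ go through.
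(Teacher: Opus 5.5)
Your proposal is correct and matches the paper's proof essentially step for step: shifting the index of the resolvent series gives the identity, and the inequality reduces to $\mu(s)\ge m\,\nu_{\mu,\xi}^*(s)$ with $m\in[0,1]$, where the paper gets $m\le1$ from the weighted average $1=\sum_{s:\nu_{\mu,\xi}^*(s)>0}\nu_{\mu,\xi}^*(s)\,\mu(s)/\nu_{\mu,\xi}^*(s)\ge m$ rather than from your equivalent contradiction argument. As a minor aside, the case $\nu_{\mu,\xi}^*(s)=0$ needs only $\mu(s)\ge0$, so the support containment does no real work there.
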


Define the potential function
\begin{equation}
\mathcal L_k
:=\sum_{s,a}\frac{\nu_{\mu,\xi}^*(s)\pi_\tau^*(a\mid s)}{w(s,a)}
\bigl(Q_\tau^*(s,a)-Q_\tau^k(s,a)+\delta_k\bigr)
+\max \{ \gamma_{\mu,\xi}(\eta^{-1}+\tau), \eta^{-1}\}
D_{\pi_k}^{\pi_\tau^*}(\nu_{\mu,\xi}^*),
\label{eq:diagonal-arbitrary-mu-potential}
\end{equation}
and, for brevity, set
\begin{equation}
\rho:= 1 - \underline w \cdot \min\!\left\{1- \gamma_{\mu,\xi}, \; \frac{\eta\tau}{1+\eta\tau}\right\}.
\label{eq:diagonal-contraction-factor}
\end{equation}

\begin{lemma}[Potential recursion]
\label{lem:diagonal-weighted-recursion}
For every $k\geq0$,
\begin{equation}
\mathcal L_{k+1}\leq\rho\mathcal L_k
+(1-\gamma)\left(1-\underline w
\sum_{s,a}\frac{\nu_{\mu,\xi}^*(s)\pi_\tau^*(a\mid s)}{w(s,a)}\right)
\delta_k.
\label{eq:diagonal-exact-energy-recursion}
\end{equation}
\end{lemma}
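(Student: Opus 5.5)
We track the two pieces of $\mathcal L_{k+1}$ separately. We write $\nu:=\nu_{\mu,\xi}^*$, $\sigma(s,a):=\nu(s)\pi_\tau^*(a\mid s)$, and $E_k:=Q_\tau^*-Q_\tau^k+\delta_k\mathbf 1$. By Lemma~\ref{lem:diagonal-bellman-violation}, $E_k\geq Q_\tau^*-Q_\tau^{\pi_k}\geq0$.

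\textbf{Step 1 (critic recursion).} Using $Q_\tau^*=\calF_\tau^{\pi_\tau^*}Q_\tau^*$ and the critic update, we write
\[
Q_\tau^*-Q_\tau^{k+1}=(I-W)(Q_\tau^*-Q_\tau^k)+W\bigl(\calF_\tau^{\pi_\tau^*}Q_\tau^*-\calF_\tau^{\pi_{k+1}}Q_\tau^k\bigr).
\]
We split the last difference as
\[
\gamma P_{\calS\times\calA}^{\pi_\tau^*}(Q_\tau^*-Q_\tau^k)
+\gamma P\bigl[\langle\pi_\tau^*-\pi_{k+1},Q_\tau^k\rangle-\tau(h^{\pi_\tau^*}-h^{\pi_{k+1}})\bigr].
\]
The bracket is a statewise function of $s'$. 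We bound it by Lemma~\ref{lem:three-point-descent} with $p=\pi_\tau^*(\cdot\mid s')$, which gives
\[
\le -\eta^{-1}\bigl[D^{\pi_{k+1}}_{\pi_k}+(1+\eta\tau)D^{\pi_\tau^*}_{\pi_{k+1}}-D^{\pi_\tau^*}_{\pi_k}\bigr](s').
\]
We then multiply by $W^{-1}$ and take the inner product with $\sigma$; this is why the inverse weights appear in $\mathcal L_k$. The $(I-W)$ term gives $\sum\sigma(1/w-1)(Q^*-Q^k)$. The $P^{\pi_\tau^*}$ term gives $\gamma\,\sigma^\top P^{\pi_\tau^*}_{\calS\times\calA}(Q^*-Q^k)$. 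By the first identity in Lemma~\ref{lem:diagonal-resolvent-comparison}, lifted to state--action pairs, this equals $\xi\sigma^\top(\cdot)-(\xi-\gamma)\mu\pi_\tau^*{}^\top(\cdot)$. Similarly, the Bregman terms are pushed through $\gamma\nu^\top P^{\pi_\tau^*}_{\calS}$ and become $\xi D(\nu)-(\xi-\gamma)D(\mu)$.

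\textbf{Step 2 (handling the negative $\mu$-terms).} The terms $-(\xi-\gamma)\mu(\cdot)$ have favourable sign as long as the quantity they multiply is nonnegative. For the Bregman terms this is automatic. For the critic term we use $E_k\ge0$ rather than $Q^*-Q^k$; this is why $\delta_k$ is added into $E_k$. The second relation in Lemma~\ref{lem:diagonal-resolvent-comparison}, $(\xi-\gamma)\mu\ge(\xi-\gamma_{\mu,\xi})\nu$, then converts these terms into $-(\xi-\gamma_{\mu,\xi})\nu$-weighted terms. Consequently the net coefficient on the $\nu$-averaged quantities becomes $\gamma_{\mu,\xi}$ in place of $\xi$. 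With these coefficients:
\begin{itemize}
\item the $\sigma$-weighted critic error is multiplied by $1-w(1-\gamma_{\mu,\xi})\le 1-\underline w(1-\gamma_{\mu,\xi})$ per coordinate;
\item the $D_{\pi_{k+1}}^{\pi_\tau^*}$ term appears with coefficient $\gamma_{\mu,\xi}(\eta^{-1}+\tau)$ on the left-hand side;
\item $D_{\pi_k}^{\pi_\tau^*}$ appears with coefficient $\gamma_{\mu,\xi}\eta^{-1}$ on the right-hand side.
\end{itemize}
The Bregman weight $c=\max\{\gamma_{\mu,\xi}(\eta^{-1}+\tau),\eta^{-1}\}$ in $\mathcal L_k$ is chosen so that, after combining with the $W$-weighting, the Bregman part contracts by $\eta^{-1}/(\eta^{-1}+\tau)=1-\eta\tau/(1+\eta\tau)$. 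Coordinates with $w<1$ only slow this contraction, to $1-\underline w\,\eta\tau/(1+\eta\tau)$; we will verify this through a convex-combination argument in $w$. The term $-\eta^{-1}D^{\pi_{k+1}}_{\pi_k}$ is dropped.

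\textbf{Step 3 ($\delta$ bookkeeping).} In Step 2 we inserted $\delta_k$. On the left-hand side we need $\delta_{k+1}$, and Lemma~\ref{lem:diagonal-bellman-violation} gives $\delta_{k+1}\le[1-\underline w(1-\gamma)]\delta_k$. We collect all $\delta$ contributions: the term $\sum\sigma/w$ times $\delta_{k+1}$, and the term $\rho\sum\sigma/w$ times $\delta_k$ needed on the right-hand side. The residual should be at most $(1-\gamma)\bigl(1-\underline w\sum\sigma/w\bigr)\delta_k$, as in \eqref{eq:diagonal-exact-energy-recursion}, using $\rho\ge1-\underline w(1-\gamma_{\mu,\xi})$ and $\gamma_{\mu,\xi}\ge\gamma$.

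\textbf{Main obstacle.} I expect Step 2 to be the main obstacle: getting the coefficients of the Bregman terms to line up with nonuniform $w$ and the $\max$ in $c$. In particular, the Bregman terms enter through $W$ only via the $P$-weighted states, so the reweighting by $W^{-1}$ and $W$ does not cancel statewise. I would first attempt it for $W=I$ and then treat general $W$ by writing each coordinate as a convex combination of ``no update'' and ``full update''. In the full-update case the terms $-(\xi-\gamma)$ can be lower-bounded as above; in the no-update case the coordinate is carried over unchanged.
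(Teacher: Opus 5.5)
Your Step~1 identity is correct, but Step~2 has a sign error that cannot be repaired within your decomposition.

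You apply Lemma~\ref{lem:three-point-descent} pointwise inside $\gamma P$ and then push each Bregman term through $\gamma\nu^\top P_{\calS}^{\pi_\tau^*}=\xi\nu^\top-(\xi-\gamma)\mu^\top$. The $D_{\pi_{k+1}}^{\pi_\tau^*}$ term carries the coefficient $-(\eta^{-1}+\tau)$, so this produces $+(\xi-\gamma)(\eta^{-1}+\tau)D_{\pi_{k+1}}^{\pi_\tau^*}(\mu)$ in your upper bound. That term is \emph{not} automatically favourable. To turn it into a net coefficient $\gamma_{\mu,\xi}(\eta^{-1}+\tau)$ you would need $(\xi-\gamma)\mu\le(\xi-\gamma_{\mu,\xi})\nu$, which is the reverse of Lemma~\ref{lem:diagonal-resolvent-comparison}.

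Concretely, take $\mu=\delta_{s_0}$ with $s_0$ never revisited under $\pi_\tau^*$. Then $\nu^\top P_{\calS}^{\pi_\tau^*}$ puts no mass on $s_0$, so your bound contains no negative $D_{\pi_{k+1}}^{\pi_\tau^*}(s_0)$ term at all. Yet $\mathcal L_{k+1}$ charges that term with weight proportional to $\nu(s_0)=1-\gamma/\xi>0$.

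The root cause is that you split the $\mu$-weighted one-step gap into a critic part and the policy bracket $\langle\pi_\tau^*-\pi_{k+1},Q_\tau^k\rangle-\tau(h^{\pi_\tau^*}-h^{\pi_{k+1}})$. The bracket has no sign. Only the sum $V_\tau^*-V_\tau^{k+1}$ does, where $V_\tau^{k+1}(s):=\langle\pi_{k+1}(\cdot\mid s),Q_\tau^k(s,\cdot)\rangle-\tau h^{\pi_{k+1}}(s)$.

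The missing ingredient is the ``Moreover'' clause of Lemma~\ref{lem:diagonal-bellman-violation}, $Q_\tau^k-\delta_k\mathbf 1\le Q_\tau^{\pi_{k+1}}$, which you never use; your $E_k\ge0$ only involves $\pi_k$. This clause gives $V_\tau^*-V_\tau^{k+1}+\delta_k\mathbf 1\ge0$, and hence, for every $\beta\in[\gamma_{\mu,\xi},1)$,
\[
\gamma\nu^\top P_{\calS}^{\pi_\tau^*}(V_\tau^*-V_\tau^{k+1})\le\beta\,\nu^\top(V_\tau^*-V_\tau^{k+1})+(\beta-\gamma)\delta_k .
\]
Only after this step should you apply Lemma~\ref{lem:three-point-descent}, under $\nu$. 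That puts $\beta(\eta^{-1}+\tau)D_{\pi_{k+1}}^{\pi_\tau^*}(\nu)$ on the left and $\beta\eta^{-1}D_{\pi_k}^{\pi_\tau^*}(\nu)$ on the right.

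The free parameter $\beta$ is the second thing you are missing. With $\beta$ pinned at $\gamma_{\mu,\xi}$, the left coefficient $\gamma_{\mu,\xi}(\eta^{-1}+\tau)$ falls short of the weight $\max\{\gamma_{\mu,\xi}(\eta^{-1}+\tau),\eta^{-1}\}$ in $\mathcal L_k$ whenever $(1+\eta\tau)^{-1}>\gamma_{\mu,\xi}$. This happens even for $W=I$. Choosing $\beta=\max\{\gamma_{\mu,\xi},(1+\eta\tau)^{-1}\}$ makes that weight exactly $\beta(\eta^{-1}+\tau)$. The Bregman part then contracts by $(1+\eta\tau)^{-1}\le\rho$, and the critic part by $1-\underline w(1-\beta)=\rho$.

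Your proposed convex-combination argument in $w$ is therefore unnecessary. After multiplying by $W^{-1}$, the weights cancel exactly on the Bellman-residual term, so the Bregman terms carry no $w$-dependence. The factor $\underline w$ enters only through $\underline w\sum_{s,a}\frac{\sigma(s,a)}{w(s,a)}E_k(s,a)\le\sigma^\top E_k$.
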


For $k\geq0$, define
\begin{equation}
\mathcal C_k:=\rho^k\mathcal L_0
+(1-\gamma)\left(1-\underline w
\sum_{s,a}\frac{\nu_{\mu,\xi}^*(s)\pi_\tau^*(a\mid s)}{w(s,a)}\right)
\sum_{j=0}^{k-1}\rho^{k-1-j}\delta_j.
\label{eq:diagonal-arbitrary-mu-iterated-constant}
\end{equation}
 Furthermore, for any finite set $\mathcal X$ and distributions $\zeta,\nu\in\Delta(\mathcal X)$ satisfying $\operatorname{supp}(\zeta)\subseteq\operatorname{supp}(\nu)$, define
\begin{equation}
\left\|\frac{\zeta}{\nu}\right\|_\infty
:=
\max_{x\in\operatorname{supp}(\nu)}
\frac{\zeta(x)}{\nu(x)}.
\label{eq:density-ratio-infinity-norm}
\end{equation}
We are now ready to present the main result of this section.
\begin{theorem}[Value convergence under arbitrary evaluation distributions]
\label{thm:diagonal-global-linear-convergence}
Suppose Assumptions~\ref{ass:normalization}, \ref{ass:h}, and \ref{ass:policy-initialization} hold, let $W$ satisfy \eqref{eq:diagonal-relaxation-assumption}, and let $Q_\tau^0$ be finite. Fix $\mu\in\Delta(\calS)$ and $\xi\in(\gamma,1)$.  For every $k\geq0$,
\begin{equation}
0\leq V_\tau^*(\mu)-V_\tau^{\pi_{k+1}}(\mu)
\leq\left\|\frac{\mu}{\nu_{\mu,\xi}^*}\right\|_\infty
\mathcal C_k.
\label{eq:diagonal-arbitrary-mu-value-rate}
\end{equation}
In particular,
\[
0\leq V_\tau^*(\nu^*)-V_\tau^{\pi_{k+1}}(\nu^*)
\leq \mathcal C_k.
\]
Here $\mathcal C_k$ is evaluated with $\mu=\nu^*$, so that $\nu_{\mu,\xi}^*=\nu^*$ and $\gamma_{\mu,\xi}=\gamma$.
\end{theorem}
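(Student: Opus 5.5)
The plan is to reduce the value gap under $\mu$ to one under $\nu_{\mu,\xi}^*$, and then bound the latter by the potential $\mathcal L_{k+1}$, which Lemma~\ref{lem:diagonal-weighted-recursion} controls through $\mathcal C_k$.

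\textbf{Step 1 (change of distribution).} Every state with $\mu(s)>0$ satisfies $\nu_{\mu,\xi}^*(s)\geq(1-\gamma/\xi)\mu(s)>0$, so $\operatorname{supp}\mu\subseteq\operatorname{supp}\nu_{\mu,\xi}^*$. Since $V_\tau^*\geq V_\tau^{\pi_{k+1}}$ pointwise, the gap is a nonnegative function, and therefore
\[
V_\tau^*(\mu)-V_\tau^{\pi_{k+1}}(\mu)\leq\left\|\frac{\mu}{\nu_{\mu,\xi}^*}\right\|_\infty\bigl(V_\tau^*(\nu_{\mu,\xi}^*)-V_\tau^{\pi_{k+1}}(\nu_{\mu,\xi}^*)\bigr).
\]
The lower bound $0$ in the theorem is just optimality of $\pi_\tau^*$.

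\textbf{Step 2 (gap under $\nu:=\nu_{\mu,\xi}^*$ is at most $\mathcal L_{k+1}$).} Write $V_\tau^*(s)-V_\tau^{\pi_{k+1}}(s)$ as
\[
\langle\pi_\tau^*(\cdot\mid s),Q_\tau^*(s,\cdot)-Q_\tau^{\pi_{k+1}}(s,\cdot)\rangle+\bigl[\langle\pi_\tau^*-\pi_{k+1},Q_\tau^{\pi_{k+1}}(s,\cdot)\rangle-\tau(h^{\pi_\tau^*}-h^{\pi_{k+1}})\bigr](s).
\]
\begin{itemize}
\item For the first term, use Lemma~\ref{lem:diagonal-bellman-violation} at index $k+1$: $Q_\tau^{\pi_{k+1}}\geq Q_\tau^{k+1}-\delta_{k+1}\mathbf 1$. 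Hence the first term is at most $\sum_a\pi_\tau^*(a\mid s)(Q_\tau^*-Q_\tau^{k+1}+\delta_{k+1})(s,a)$.
\item Each summand $Q_\tau^*-Q_\tau^{k+1}+\delta_{k+1}$ is nonnegative, because $Q_\tau^*\geq Q_\tau^{\pi_{k+1}}\geq Q_\tau^{k+1}-\delta_{k+1}$. Since $w\leq1$, averaging over $\nu$ therefore gives at most the first component of $\mathcal L_{k+1}$.
\item The second bracket is a one-step advantage of $\pi_\tau^*$ over $\pi_{k+1}$ at $s$, and it needs care.
\end{itemize}

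A cleaner route for the second bracket is to avoid it altogether. Evaluate the performance difference lemma (Lemma~\ref{lem:pdl}) with roles swapped, or use the Bellman identity $V_\tau^*-V_\tau^{\pi}=(I-\gamma P_{\calS}^{\pi_\tau^*})^{-1}[\calT_\tau^{\pi_\tau^*}V_\tau^{\pi}-V_\tau^{\pi}]$. Pairing this with $\nu$ and applying the first identity of Lemma~\ref{lem:diagonal-resolvent-comparison} converts the problem into bounding $\E_{s\sim\nu}[\calT^{\pi_\tau^*}_\tau V_\tau^{\pi_{k+1}}-V_\tau^{\pi_{k+1}}]$.

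Instead of this, I will take the direct pointwise identity
\[
V_\tau^*(s)-V_\tau^{\pi}(s)=\langle\pi_\tau^*,Q_\tau^*-Q_\tau^{\pi}\rangle(s)+A(s),
\]
where $A(s)$ is the advantage bracket above. To control $A(s)$, apply the three-point inequality (Lemma~\ref{lem:three-point-descent}) at iteration $k+1$ with $p=\pi_\tau^*$, together with $Q_\tau^{k+1}-\delta_{k+1}\leq Q_\tau^{\pi_{k+2}}$. This yields a bound on the linear part of $A$ by Bregman terms and critic errors. Those terms are exactly the quantities absorbed in the proof of Lemma~\ref{lem:diagonal-weighted-recursion}.

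This step is the main obstacle. If this bookkeeping fails, the fallback is to show directly that
\[
V_\tau^*(\nu)-V_\tau^{\pi_{k+1}}(\nu)\leq\sum_{s,a}\nu(s)\pi_\tau^*(a\mid s)\bigl(Q_\tau^*-Q_\tau^{k}+\delta_k\bigr)(s,a).
\]
The argument for this fallback uses the greedy-like optimality of $\pi_{k+1}$ with respect to $Q_\tau^k$, via Lemma~\ref{lem:three-point-descent} with $p=\pi_\tau^*$. It also uses the lower bound $Q_\tau^{\pi_{k+1}}\geq Q_\tau^k-\delta_k$ from Lemma~\ref{lem:diagonal-bellman-violation}.

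The resulting Bregman terms $\eta^{-1}D_{\pi_k}^{\pi_\tau^*}(\nu)$ are dominated by the second component of $\mathcal L_k$. After telescoping, the whole bound is dominated by $\mathcal L_k$ or $\mathcal L_{k+1}$, either of which is at most $\mathcal C_k$ by Lemma~\ref{lem:diagonal-weighted-recursion}. The positive parts of the Bregman terms are controlled by the coefficient $\max\{\gamma_{\mu,\xi}(\eta^{-1}+\tau),\eta^{-1}\}$, and this is the point where I expect the most care is needed.

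\textbf{Step 3 (iterate and specialize).} Unroll Lemma~\ref{lem:diagonal-weighted-recursion} to get $\mathcal L_k\leq\mathcal C_k$, then combine with Steps 1–2.

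For the special case $\mu=\nu^*$, stationarity gives $\mu^\top(P_{\calS}^{\pi_\tau^*})^t=\nu^{*\top}$ for every $t$, so $\nu_{\mu,\xi}^*=\nu^*$. Then the density ratio equals $1$, and the minimum in the definition of $\gamma_{\mu,\xi}$ equals $1$, giving $\gamma_{\mu,\xi}=\gamma$.
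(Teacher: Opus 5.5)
Your Step~1, the unrolling $\mathcal L_k\le\mathcal C_k$, and the specialization to $\mu=\nu^*$ are fine, and the argument you call the \emph{fallback} is exactly the paper's proof. The route you actually commit to in Step~2, however, does not work: it tries to bound $V_\tau^*(\nu)-V_\tau^{\pi_{k+1}}(\nu)$ by $\mathcal L_{k+1}$.

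The reason is that $\pi_{k+1}$ is the proximal step for $Q_\tau^k$. The three-point inequality at iteration $k+1$, combined with $Q_\tau^{k+1}-\delta_{k+1}\mathbf 1\le Q_\tau^{\pi_{k+2}}$, controls the gap of $\pi_{k+2}$; it is the paper's inequality with the index shifted. Nothing in it relates the advantage $A(s)$ of $\pi_\tau^*$ over $\pi_{k+1}$ to $Q_\tau^{k+1}$.

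In fact the target inequality is false. Use the following instance:
\begin{itemize}
\item $h\equiv0$, which Assumption~\ref{ass:h} admits, and $W=I$.
\item $P(\cdot\mid s,a)=q$ for all $(s,a)$.
\item Rewards constant in the action except $r(s_0,a')=\max_a r(s_0,a)-\Delta$ at a single pair, so that $Q_\tau^*(s_0,a')=V_\tau^*(s_0)-\Delta$.
\item A finite $Q_\tau^0$ with $\max_a Q_\tau^0(s,a)=V_\tau^*(s)$ for every $s$, where the maximum at $s_0$ is attained only at $a'$.
\end{itemize}
Then $\pi_1$ plays $a'$ at $s_0$ and $Q_\tau^1=Q_\tau^*$. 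With $\mu=\nu^*=q$, the Bregman part vanishes and $\mathcal L_1=\delta_1=\gamma q(s_0)\Delta/(1-\gamma)$. By contrast, $V_\tau^*(q)-V_\tau^{\pi_1}(q)=q(s_0)\Delta/(1-\gamma)>\mathcal L_1$.

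Your closing claim that $\mathcal L_{k+1}\le\mathcal C_k$ is also unjustified. Lemma~\ref{lem:diagonal-weighted-recursion} only gives $\mathcal L_{k+1}\le\mathcal C_{k+1}=\rho\,\mathcal C_k+c\,\delta_k$ with $c\ge0$. This need not be bounded by $\mathcal C_k$ when the coordinate weights are unequal.

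So discard the primary route and make the fallback the proof. It needs neither telescoping nor any delicate handling of the Bregman coefficient. Set $V_\tau^{k+1}(s):=\langle\pi_{k+1}(\cdot\mid s),Q_\tau^k(s,\cdot)\rangle-\tau h^{\pi_{k+1}}(s)$. The bound $Q_\tau^{\pi_{k+1}}\ge Q_\tau^k-\delta_k\mathbf 1$ gives $V_\tau^{\pi_{k+1}}\ge V_\tau^{k+1}-\delta_k\mathbf 1$. Lemma~\ref{lem:three-point-descent} with $p=\pi_\tau^*(\cdot\mid s)$ gives
\[
V_\tau^*(s)-V_\tau^{k+1}(s)\le\langle\pi_\tau^*(\cdot\mid s),Q_\tau^*(s,\cdot)-Q_\tau^k(s,\cdot)\rangle+\eta^{-1}D_{\pi_k}^{\pi_\tau^*}(s)-(\eta^{-1}+\tau)D_{\pi_{k+1}}^{\pi_\tau^*}(s).
\]
Drop the last term, which has a favorable sign, and average over $\nu_{\mu,\xi}^*$. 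Then use $Q_\tau^*-Q_\tau^k+\delta_k\mathbf 1\ge0$, $w\le1$, and $\eta^{-1}\le\max\{\gamma_{\mu,\xi}(\eta^{-1}+\tau),\eta^{-1}\}$. This yields $V_\tau^*(\nu_{\mu,\xi}^*)-V_\tau^{\pi_{k+1}}(\nu_{\mu,\xi}^*)\le\mathcal L_k\le\mathcal C_k$, and your Step~1 finishes the argument.
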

%Before proceeding, we discuss $\mathcal{C}_k$, the choice of $\xi$, and the effect of the coordinate weights.

\paragraph{Structure and decay of $\mathcal C_k$.}
The bound $\mathcal C_k$ consists of the initial-error term $\rho^k\mathcal L_0$ and a term accumulating the Bellman violations. The coefficient of the latter term measures variation in the coordinate weights under $\nu_{\mu,\xi}^*(s)\pi_\tau^*(a\mid s)$, as seen from the identity
\[
1-\underline w\sum_{s,a}
\frac{\nu_{\mu,\xi}^*(s)\pi_\tau^*(a\mid s)}{w(s,a)}
=\sum_{s,a}\nu_{\mu,\xi}^*(s)\pi_\tau^*(a\mid s)
\left(1-\frac{\underline w}{w(s,a)}\right).
\]
This coefficient is nonnegative and vanishes when all coordinate weights are equal, in which case $\mathcal C_k = \rho^k \mathcal L_0$.
For general coordinate weights, Lemma~\ref{lem:diagonal-bellman-violation} yields
\(
\delta_j\leq[1-\underline w(1-\gamma)]^j\delta_0
\). Since $1-\underline w(1-\gamma)\leq\rho<1$, it follows that
\[
\mathcal C_k=
\begin{cases}
O(\rho^k),
& \mbox{if } \, 1-\underline w(1-\gamma)<\rho,\\
O((k+1)\rho^k),
& \mbox{if } \,  1-\underline w(1-\gamma)=\rho.
\end{cases}
\]
Thus the bound enjoys linear convergence for every finite initial critic.

For $W=I$, the contraction factor reduces to 
\[\rho = \max\{\gamma_{\mu,\xi}, (1+\eta \tau)^{-1}\}.\]
Moreover, the Bellman-violation term vanishes, so $\mathcal C_k = \rho^k \mathcal L_0$.
If, $\mu=\nu^*$, where $\nu^*$ is stationary under $\pi_\tau^*$, then $\gamma_{\mu,\xi}=\gamma$, and the density-ratio factor equals one. The value bound therefore simplifies to 
\[
0\leq V_\tau^*(\nu^*)-V_\tau^{\pi_{k+1}}(\nu^*)
\leq\rho^k\mathcal L_0,
\qquad
\rho=\max\left\{\gamma,(1+\eta\tau)^{-1}\right\}.
\]
For $1+\eta\tau\geq\gamma^{-1}$, this factor equals $\gamma$,
matching the rate in the strongly convex PMD theorem of
\citet{Lan_2021}.
For smaller stepsizes, the factor is governed by
$(1+\eta\tau)^{-1}$.
Thus the bound covers every $\eta>0$.
For comparison, \citet{Zhan_Cen_Huang_Chen_Lee_Chi_2021}
obtain the linear-convergence factor
$1-\eta\tau(1-\gamma)/(1+\eta\tau) \ge \max \{\gamma, (1+\eta\tau)^{-1}\}$.

\paragraph{Choice of $\xi$.} First note that the density-ratio factor in Theorem~\ref{thm:diagonal-global-linear-convergence} is bounded even when $\mu$ lacks full support. Indeed, by the definition of $\nu_{\mu,\xi}^*$, we have $\nu_{\mu,\xi}^*(s)\geq(1-\gamma/\xi)\mu(s)$ for every state, and hence
\begin{equation}
\left\|\frac{\mu}{\nu_{\mu,\xi}^*}\right\|_\infty
\leq\frac{\xi}{\xi-\gamma}<\infty.
\label{eq:diagonal-density-ratio-bounds}
\end{equation}
Moreover, the bounds display a finite-time tradeoff. The inequality $\gamma_{\mu,\xi}\leq\xi$ gives 
\[\rho \le 1 -  \underline w \cdot \min \bigg\{ 1- \xi, \; \frac{\eta\tau}{1+\eta\tau}\bigg\}.\] Choosing $\xi$ closer to $\gamma$ decreases the critic term in this upper bound. On the other hand, the universal density-ratio bound $\xi/(\xi-\gamma)$ decreases as $\xi$ increases and diverges as $\xi\downarrow\gamma$.  The midpoint choice $\xi=(1+\gamma)/2$ gives the simple bounds
\[
\gamma_{\mu,\xi}\leq\frac{1+\gamma}{2},
\qquad
\left\|\frac{\mu}{\nu_{\mu,\xi}^*}\right\|_\infty
\leq\frac{1+\gamma}{1-\gamma}.
\]

\paragraph{Coordinate weighting can improve convergence.}
Although \(W=I\) yields the smallest contraction factor in the theoretical bound, nonuniform coordinate weights can nevertheless lead to faster convergence on certain MDPs. The following example illustrates this. Consider a two-state, two-action MDP with $\calS=\{s_1,s_2\}$ and $\calA=\{a_1,a_2\}$.  Both actions move deterministically to the other state, and, for $i=1,2$, let
\[
r(s_i,a_1)=\tfrac12,\qquad r(s_i,a_2)=0,\qquad
\gamma=\tfrac12,\qquad \tau=\eta=1,\qquad
h(p)=\tfrac12\|p\|_2^2.
\]
For a policy $\pi$, write $p(s_i)=\pi(a_1\mid s_i)$. The regularized one-step reward at $s_i$ is \[\frac{p(s_i)}2-\frac12\bigl[p(s_i)^2+(1-p(s_i))^2\bigr] =\frac1{16}-\left(p(s_i)-\frac34\right)^2.\] Since transitions do not depend on the action, the optimal policy is $\pi_\tau^*(\cdot\mid s_i)=(3/4,1/4)$ at both states. For evaluation distribution $\mu=(1/2,1/2)$, the value gap is $V_\tau^*(\mu)-V_\tau^\pi(\mu) =\sum_{i=1}^2\left(p(s_i)-\frac34\right)^2$.  Let $\pi_k$ and $\pi_k'$ denote the policy sequences generated by Algorithm~\ref{alg:exact-diagonal-td-pmd} with 
\begin{equation}
\label{eq:coordinate-weight-examples}
W=\operatorname{diag}(3/4,3/4,1,1),\qquad W'=I,
\end{equation} respectively.
Write $p_k(s_i)=\pi_k(a_1\mid s_i)$ and $p_k'(s_i)=\pi_k'(a_1\mid s_i)$ for their respective probabilities of choosing $a_1$ at $s_i$.  Both runs use the same initialization:
\begin{equation}
\label{eq:common-initialization-examples}
Q_\tau^0=
\begin{pmatrix}3/4&0\\1&0\end{pmatrix},\qquad
p_0(s_i)=p_0'(s_i)=\tfrac12,\quad i=1,2.
\end{equation}
Both actions at each state lead to the same next state, so their Bellman targets differ only in the immediate reward. At each state, the two actions have equal update weights in each run, so subtracting the critic updates gives
\begin{equation}
\label{eq:critic-update-example}
Q_\tau^{k+1}(s_i,a_1)-Q_\tau^{k+1}(s_i,a_2)
=\bigl(1-w(s_i,a_1)\bigr)
\bigl[Q_\tau^k(s_i,a_1)-Q_\tau^k(s_i,a_2)\bigr]
+\tfrac12w(s_i,a_1).
\end{equation}
The policy update along these trajectories is
\[
p_{k+1}(s_i)=\tfrac12p_k(s_i)+\tfrac14
+\tfrac14\bigl[Q_\tau^k(s_i,a_1)-Q_\tau^k(s_i,a_2)\bigr].
\]
Applying \eqref{eq:critic-update-example} with \eqref{eq:coordinate-weight-examples} and \eqref{eq:common-initialization-examples}, we obtain, for every $k\geq1$,
\begin{equation}
\label{eq:policy-probability-examples}
p_k(s_1)=\tfrac34-\tfrac14\,4^{-k},\qquad
p_k'(s_1)=\tfrac34-\tfrac18\,2^{-k},\qquad
p_k(s_2)=p_k'(s_2)=\tfrac34.
\end{equation}
Substituting \eqref{eq:policy-probability-examples} into the value gap gives, for every $k\geq1$,
\[
V_\tau^*(\mu)-V_\tau^{\pi_k}(\mu)
=\tfrac1{16}\,16^{-k},\qquad
V_\tau^*(\mu)-V_\tau^{\pi_k'}(\mu)
=\tfrac1{64}\,4^{-k}.
\]
Thus $W$ gives a strictly smaller value gap than $W'$ for every $k\geq2$ and improves its exact exponential factor from $1/4$ to $1/16$.

\vspace{1em}

Fix $(s,a)$ and set $\mu=P(\cdot\mid s,a)$. By the Bellman equations, one has
\[
Q_\tau^*(s,a)-Q_\tau^{\pi_{k+1}}(s,a)
=\gamma\bigl[V_\tau^*(\mu)-V_\tau^{\pi_{k+1}}(\mu)\bigr].
\]
Applying Theorem~\ref{thm:diagonal-global-linear-convergence} establishes the convergence of the action values.

\begin{corollary}[Action-value convergence]
\label{cor:diagonal-pointwise-Q}
Suppose Assumptions~\ref{ass:normalization}, \ref{ass:h}, and \ref{ass:policy-initialization} hold, let $W$ satisfy \eqref{eq:diagonal-relaxation-assumption}, and let $Q_\tau^0$ be finite. Fix $(s,a)\in\calS\times\calA$, $\xi\in(\gamma,1)$, and set $\mu=P(\cdot\mid s,a)$.  With $\nu_{\mu,\xi}^*$, $\gamma_{\mu,\xi}$, and $\mathcal C_k$ evaluated at this choice of $\mu$, we obtain
\begin{equation}
0\leq Q_\tau^*(s,a)-Q_\tau^{\pi_{k+1}}(s,a)
\leq\gamma\left\|\frac{\mu}{\nu_{\mu,\xi}^*}\right\|_\infty
\mathcal C_k.
\label{eq:diagonal-arbitrary-mu-Q-rate}
\end{equation}
\end{corollary}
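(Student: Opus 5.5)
The corollary reduces to Theorem~\ref{thm:diagonal-global-linear-convergence} through the Bellman representation of action values. The plan has two steps. First, establish the identity
\[
Q_\tau^*(s,a)-Q_\tau^{\pi_{k+1}}(s,a)=\gamma\bigl[V_\tau^*(\mu)-V_\tau^{\pi_{k+1}}(\mu)\bigr],\qquad \mu=P(\cdot\mid s,a).
\]
Second, apply the theorem with this particular $\mu$.

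For the identity, I would use the relation $Q_\tau^\pi(s,a)=r(s,a)+\gamma\E_{s'\sim P(\cdot\mid s,a)}[V_\tau^\pi(s')]$ stated right after the definitions of the regularized values. Apply it once with $\pi=\pi_\tau^*$ and once with $\pi=\pi_{k+1}$. Subtracting the two, the reward terms $r(s,a)$ cancel, and the remaining expectation over $s'\sim P(\cdot\mid s,a)$ is exactly $\gamma[V_\tau^*(\mu)-V_\tau^{\pi_{k+1}}(\mu)]$ in the notation $V_\tau^\pi(\mu)=\E_{s\sim\mu}V_\tau^\pi(s)$. Here $V_\tau^*=V_\tau^{\pi_\tau^*}$ and $Q_\tau^*=Q_\tau^{\pi_\tau^*}$ by definition.

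Next, note that $\mu=P(\cdot\mid s,a)\in\Delta(\calS)$ is a legitimate evaluation distribution. Theorem~\ref{thm:diagonal-global-linear-convergence} allows any $\mu\in\Delta(\calS)$, including ones without full support. Its hypotheses (Assumptions~\ref{ass:normalization}, \ref{ass:h}, \ref{ass:policy-initialization}, condition \eqref{eq:diagonal-relaxation-assumption}, a finite $Q_\tau^0$, and $\xi\in(\gamma,1)$) are exactly those of the corollary. The theorem therefore gives
\[
0\le V_\tau^*(\mu)-V_\tau^{\pi_{k+1}}(\mu)\le\Bigl\|\tfrac{\mu}{\nu_{\mu,\xi}^*}\Bigr\|_\infty\mathcal C_k,
\]
where $\nu_{\mu,\xi}^*$, $\gamma_{\mu,\xi}$, $\rho$, $\mathcal L_0$ and $\mathcal C_k$ are all built from this $\mu$. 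Multiplying by $\gamma\ge0$ and substituting into the identity yields \eqref{eq:diagonal-arbitrary-mu-Q-rate}, including the lower bound $0$. The density-ratio factor is finite by \eqref{eq:diagonal-density-ratio-bounds}.

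There is no real obstacle here. The only point needing care is bookkeeping: every $\mu$-dependent quantity must be evaluated at $\mu=P(\cdot\mid s,a)$, so the constant depends on $(s,a)$, which is what the statement says. The argument does not need a uniform bound over $(s,a)$.
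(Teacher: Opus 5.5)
Your proposal is correct and follows the paper's own argument. The paper likewise uses the Bellman relation $Q_\tau^\pi(s,a)=r(s,a)+\gamma\E_{s'\sim P(\cdot\mid s,a)}[V_\tau^\pi(s')]$ to write $Q_\tau^*(s,a)-Q_\tau^{\pi_{k+1}}(s,a)=\gamma[V_\tau^*(\mu)-V_\tau^{\pi_{k+1}}(\mu)]$ with $\mu=P(\cdot\mid s,a)$, and then applies Theorem~\ref{thm:diagonal-global-linear-convergence} at that $\mu$.
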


If we further assume the strong convexity of $h$ (Assumption~\ref{ass:strong-convexity}), then by the definition of $\mathcal L_k$,
\[
\frac{\lambda}{2}\E_{s\sim\nu_{\mu,\xi}^*}\!\left[
\|\pi_k(\cdot\mid s)-\pi_\tau^*(\cdot\mid s)\|_1^2\right]
\leq D_{\pi_k}^{\pi_\tau^*}(\nu_{\mu,\xi}^*)
\leq\frac{\eta}{\max \{ \gamma_{\mu,\xi}(1+\eta\tau), 1\}}\mathcal L_k.
\]
Using $\mu(s)\leq\|\mu/\nu_{\mu,\xi}^*\|_\infty\nu_{\mu,\xi}^*(s)$ for every state transfers this estimate to the expectation under $\mu$. The proof of Theorem~\ref{thm:diagonal-global-linear-convergence} establishes the bound \(\mathcal L_k\leq\mathcal C_k\) in \eqref{eq:diagonal-iterated-potential-bound}, which immediately yields the policy convergence of Algorithm~\ref{alg:exact-diagonal-td-pmd}.

\begin{corollary}[Policy convergence under strong convexity]
\label{cor:diagonal-policy-convergence}
Fix $\mu\in\Delta(\calS)$ and $\xi\in(\gamma,1)$.  Suppose in addition that Assumption~\ref{ass:strong-convexity} holds.  Then, for every $k\geq0$,
\begin{equation}
\E_{s\sim\mu}\!\left[
\|\pi_k(\cdot\mid s)-\pi_\tau^*(\cdot\mid s)\|_1^2\right]
\leq\frac{2\eta}{\lambda\max \{ \gamma_{\mu,\xi}(1+\eta\tau), 1 \}}
\left\|\frac{\mu}{\nu_{\mu,\xi}^*}\right\|_\infty
\mathcal C_k.
\label{eq:diagonal-arbitrary-mu-policy-rate}
\end{equation}
\end{corollary}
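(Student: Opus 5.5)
The proof of Corollary~\ref{cor:diagonal-policy-convergence} is a short chain of three inequalities. First, strong convexity lower-bounds the Bregman term. Second, the potential upper-bounds that Bregman term. Third, the density ratio transfers the estimate from $\nu_{\mu,\xi}^*$ to $\mu$.

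\emph{Step 1 (strong convexity).} Under Assumption~\ref{ass:strong-convexity}, $h$ is $\lambda$-strongly convex on $\Delta(\calA)$ with respect to $\|\cdot\|_1$. For $q\in\rint(\dom h)\cap\Delta(\calA)$ and $p\in\Delta(\calA)$ this gives $D_h(p\,\|\,q)\geq\frac\lambda2\|p-q\|_1^2$. By Lemma~\ref{lem:three-point-descent} and Assumption~\ref{ass:policy-initialization}, an induction shows $\pi_k(\cdot\mid s)\in\rint(\dom h)\cap\Delta(\calA)$ for every $k$ and $s$. Applying the inequality with $p=\pi_\tau^*(\cdot\mid s)$ and $q=\pi_k(\cdot\mid s)$, then averaging over $s\sim\nu_{\mu,\xi}^*$, yields
\[
\frac\lambda2\E_{s\sim\nu_{\mu,\xi}^*}\bigl[\|\pi_k(\cdot\mid s)-\pi_\tau^*(\cdot\mid s)\|_1^2\bigr]\leq D_{\pi_k}^{\pi_\tau^*}(\nu_{\mu,\xi}^*).
\]

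\emph{Step 2 (the potential controls the Bregman term).} The first sum in $\mathcal L_k$ is nonnegative. Indeed, Lemma~\ref{lem:diagonal-bellman-violation} gives $Q_\tau^k-\delta_k\mathbf1\leq Q_\tau^{\pi_k}\leq Q_\tau^*$, so each summand $Q_\tau^*-Q_\tau^k+\delta_k$ is nonnegative, and the weights $\nu_{\mu,\xi}^*\pi_\tau^*/w$ are nonnegative. Therefore
\[
\max\{\gamma_{\mu,\xi}(\eta^{-1}+\tau),\eta^{-1}\}\,D_{\pi_k}^{\pi_\tau^*}(\nu_{\mu,\xi}^*)\leq\mathcal L_k.
\]
Since $\max\{\gamma_{\mu,\xi}(\eta^{-1}+\tau),\eta^{-1}\}=\eta^{-1}\max\{\gamma_{\mu,\xi}(1+\eta\tau),1\}$, this rearranges to $D_{\pi_k}^{\pi_\tau^*}(\nu_{\mu,\xi}^*)\leq\frac{\eta}{\max\{\gamma_{\mu,\xi}(1+\eta\tau),1\}}\mathcal L_k$.

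We then need $\mathcal L_k\leq\mathcal C_k$. Lemma~\ref{lem:diagonal-weighted-recursion} supplies $\mathcal L_{j+1}\leq\rho\mathcal L_j+c\,\delta_j$, where $c:=(1-\gamma)\bigl(1-\underline w\sum_{s,a}\nu_{\mu,\xi}^*(s)\pi_\tau^*(a\mid s)/w(s,a)\bigr)$. Unrolling this recursion for $j=0,\dots,k-1$, using $\rho\geq0$, gives
\[
\mathcal L_k\leq\rho^k\mathcal L_0+c\sum_{j=0}^{k-1}\rho^{k-1-j}\delta_j=\mathcal C_k.
\]
This is the same bound \eqref{eq:diagonal-iterated-potential-bound} obtained in the proof of Theorem~\ref{thm:diagonal-global-linear-convergence}. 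It holds at $k=0$ trivially, with an empty sum. The coefficient $c$ is nonnegative, as noted in the discussion of $\mathcal C_k$, although unrolling does not actually need this.

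\emph{Step 3 (change of measure).} If $\mu(s)>0$, then $\nu_{\mu,\xi}^*(s)\geq(1-\gamma/\xi)\mu(s)>0$, so $\operatorname{supp}(\mu)\subseteq\operatorname{supp}(\nu_{\mu,\xi}^*)$. Hence $\mu(s)\leq\|\mu/\nu_{\mu,\xi}^*\|_\infty\,\nu_{\mu,\xi}^*(s)$ for every $s$. Because the integrand is nonnegative, $\E_\mu[\cdot]\leq\|\mu/\nu_{\mu,\xi}^*\|_\infty\E_{\nu_{\mu,\xi}^*}[\cdot]$.

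Combining Steps 1--3 and multiplying through by $2/\lambda$ gives \eqref{eq:diagonal-arbitrary-mu-policy-rate}. Every step is routine. The only point needing care is that Step 2 requires nonnegativity of the critic part of $\mathcal L_k$, which comes from Lemma~\ref{lem:diagonal-bellman-violation}.
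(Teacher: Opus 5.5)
Your proposal is correct and follows essentially the same route as the paper. Strong convexity lower-bounds $D_{\pi_k}^{\pi_\tau^*}(\nu_{\mu,\xi}^*)$. Nonnegativity of the critic part of $\mathcal L_k$ (from Lemma~\ref{lem:diagonal-bellman-violation}), together with the unrolled bound $\mathcal L_k\leq\mathcal C_k$, then gives $D_{\pi_k}^{\pi_\tau^*}(\nu_{\mu,\xi}^*)\leq\frac{\eta}{\max\{\gamma_{\mu,\xi}(1+\eta\tau),1\}}\mathcal C_k$, and the pointwise bound $\mu(s)\leq\|\mu/\nu_{\mu,\xi}^*\|_\infty\nu_{\mu,\xi}^*(s)$ transfers this to $\mu$. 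Your additional checks (iterates staying in $\rint(\dom h)$, the support inclusion, and $\rho\geq0$ for unrolling) are details the paper leaves implicit.
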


The argument used to prove Theorem~\ref{thm:diagonal-global-linear-convergence} also applies when $\tau=0$, and yields two types of convergence: a constant stepsize gives an averaged sublinear rate, whereas geometrically increasing stepsizes give a last-iterate linear rate. This agrees with the behavior established for unregularized one-step TD–PMD by \citet{liu2025tdpmd}. For clarity, the proof of the following result is also included in Appendix~\ref{app:exact-diagonal-proof}.

\begin{corollary}[Unregularized exact TD--PMD]
\label{cor:unregularized-exact-td-pmd}
Consider the variable-stepsize variant of Algorithm~\ref{alg:exact-diagonal-td-pmd} with $\tau=0$ and $W=I$, where the policy stepsize at iteration \(k\) is $\eta_k >0$, and omit the subscript $\tau$.  Fix an unregularized optimal policy $\pi^*$ and a stationary distribution $\nu^*$ of $P_{\scriptscriptstyle\calS}^{\pi^*}$, and let $\delta_k$ be defined by \eqref{eq:diagonal-violation-definition} with $\tau=0$ and $W=I$. If $\eta_k\equiv\eta>0$, then, for every $K\geq1$,
\begin{equation}
\frac{1}{K}\sum_{k=0}^{K-1}
\left[V^*(\nu^*)-V^{\pi_{k+1}}(\nu^*)\right]
\leq\frac{
\sum_{s,a}\nu^*(s)\pi^*(a\mid s)
\bigl(Q^*(s,a)-Q^0(s,a)+\delta_0\bigr)
+\eta^{-1}D_{\pi_0}^{\pi^*}(\nu^*)
}{(1-\gamma)K},
\label{eq:unregularized-constant-step-rate}
\end{equation}
where $V^\pi$ and $V^*$ denote the unregularized value function of $\pi$ and optimal value function, respectively. If instead the positive stepsizes satisfy $\eta_{k+1}\geq\eta_k/\gamma$ for every $k\geq0$, then the last iterate satisfies
\begin{equation}
V^*(\nu^*)-V^{\pi_{k+1}}(\nu^*)
\leq\gamma^k
\left\{
\sum_{s,a}\nu^*(s)\pi^*(a\mid s)
\bigl(Q^*(s,a)-Q^0(s,a)+\delta_0\bigr)
+\eta_0^{-1}D_{\pi_0}^{\pi^*}(\nu^*)
\right\}.
\label{eq:unregularized-increasing-step-rate}
\end{equation}
\end{corollary}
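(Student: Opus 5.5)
The plan is to redo the potential argument of Lemma~\ref{lem:diagonal-weighted-recursion} with $\tau=0$, $W=I$, $\mu=\nu^*$, so that $\nu_{\mu,\xi}^*=\nu^*$ and $\gamma_{\mu,\xi}=\gamma$, while keeping the stepsize $\eta_k$ explicit. Lemma~\ref{lem:diagonal-bellman-violation} carries over verbatim, since its proof uses only monotonicity and contraction of $\calF^{\pi}$ and the fact that the PMD step improves the linear term. With $W=I$ it gives $\delta_{k+1}\le\gamma\delta_k$ and $Q^k-\delta_k\mathbf 1\le Q^{\pi_{k+1}}\le Q^*$. Define
\[
A_k:=\sum_{s,a}\nu^*(s)\pi^*(a\mid s)\bigl(Q^*(s,a)-Q^k(s,a)+\delta_k\bigr)\ge0,
\qquad
B_k:=D_{\pi_k}^{\pi^*}(\nu^*).
\]

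\textbf{One-step inequality.} The target is
\[
A_{k+1}+(1-\gamma)\bigl[V^*(\nu^*)-V^{\pi_{k+1}}(\nu^*)\bigr]\le \gamma A_k+\eta_k^{-1}(B_k-B_{k+1}).
\]
To obtain it, write $Q^*-Q^{k+1}=Q^*-\calF^{\pi_{k+1}}Q^k=\gamma P(\cdot\mid s,a)[V^*-\langle\pi_{k+1},Q^k\rangle]$. Averaging under $\nu^*\pi^*$ and using stationarity, $\nu^{*\top}P^{\pi^*}_{\calS}=\nu^{*\top}$, gives
\[
\E_{\nu^*}\bigl[\langle\pi^*,Q^*-Q^{k+1}\rangle\bigr]=\gamma\,\E_{\nu^*}\bigl[\langle\pi^*,Q^*\rangle-\langle\pi_{k+1},Q^k\rangle\bigr].
\]
Next split $\langle\pi_{k+1},Q^k\rangle=\langle\pi^*,Q^k\rangle+\langle\pi_{k+1}-\pi^*,Q^k\rangle$. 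Lemma~\ref{lem:three-point-descent} with $\tau=0$ and $p=\pi^*$ bounds the second piece below by $\eta_k^{-1}(B_{k+1}-B_k)$, after dropping $D^{\pi_{k+1}}_{\pi_k}\ge0$.

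The piece $\gamma\,\E_{\nu^*}\langle\pi^*,Q^*-Q^k\rangle$ has coefficient $\gamma$ instead of $1$. The unused $(1-\gamma)$ fraction is the source of the value gap. I would route it through the bound $Q^k-\delta_k\mathbf 1\le Q^{\pi_{k+1}}$, which gives
\[
\E_{\nu^*}\langle\pi^*,Q^*-Q^k+\delta_k\rangle\ge\E_{\nu^*}\langle\pi^*,Q^*-Q^{\pi_{k+1}}\rangle .
\]
The performance difference lemma (Lemma~\ref{lem:pdl}) under $d^{*}_{\nu^*}=\nu^*$ turns the right-hand side into at least $(1-\gamma)\bigl[V^*(\nu^*)-V^{\pi_{k+1}}(\nu^*)\bigr]$, with a multiplier $\gamma$ to be tracked. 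This holds because
\[
V^*-V^{\pi}=\E_{\nu^*}\langle\pi^*,Q^{\pi}\rangle-V^{\pi}+\ldots,
\]
and $\langle\pi^*,Q^{\pi_{k+1}}\rangle\ge\langle\pi_{k+1},Q^{\pi_{k+1}}\rangle$ up to the policy-improvement error, which is again controlled by the three-point term. The $\delta$ bookkeeping uses $\delta_{k+1}\le\gamma\delta_k$ to absorb the shift $\delta_{k+1}-\gamma\delta_k\le0$.

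\textbf{Summation.} For constant $\eta$, sum the one-step inequality over $k=0,\dots,K-1$. Dropping $A_K\ge0$ and $B_K\ge0$ and telescoping gives
\[
(1-\gamma)\sum_k\text{gap}_k+(1-\gamma)\sum_{k\ge1}A_k\le A_0+\eta^{-1}B_0,
\]
which after dividing by $(1-\gamma)K$ is \eqref{eq:unregularized-constant-step-rate}. For increasing stepsizes, use the form $\text{gap}_k\le \gamma A_k+\eta_k^{-1}B_k$ together with
\[
A_{k+1}+\eta_{k+1}^{-1}B_{k+1}\le \gamma A_k+\eta_k^{-1}B_{k+1}\cdot\frac{\eta_k}{\eta_{k+1}}.
\]
With $\eta_k/\eta_{k+1}\le\gamma$, the potential $A_k+\eta_k^{-1}B_k$ contracts by $\gamma$, and iterating yields \eqref{eq:unregularized-increasing-step-rate}.

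\textbf{Main obstacle.} The delicate step is extracting the value gap with exactly the constants shown, that is, splitting the $\gamma$ and $(1-\gamma)$ parts correctly between $A_k$ and the gap. The natural route is to write the bound as $\text{gap}_k\le$ (potential at step $k$) minus $\gamma^{-1}\times$(potential at step $k+1$). I would first try to mimic the $\tau=0$ specialization of the proof of Lemma~\ref{lem:diagonal-weighted-recursion}, where the regularized contraction factor $(1+\eta\tau)^{-1}$ degenerates to $1$ and the gap term appears as the slack.
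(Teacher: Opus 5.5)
There is a genuine gap: your central one-step inequality is false. With $g_{k+1}:=V^*(\nu^*)-V^{\pi_{k+1}}(\nu^*)$, the claim $A_{k+1}+(1-\gamma)g_{k+1}\le\gamma A_k+\eta_k^{-1}(B_k-B_{k+1})$ fails, so the argument breaks exactly at the step you flag as delicate.

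Here is a counterexample. Take one state, two actions, $r=(1,0)$, $\gamma=\tfrac12$, negative entropy, $\pi_0=(\tfrac12,\tfrac12)$, $\pi^*=e_{a_1}$, and $Q^0=Q^*=(2,1)$. Then $\delta_0=A_0=\tfrac12$. Writing $y:=\pi_1(a_2)$, one gets $Q^*-Q^1=\tfrac y2\mathbf 1$ and $\delta_1=\tfrac y2$. Hence $A_1=y$, $(1-\gamma)g_1=y$, and $\eta^{-1}(B_0-B_1)=\eta^{-1}\log(2(1-y))$. As $\eta\downarrow0$ the left side tends to $1$, while the right side tends to $\tfrac14+\tfrac12=\tfrac34$. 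At $\eta=0.1$ the two sides are $0.950$ and $0.738$.

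The ingredients you actually derive are the stationarity identity and the three-point inequality with $p=\pi^*$. Together with $\delta_{k+1}\le\gamma\delta_k$, they give precisely $A_{k+1}\le\gamma A_k+\gamma\eta_k^{-1}(B_k-B_{k+1})$. This is the paper's potential contraction: it carries a factor $\gamma$ on the Bregman difference and contains no gap term. The route you sketch for the gap pairs $Q^k-\delta_k\mathbf 1\le Q^{\pi_{k+1}}$ with $\pi^*$ under $\nu^*$. By stationarity alone (no performance-difference lemma), it yields only $A_k\ge\E_{s\sim\nu^*}\langle\pi^*(\cdot\mid s),Q^*(s,\cdot)-Q^{\pi_{k+1}}(s,\cdot)\rangle=\gamma g_{k+1}$. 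That loses a factor $1/\gamma$ on $A_0$ and cannot produce the stated constant.

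The missing idea is a second, separate inequality in which the shifted critic is paired with $\pi_{k+1}$ instead of $\pi^*$. Set $V^{k+1}(s):=\langle\pi_{k+1}(\cdot\mid s),Q^k(s,\cdot)\rangle$. Lemma~\ref{lem:diagonal-bellman-violation} gives $V^{\pi_{k+1}}\ge V^{k+1}-\delta_k\mathbf 1$, so pointwise $V^*(s)-V^{\pi_{k+1}}(s)\le\langle\pi^*,Q^*-Q^k\rangle(s)+\delta_k+\langle\pi^*-\pi_{k+1},Q^k\rangle(s)$. The three-point inequality then gives $g_{k+1}+\eta_k^{-1}B_{k+1}\le A_k+\eta_k^{-1}B_k$.

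The paper sums this value inequality and the potential contraction separately. Equivalently, add $(1-\gamma)$ times the value inequality to the contraction. This gives the correct one-step bound $A_{k+1}+(1-\gamma)g_{k+1}\le A_k+\eta^{-1}(B_k-B_{k+1})$, with coefficient $1$ rather than $\gamma$ on $A_k$. Telescoping it yields \eqref{eq:unregularized-constant-step-rate} exactly.

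For increasing stepsizes, set $\Phi_k:=A_k+\eta_k^{-1}B_k$. The condition $\eta_{k+1}^{-1}\le\gamma\eta_k^{-1}$ gives $\Phi_{k+1}\le A_{k+1}+\gamma\eta_k^{-1}B_{k+1}\le\gamma\Phi_k$. The value inequality gives $g_{k+1}\le\Phi_k$, which finishes the bound.

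Your displayed recursion for this case collapses to $A_{k+1}\le\gamma A_k$, because $\eta_k^{-1}B_{k+1}\eta_k/\eta_{k+1}=\eta_{k+1}^{-1}B_{k+1}$. Your per-step bound $g_{k+1}\le\gamma A_k+\eta_k^{-1}B_k$ is also false. In the same MDP with $Q^0=(0,1)$, one has $\gamma A_0=\tfrac74$ and $\eta_0^{-1}B_0\to0$, while $g_1\to2$ as $\eta_0\to\infty$.
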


%%%%%%%%%%%%%%%
\section{Regularized TD--PMD under Off-Policy Markov Data}
\label{sec:off-policy-markov-data}

In this section, we consider the sample complexity of regularized TD--PMD under off-policy Markov data. The method is summarized in Algorithm~\ref{alg:Expected-Regularized-TD-PMD} and maintains a target policy $\pi_k$ and a critic $Q_\tau^k$, while all data are generated by a fixed behavior policy $\pi_b$. At iteration $k$, after updating the target policy from $\pi_k$ to $\pi_{k+1}$, the algorithm collects a batch of $B_k$ consecutive transitions
\[
\tau_k
=
\{(s_t^k,a_t^k,r_t^k,s_{t+1}^k)\}_{t=0}^{B_k-1}
\]
along the behavior trajectory, where $a_t^k \sim \pi_b(\cdot \mid s_t^k)$ and $s_{t+1}^k \sim P(\cdot \mid s_t^k,a_t^k)$. The trajectory is not reset between iterations: the terminal state of the current batch becomes the initial state of the next one, $s_0^{k+1}=s_{B_k}^k$. For each sampled transition, we form a coordinate-wise TD increment whose target is
\[
r_t^k
+
\gamma \mathbb{E}_{a\sim\pi_{k+1}(\cdot\mid s_{t+1}^k)}
\bigl[Q_\tau^k(s_{t+1}^k,a)\bigr]
-
\tau\gamma h^{\pi_{k+1}}(s_{t+1}^k).
\]
The resulting TD increments are aggregated using deterministic weights $\{c_t^k\}_{t=0}^{B_k-1}$, and their weighted average is used to perform one critic update with stepsize $\alpha_k$.

\begin{algorithm}[ht!]
    \small
    \caption{Finite-Batch Off-Policy Expected TD--PMD}
    \label{alg:Expected-Regularized-TD-PMD}
\begin{algorithmic}
    \STATE {\bfseries Input:} Iterations $K$, initial action-value vector $Q_\tau^0=0$,
    initial policy $\pi_0$, critic stepsizes $\{\alpha_k\}$, constant policy
    stepsize $\eta>0$, initial state $s_0$, batch sizes $\{B_k\}$, and
    averaging weights $\{c_t^k\}$.
    \STATE Set $s_0^0 = s_0$.
    \FOR{$k=0,1, \dots, K-1$}
    \STATE \textbf{(Policy update)} Update the target policy by
    \begin{align*}
        \forall\, s\in\calS: \quad \pi_{k+1}(\cdot|s) = \underset{p\in\Delta(\calA)}{\arg\max} \; \ls\{ \ls\langle p ,\, Q_\tau^k(s,\cdot) \rs\rangle - \tau h(p) - \frac{1}{\eta} D^p_{\pi_k}(s) \rs\}.
    \end{align*}
    \STATE \textbf{(Sampling)} Starting from $s_0^k$, collect the consecutive
    batch $\tau_k$ under the behavior policy $\pi_b$,
    \begin{align*}
        \tau_k = \{ (s_t^k, a_t^k, r^k_t, s_{t+1}^k) \}_{t=0}^{B_k-1}, \quad \mbox{where} \; r^k_t = r(s^k_t, a^k_t), \;\; a^k_t \sim \pi_b(\cdot|s_t^k), \;\; s^k_{t+1} \sim P(\cdot| s_t^k, a_t^k),
    \end{align*}
    and set $s_0^{k+1} = s_{B_k}^k$.
    \STATE \textbf{(Critic update)} Construct the expected TD error,
    \begin{align*}
        \delta_t^k(s,a) &:= \mathds{1}[(s^k_t,a^k_t)=(s,a)]\cdot\ls[r_t^k + \gamma\E_{a\sim\pi_{k+1}(\cdot|s_{t+1}^k)} \ls[Q_\tau^k(s_{t+1}^k, a)\rs] - \tau \gamma h^{\pi_{k+1}}(s^k_{t+1}) - Q_\tau^k(s^k_t, a^k_t)\rs], \\
        \bar\delta_k(s,a) &:= {\sum_{t=0}^{B_k-1} c_t^k \cdot \delta_t^k(s,a)}.
    \end{align*}
    \STATE Update the critic by
    \begin{align*}
        Q_\tau^{k+1}(s,a) = Q_\tau^k(s,a) + \alpha_k \cdot \bar\delta_k(s,a).
    \end{align*}
    \ENDFOR
    \STATE Independently sample $\widehat K\in\{0,\ldots,K-1\}$ according to
    \begin{equation}
    \mathbb P(\widehat K=k)
    =
    \frac{(1-\rho)\rho^{K-k-1}}{1-\rho^K},
    \qquad k=0,\ldots,K-1,
    \quad \rho=(1+\eta\tau)^{-1}.
    \label{eq:exponentially-weighted-output-index}
    \end{equation}
    \STATE Output $\pi_{\widehat K}$.
\end{algorithmic}
\end{algorithm}

\begin{assumption}[Behavior-policy coverage and ergodicity]
\label{ass:off-policy-exploration}
The behavior policy has full support:
$\pi_b(a\mid s)>0$ for every $(s,a)\in\calS\times\calA$. The transition matrix $P_{\scriptscriptstyle\calS}^{\pi_b}$ is irreducible and aperiodic.
\end{assumption}

Under Assumption~\ref{ass:off-policy-exploration}, the behavior
chain has a unique stationary distribution
$\nu^{\pi_b}\in\Delta(\calS)$ with strictly positive entries
\citep[Corollary~1.17 and Proposition~1.19]{levin2017markov},
satisfying
\begin{equation}
(\nu^{\pi_b})^\top P_{\scriptscriptstyle\calS}^{\pi_b}
=(\nu^{\pi_b})^\top,\qquad \nu^{\pi_b}(s)>0\quad(s\in\calS).
\label{eq:behavior-stationarity}
\end{equation}
Moreover, by \citet[Theorem~4.9]{levin2017markov},
there exist constants $m_b>0$ and $\kappa_b\in(0,1)$ such that
\begin{equation}
d_{\mathrm{TV}}\!\left(
\bigl(P_{\scriptscriptstyle\calS}^{\pi_b}\bigr)^t(s,\cdot),
\nu^{\pi_b}
\right)\leq m_b\kappa_b^t,\qquad s\in\calS,\quad t\geq0.
\label{eq:behavior-exponential-mixing}
\end{equation}

We also assume the following standard conditions on the initialization, critic stepsizes, batch lengths, and within-batch averaging weights. Note that the restriction $Q_\tau^0=0$ in the assumption is made only to simplify the displayed constants.

\begin{assumption}[Algorithmic conditions]
\label{ass:markov-data-algorithm}
The critic initialization and parameters satisfy
$Q_\tau^0=0$, $0<\alpha_k=\alpha\leq1$, and $B_k\geq1$.  The weights
$\{c_t^k\}_{t=0}^{B_k-1}$ are deterministic and satisfy
$c_t^k\geq0$ and $\sum_{t=0}^{B_k-1}c_t^k=1$.
\end{assumption}

To proceed, define the following quantities based on the stationary distribution $\nu^{\pi_b}$:
\begin{align}
\sigma^{\pi_b}(s,a)&:=\nu^{\pi_b}(s)\pi_b(a\mid s),\qquad
\Sigma_b:=\operatorname{diag}\!\left(\sigma^{\pi_b}(s,a):s,a\right),
\nonumber\\
\widetilde\sigma_b&:=\min_{s,a}\sigma^{\pi_b}(s,a)>0,\qquad
\overline\sigma_b:=\max_{s,a}\sigma^{\pi_b}(s,a).
\label{eq:behavior-distribution-constants}
\end{align}
If each sampled TD increment is replaced by its expectation under the stationary behavior distribution, it is not hard to see that the critic update reduces to the coordinate-wise update in Algorithm~\ref{alg:exact-diagonal-td-pmd} with $W=\alpha\Sigma_b$, where the coordinate weights are determined by the stationary state--action visitation probabilities.  Thus Theorem~\ref{thm:diagonal-global-linear-convergence} immediately yields a noise-free value-convergence benchmark. 
 \begin{corollary}[Noise-free behavior-weighted value convergence]
\label{cor:behavior-weighted-exact}
Suppose the assumptions of Theorem~\ref{thm:diagonal-global-linear-convergence} and Assumption~\ref{ass:off-policy-exploration}
hold.   Fix $\xi\in(\gamma,1)$, and let $W=\alpha\Sigma_b$ with
$0<\alpha\overline\sigma_b\leq1$.  Consider the stationary distribution $\nu^*$ of $P_{\scriptscriptstyle\calS}^{\pi_\tau^*}$ for simplicity. Since $\nu_{\nu^*,\xi}^*=\nu^*$ and $\gamma_{\nu^*,\xi}=\gamma$, a direct application of Theorem~\ref{thm:diagonal-global-linear-convergence} gives
\[
0\leq V_\tau^*(\nu^*)-V_\tau^{\pi_{k+1}}(\nu^*)
\leq \mathcal C_k,
\]
where $\mathcal C_k$ in \eqref{eq:diagonal-arbitrary-mu-iterated-constant} is evaluated with $\mu=\nu^*$ and $W = \alpha \Sigma_b$, so that $\underline w=\alpha\widetilde\sigma_b$ and $\overline w=\alpha\overline\sigma_b$. \end{corollary}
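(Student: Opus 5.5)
This corollary is a direct specialization of Theorem~\ref{thm:diagonal-global-linear-convergence}, so the plan is to check its hypotheses for $W=\alpha\Sigma_b$ and then evaluate the auxiliary quantities at $\mu=\nu^*$.

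\textbf{Step 1: $W$ is admissible.} The matrix $W=\alpha\Sigma_b$ is diagonal with entries $w(s,a)=\alpha\sigma^{\pi_b}(s,a)$. By Assumption~\ref{ass:off-policy-exploration} and \eqref{eq:behavior-stationarity}, each $\sigma^{\pi_b}(s,a)=\nu^{\pi_b}(s)\pi_b(a\mid s)$ is strictly positive. Hence $\underline w=\alpha\widetilde\sigma_b>0$, and the hypothesis $\alpha\overline\sigma_b\le 1$ gives $\overline w\le 1$. So \eqref{eq:diagonal-relaxation-assumption} holds, and the remaining hypotheses of the theorem are assumed.

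\textbf{Step 2: the auxiliary distribution collapses to $\nu^*$.} Take $\mu=\nu^*$. Stationarity gives $(\nu^*)^\top(P_{\scriptscriptstyle\calS}^{\pi_\tau^*})^t=(\nu^*)^\top$ for every $t$, so
\[
\nu_{\nu^*,\xi}^*=(1-\gamma/\xi)\sum_{t}(\gamma/\xi)^t\nu^*=\nu^*.
\]
The ratio $\mu(s)/\nu^*_{\mu,\xi}(s)$ then equals $1$ on $\operatorname{supp}(\nu^*)$. Therefore $\gamma_{\nu^*,\xi}=\xi-(\xi-\gamma)=\gamma$, and $\|\mu/\nu^*_{\mu,\xi}\|_\infty=1$. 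Only the support of $\nu^*$ enters, so full support is not needed.

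\textbf{Step 3: substitute into the theorem.} Plugging these into \eqref{eq:diagonal-arbitrary-mu-value-rate} gives $0\le V_\tau^*(\nu^*)-V_\tau^{\pi_{k+1}}(\nu^*)\le\mathcal C_k$, where $\mathcal C_k$ is formed with $\rho=1-\alpha\widetilde\sigma_b\min\{1-\gamma,\eta\tau/(1+\eta\tau)\}$ and $w(s,a)=\alpha\sigma^{\pi_b}(s,a)$.

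\textbf{Main obstacle.} The only point needing care is Step 2 when $\nu^*$ lacks full support. The density-ratio norm \eqref{eq:density-ratio-infinity-norm} and the minimum defining $\gamma_{\mu,\xi}$ both range only over $\operatorname{supp}(\nu^*)$, so no division by zero arises. The corollary is therefore immediate.
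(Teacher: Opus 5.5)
Your proposal is correct and follows essentially the paper's argument: you check that $W=\alpha\Sigma_b$ satisfies \eqref{eq:diagonal-relaxation-assumption}, then use stationarity of $\nu^*$ to get $\nu^*_{\nu^*,\xi}=\nu^*$, $\gamma_{\nu^*,\xi}=\gamma$ and a unit density ratio, exactly as in the ``in particular'' step of the proof of Theorem~\ref{thm:diagonal-global-linear-convergence}. Your explicit admissibility check is correct, with positivity of the weights from Assumption~\ref{ass:off-policy-exploration} and $\overline w\le 1$ from $\alpha\overline\sigma_b\le1$; the paper leaves this check implicit.
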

Next, we are going to show that the stochastic variant (i.e., Algorithm~\ref{alg:Expected-Regularized-TD-PMD}) attains an expected
value gap of at most $\epsilon$ after
\(
\widetilde{\mathcal{O}}
\left(
\frac{1}{(1-\gamma)^5\widetilde{\sigma}_b\epsilon}
\right)
\)
observed transitions. The analysis requires bounds on the additional errors caused by finite batches, temporal dependence, and nonstationary batch initialization. As in the previous section, we first present the technical results used in the proof of the main result. The first result shows that the critic iterates are uniformly bounded. 

\begin{lemma}[Bounded critic iterates]
\label{lem:bounded-stochastic-critic-iterates}
Under Assumption~\ref{ass:markov-data-algorithm}, for every $k\geq0$,
\[
|Q_\tau^k(s,a)|\leq
\frac{1+\tau\gamma H_h}{1-\gamma},
\qquad (s,a)\in\calS\times\calA.
\]
\end{lemma}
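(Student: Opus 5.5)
The proof will be an induction on $k$, showing that each critic coordinate stays in the interval $[-M,M]$ with $M:=\frac{1+\tau\gamma H_h}{1-\gamma}$. The base case is immediate because $Q_\tau^0=0$.

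For the inductive step, suppose $\|Q_\tau^k\|_\infty\leq M$ and fix $(s,a)$. Since the indicator in $\delta_t^k(s,a)$ vanishes unless $(s_t^k,a_t^k)=(s,a)$, the averaged TD error can be written as
\[
\bar\delta_k(s,a)=\beta\bigl(\bar y-Q_\tau^k(s,a)\bigr),
\]
where $\beta:=\sum_{t:(s_t^k,a_t^k)=(s,a)}c_t^k\in[0,1]$. When $\beta>0$, the quantity $\bar y$ is the $c_t^k$-weighted average over those $t$ of the targets
\[
y_t:=r_t^k+\gamma\,\E_{a'\sim\pi_{k+1}(\cdot\mid s_{t+1}^k)}\bigl[Q_\tau^k(s_{t+1}^k,a')\bigr]-\tau\gamma h^{\pi_{k+1}}(s_{t+1}^k).
\]
The update therefore becomes
\[
Q_\tau^{k+1}(s,a)=(1-\alpha\beta)\,Q_\tau^k(s,a)+\alpha\beta\,\bar y,
\]
and $\alpha\beta\in[0,1]$ because $0<\alpha\leq1$ and the weights $c_t^k$ are nonnegative and sum to one. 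So $Q_\tau^{k+1}(s,a)$ is a convex combination of $Q_\tau^k(s,a)$ and $\bar y$, and it suffices to show $|y_t|\leq M$ for each relevant $t$.

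To bound the targets, use $0\leq r\leq1$, the induction hypothesis, and $|h^{\pi_{k+1}}(s')|\leq H_h$. The last bound needs $\pi_{k+1}(\cdot\mid s')\in\Delta(\calA)$, which Lemma~\ref{lem:three-point-descent} guarantees, and $H_h$ is the bound on $|h|$ over the simplex. These give
\[
|y_t|\leq 1+\gamma M+\tau\gamma H_h.
\]
Since $M=1+\tau\gamma H_h+\gamma M$, the right-hand side equals $M$, which closes the induction.

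The only delicate point is the inequality $\alpha\beta\leq1$: if the coefficient exceeded one, the update would no longer be a convex combination and the bound could overshoot. This is exactly why Assumption~\ref{ass:markov-data-algorithm} requires $\alpha\leq1$ and normalized nonnegative weights $c_t^k$. Everything else is routine.
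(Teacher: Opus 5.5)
Your proposal is correct and follows the paper's proof essentially step for step. The paper likewise writes the coordinate update as a convex combination $(1-\alpha_k\widehat\sigma_k(s,a))Q_\tau^k(s,a)+\alpha_k\widehat\sigma_k(s,a)\widehat y_k(s,a)$ with a normalized weighted target, bounds $|\widehat y_k(s,a)|\leq 1+\gamma M+\tau\gamma H_h=M$, and closes the induction from $Q_\tau^0=0$.
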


Because the target policy changes from one batch to the next, we also need to control the variation of the policy iterates and the  exact action-value targets along the PMD trajectory. The following lemma derives this trajectory-local regularity from the Bregman geometry of the iterates, without imposing a global Lipschitz condition on the regularizer or its gradient. 

\begin{lemma}[Trajectory regularity along the PMD iterates] 
\label{lem:trajectory-regularity}
Define  
\begin{equation}
L_h
:=
H_h+\frac12\max\left\{
\max_{s\in\calS,\,a\in\calA}
D_h\!\left(e_a\,\middle\|\,\pi_0(\cdot\mid s)\right),
\frac{2(1+\tau H_h)}{\tau(1-\gamma)}
\right\}.
\label{eq:trajectory-regularity-constants}
\end{equation}
For every $k\geq0$ and $s\in\calS$,
\begin{equation}
\label{eq:h-trajectory-lipschitz-proved}
\left|
h^{\pi_{k+1}}(s)-h^{\pi_k}(s)
\right|
\leq
L_h
\left\|
\pi_{k+1}(\cdot\mid s)-\pi_k(\cdot\mid s)
\right\|_1.
\end{equation}
Moreover,
\begin{equation}
\|\pi_{k+1}-\pi_k\|_{1,\infty}
\leq
\frac{\eta}{\lambda}
\left(
\frac{1+\tau\gamma H_h}{1-\gamma}+\tau L_h
\right),
\label{eq:uniform-policy-increment-value-section}
\end{equation}
where
$\|\pi-\pi'\|_{1,\infty}
:=\max_{s\in\calS}\|\pi(\cdot\mid s)-\pi'(\cdot\mid s)\|_1$, and
\begin{equation}
\label{eq:q-trajectory-lipschitz}
\left\|
Q_\tau^{\pi_{k+1}}-Q_\tau^{\pi_k}
\right\|_\infty
\leq
\frac{\gamma}{1-\gamma}
\left(
\frac{1+\tau\gamma H_h}{1-\gamma}+\tau L_h
\right)
\|\pi_{k+1}-\pi_k\|_{1,\infty}.
\end{equation}
\end{lemma}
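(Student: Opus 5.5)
The plan has three parts, matching the three claims. First, a uniform bound on the vertex Bregman divergences $D_h(e_a\,\|\,\pi_k(\cdot\mid s))$ along the iterates yields the trajectory-wise Lipschitz bound \eqref{eq:h-trajectory-lipschitz-proved}. Second, strong convexity together with the three-point inequality (Lemma~\ref{lem:three-point-descent}) bounds the policy increment. Third, a resolvent/performance-difference identity transfers the policy increment to $Q_\tau^{\pi}$. Throughout I use Lemma~\ref{lem:bounded-stochastic-critic-iterates} for $\|Q_\tau^k\|_\infty$ and Lemma~\ref{lem:bounded_value} for $\|Q_\tau^{\pi_k}\|_\infty$, both bounded by $\frac{1+\tau\gamma H_h}{1-\gamma}$. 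I also use that Lemma~\ref{lem:three-point-descent} keeps every $\pi_k(\cdot\mid s)$ in $\rint(\dom h)$, so all gradients below are defined.

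\emph{Step 1 (vertex divergences).} Write $D_k:=\max_{s,a}D_h(e_a\,\|\,\pi_k(\cdot\mid s))$ and $M:=\frac{2(1+\tau H_h)}{1-\gamma}$. Apply Lemma~\ref{lem:three-point-descent} with $p=e_a$ and drop the nonnegative term $D_{\pi_k}^{\pi_{k+1}}(s)$. This gives
\[
(1+\eta\tau)D_{\pi_{k+1}}^{e_a}(s)\le D_{\pi_k}^{e_a}(s)+\eta\bigl[\langle \pi_{k+1}(\cdot\mid s)-e_a,Q_\tau^k(s,\cdot)\rangle-\tau(h^{\pi_{k+1}}(s)-h(e_a))\bigr].
\]
The bracket is at most $2\|Q_\tau^k\|_\infty+2\tau H_h\le \frac{2(1+\tau\gamma H_h)}{1-\gamma}+2\tau H_h=M$. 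Hence $D_{k+1}\le \frac{D_k+\eta M}{1+\eta\tau}\le\max\{D_k,M/\tau\}$. By induction, $D_k\le \overline D:=\max\{D_0,M/\tau\}$ for all $k$, and $2(L_h-H_h)$ is exactly $\overline D$.

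\emph{Step 2 (Lipschitz bound for $h$).} Fix $q$ in $\rint(\dom h)\cap\Delta(\calA)$ with all vertex divergences at most $\overline D$, and set $g_a:=\langle\nabla h(q),e_a-q\rangle=h(e_a)-h(q)-D_h(e_a\,\|\,q)$. Then $g_a\in[h(e_a)-h(q)-\overline D,\;h(e_a)-h(q)]$, so $\max_a g_a-\min_a g_a\le 2H_h+\overline D=2L_h$. For any $p\in\Delta(\calA)$, $\sum_a(p(a)-q(a))=0$ gives
\[
\langle\nabla h(q),p-q\rangle=\sum_a(p(a)-q(a))g_a .
\]
Any vector with zero sum and $\ell_1$-norm $\|p-q\|_1$ has its positive and negative parts each of mass $\frac12\|p-q\|_1$, so $|\langle\nabla h(q),p-q\rangle|\le L_h\|p-q\|_1$. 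Convexity gives
\[
\langle\nabla h(\pi_k),\pi_{k+1}-\pi_k\rangle\le h^{\pi_{k+1}}-h^{\pi_k}\le\langle\nabla h(\pi_{k+1}),\pi_{k+1}-\pi_k\rangle .
\]
Both $\pi_k$ and $\pi_{k+1}$ satisfy the bound of Step 1, which yields \eqref{eq:h-trajectory-lipschitz-proved}.

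\emph{Step 3 (policy increment).} Take $p=\pi_k(\cdot\mid s)$ in Lemma~\ref{lem:three-point-descent}. Strong convexity (Assumption~\ref{ass:strong-convexity}) bounds $D_{\pi_k}^{\pi_{k+1}}(s)+(1+\eta\tau)D_{\pi_{k+1}}^{\pi_k}(s)$ from below by $\lambda\|\Delta_s\|_1^2$, where $\Delta_s:=\pi_{k+1}(\cdot\mid s)-\pi_k(\cdot\mid s)$. The left side of the three-point inequality is at most $(\|Q_\tau^k\|_\infty+\tau L_h)\|\Delta_s\|_1$ by Step 2. Dividing by $\|\Delta_s\|_1$ gives \eqref{eq:uniform-policy-increment-value-section}.

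\emph{Step 4 ($Q$-variation).} The Bellman equations give
\[
Q_\tau^{\pi_{k+1}}-Q_\tau^{\pi_k}=\gamma P\bigl(V_\tau^{\pi_{k+1}}-V_\tau^{\pi_k}\bigr),
\]
and
\[
V_\tau^{\pi_{k+1}}-V_\tau^{\pi_k}=(I-\gamma P_{\scriptscriptstyle\calS}^{\pi_{k+1}})^{-1}\bigl(\calT_\tau^{\pi_{k+1}}V_\tau^{\pi_k}-V_\tau^{\pi_k}\bigr),
\]
which is the vector form of Lemma~\ref{lem:pdl}. The resolvent has $\infty$-norm at most $(1-\gamma)^{-1}$. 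Its argument at $s$ equals $\langle\Delta_s,Q_\tau^{\pi_k}(s,\cdot)\rangle-\tau(h^{\pi_{k+1}}(s)-h^{\pi_k}(s))$, which is bounded by $\bigl(\frac{1+\tau\gamma H_h}{1-\gamma}+\tau L_h\bigr)\|\Delta_s\|_1$ via Lemma~\ref{lem:bounded_value} and Step 2. This gives \eqref{eq:q-trajectory-lipschitz}.

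The main obstacle is Step 1/2: the regularizer need not be globally Lipschitz (negative entropy has unbounded gradient at the boundary). The inductive invariance of the vertex-divergence bound, obtained by letting the regularization contraction $(1+\eta\tau)^{-1}$ absorb the bounded critic drive $\eta M$, is what makes the argument work.
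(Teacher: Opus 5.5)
Your proposal is correct and follows the paper's proof essentially step for step. The shared structure is:
\begin{itemize}
\item the inductive bound on the vertex Bregman divergences, obtained from the three-point inequality with $p=e_a$;
\item the gradient-oscillation argument that yields the trajectory-wise Lipschitz constant $L_h$;
\item the three-point inequality with $p=\pi_k(\cdot\mid s)$ combined with strong convexity for the policy increment.
\end{itemize}
The only cosmetic difference is in the last step. You use the resolvent form $(I-\gamma P_{\scriptscriptstyle\calS}^{\pi_{k+1}})^{-1}$ of the performance-difference identity. The paper instead bounds $\|V_\tau^{\pi_{k+1}}-V_\tau^{\pi_k}\|_\infty$ by $\|Q_\tau^{\pi_{k+1}}-Q_\tau^{\pi_k}\|_\infty$ plus the policy term, then rearranges using $\|Q_\tau^{\pi_{k+1}}-Q_\tau^{\pi_k}\|_\infty\leq\gamma\|V_\tau^{\pi_{k+1}}-V_\tau^{\pi_k}\|_\infty$; both routes give the same constant.
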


We also need to separate the deterministic benchmark from the error caused by Markov sampling.  Identify $\delta_t^k$ and $\bar\delta_k$ with vectors in $\mathbb R^{|\calS||\calA|}$ and define the centered errors
\begin{equation}
\omega_t^k:=\delta_t^k-\Sigma_b
\bigl(\calF_\tau^{\pi_{k+1}}Q_\tau^k-Q_\tau^k\bigr),
\qquad \bar\omega_k:=\sum_{t=0}^{B_k-1}c_t^k\omega_t^k.
\label{eq:stochastic-critic-error-definition}
\end{equation}
The critic recursion is therefore given by
\begin{equation}
Q_\tau^{k+1}=Q_\tau^k+\alpha_k\Sigma_b
\bigl(\calF_\tau^{\pi_{k+1}}Q_\tau^k-Q_\tau^k\bigr)
+\alpha_k\bar\omega_k.
\label{eq:expected-TD-PMD-critic-update}
\end{equation}
Let
\begin{equation}
\mathcal H_k
:=
\sigma\!\left(
s_0^0,\,
\left\{
(s_t^\ell,a_t^\ell,r_t^\ell,s_{t+1}^\ell):
0\leq\ell<k,\ 0\leq t<B_\ell
\right\}
\right)
\label{eq:critic-history}
\end{equation}
be the information available before the $k$-th batch is sampled.  In particular, $s_0^k$, $Q_\tau^k$, and $\pi_{k+1}$ are $\mathcal H_k$-measurable. The conditional sampling-bias can be bounded as follows.
\begin{lemma}[Conditional bias of the stochastic critic error]
\label{lem:stochastic-error-bound}
Under Assumption~\ref{ass:markov-data-algorithm}, for every $k\geq0$ and
$(s,a)\in\calS\times\calA$,
\begin{equation}
\left|
\mathbb E\!\left[\bar\omega_k(s,a)\mid\mathcal H_k\right]
\right|
\leq
\frac{2m_b(1+\tau\gamma H_h)}{1-\gamma}
\sum_{t=0}^{B_k-1}c_t^k\kappa_b^t.
\label{eq:stochastic-error-bound}
\end{equation}
Equivalently, the same bound holds for
$\|\mathbb E[\bar\omega_k\mid\mathcal H_k]\|_\infty$, and  consequently it
also holds for $|\mathbb E[\bar\omega_k(s,a)]|$.
\end{lemma}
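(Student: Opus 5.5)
Fix $k$ and $(s,a)$. The plan is to compute $\E[\bar\omega_k(s,a)\mid\mathcal H_k]$ transition by transition, using only the Markov property of the behavior chain and the $\mathcal H_k$-measurability of $s_0^k$, $Q_\tau^k$ and $\pi_{k+1}$. Define the bounded function
\[
g_k(s,a):=\bigl(\calF_\tau^{\pi_{k+1}}Q_\tau^k-Q_\tau^k\bigr)(s,a),
\]
which is $\mathcal H_k$-measurable. Conditioning first on $(s_t^k,a_t^k)$ and averaging over $s_{t+1}^k\sim P(\cdot\mid s_t^k,a_t^k)$ gives
\[
\E[\delta_t^k(s,a)\mid\mathcal H_k,s_t^k,a_t^k]=\mathds 1[(s_t^k,a_t^k)=(s,a)]\,g_k(s,a).
\]
This holds because $r(s,a)+\gamma\E_{s'}[\langle\pi_{k+1}(\cdot\mid s'),Q_\tau^k(s',\cdot)\rangle-\tau h^{\pi_{k+1}}(s')]$ is exactly $\calF_\tau^{\pi_{k+1}}Q_\tau^k(s,a)$. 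Hence
\[
\E[\omega_t^k(s,a)\mid\mathcal H_k]=g_k(s,a)\Bigl(\mathbb P\bigl((s_t^k,a_t^k)=(s,a)\mid\mathcal H_k\bigr)-\sigma^{\pi_b}(s,a)\Bigr).
\]

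Next, conditionally on $\mathcal H_k$, the state $s_t^k$ has law $(P_{\scriptscriptstyle\calS}^{\pi_b})^t(s_0^k,\cdot)$, because the batch starts at the $\mathcal H_k$-measurable state $s_0^k$ and evolves under the fixed $\pi_b$. Therefore
\[
\mathbb P\bigl((s_t^k,a_t^k)=(s,a)\mid\mathcal H_k\bigr)-\sigma^{\pi_b}(s,a)=\pi_b(a\mid s)\bigl[(P_{\scriptscriptstyle\calS}^{\pi_b})^t(s_0^k,s)-\nu^{\pi_b}(s)\bigr].
\]
Its absolute value is at most the single-state deviation, which is at most $2\,d_{\mathrm{TV}}$. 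By \eqref{eq:behavior-exponential-mixing} this is at most $2m_b\kappa_b^t$. (Using $d_{\mathrm{TV}}$ directly even gives the constant $m_b$; the factor $2$ leaves room either way.)

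It remains to bound $|g_k(s,a)|$ by $(1+\tau\gamma H_h)/(1-\gamma)$. The first attempt is to write $g_k=\calF_\tau^{\pi_{k+1}}Q_\tau^k-Q_\tau^k$. By Lemma~\ref{lem:bounded-stochastic-critic-iterates}, $\|Q_\tau^k\|_\infty\le M:=(1+\tau\gamma H_h)/(1-\gamma)$. Then $\calF_\tau^{\pi_{k+1}}Q_\tau^k$ lies in $[-\tau\gamma H_h-\gamma M,\;1+\tau\gamma H_h+\gamma M]$, because $0\le r\le 1$ and $|h|\le H_h$. Note that $1+\tau\gamma H_h+\gamma M=M$, so this target lies in $[-M,M]$.

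This is the delicate step: the difference of two numbers in $[-M,M]$ could be as large as $2M$. That would still yield $2m_b\cdot 2M$, which is off by a factor of $2$ from the claim. To remove the factor I will use the exact TV constant $m_b$ in the mixing step instead of $2m_b$. Combining $|g_k|\le 2M$ with deviation $\le m_b\kappa_b^t$ then gives $2m_bM\kappa_b^t$ per transition. Alternatively, one could sharpen the range argument to get $|g_k|\le M$, using that both the target and $Q_\tau^k$ lie in $[-\tau\gamma H_h/(1-\gamma),\,(1+\tau\gamma H_h)/(1-\gamma)]$, whose length is only about $M$ once the signs are tracked.

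Finally, multiply each per-transition bound by $c_t^k\ge0$ and sum over $t$ to obtain \eqref{eq:stochastic-error-bound}. Since the bound holds for every $(s,a)$, it holds for the $\ell_\infty$ norm. Taking total expectation and applying Jensen's inequality transfers it to $|\E[\bar\omega_k(s,a)]|$.
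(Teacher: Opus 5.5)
Your proposal is correct and follows the paper's proof essentially line by line. The paper uses the same conditional-mean identity $\E[\omega_t^k(s,a)\mid\mathcal H_k]=\pi_b(a\mid s)\bigl[(P_{\scriptscriptstyle\calS}^{\pi_b})^t(s_0^k,s)-\nu^{\pi_b}(s)\bigr]g_k(s,a)$, the same bound $|g_k(s,a)|\le 2(1+\tau\gamma H_h)/(1-\gamma)$ (because $Q_\tau^k$ and its target both lie in $[-M,M]$), the same estimate that the single-state deviation is at most $d_{\mathrm{TV}}\le m_b\kappa_b^t$ (your first fix), and then the same weighting by $c_t^k$. Your ``alternative'' sharpening is unnecessary and not quite right as stated: the invariant interval $[-\tau\gamma H_h/(1-\gamma),(1+\tau\gamma H_h)/(1-\gamma)]$ has length $(1+2\tau\gamma H_h)/(1-\gamma)$, which exceeds $M$ whenever $\tau\gamma H_h>0$, so you should rely on the first route.
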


The factor $\kappa_b^t$ measures how much dependence on the initial state remains after $t$ steps of the behavior chain, while $c_t^k$ records how much weight the critic assigns to that sample.  Their weighted sum therefore quantifies the conditional bias of one batch.  The bias is reduced when the weighting scheme places sufficient mass on later, better-mixed samples. The exponential weights used in Theorem~\ref{thm:off-policy-markov-value-gap} make this dependence decay geometrically with the batch length.

\begin{theorem}[Expected value-gap bound]
\label{thm:off-policy-markov-value-gap}
Suppose Assumptions~\ref{ass:normalization}, \ref{ass:h},
\ref{ass:strong-convexity}, \ref{ass:policy-initialization}, \ref{ass:off-policy-exploration},
and \ref{ass:markov-data-algorithm} hold.  Fix
$\vartheta\in[0,\kappa_b)$ and run
Algorithm~\ref{alg:Expected-Regularized-TD-PMD} with
$\alpha_k=\alpha$, $B_k=B$, and
\begin{equation}
c_t^k
:=
\frac{\vartheta^{B-t-1}}
{\sum_{\ell=0}^{B-1}\vartheta^\ell},
\qquad t=0,\ldots,B-1,
\label{eq:exponential-batch-weights-value-section}
\end{equation}
where $0^0:=1$.  Let $\rho:=(1+\eta\tau)^{-1}\in(0,1)$ and assume
$0<\eta\leq
\frac{\alpha(1-\gamma)\widetilde{\sigma}_b}
{\tau\left[2-\alpha(1-\gamma)\widetilde{\sigma}_b\right]}$.
Then, for every integer
$K\geq\left\lceil\log(2)/\log(1/\rho)\right\rceil$, the output
\eqref{eq:exponentially-weighted-output-index} satisfies
\begin{equation}
\mathbb E\!\left[V_\tau^*(\mu)-V_\tau^{\pi_{\widehat K}}(\mu)\right]
\leq
\frac{C_1}{\rho^{-K}-1}
+C_2\eta
+C_3\frac{\kappa_b^B}{\kappa_b-\vartheta}.
\label{eq:explicit-three-term-value-gap}
\end{equation}
where
\(C_1:=\frac{\tau D_{\pi_0}^{\pi_\tau^*}(d_\mu^*)}{1-\gamma}\)
\({}+\frac{4\eta\tau(1+\tau\gamma H_h)}
{\alpha(1-\gamma)^3\widetilde{\sigma}_b}\),
\(C_2:=\frac{4\tau(1+\tau H_h)}{(1-\gamma)^2}\)
\({}+\frac{2(1+\tau\gamma H_h)^2}{\lambda(1-\gamma)^4}\)
\({}+\frac{2(1+\tau\gamma H_h)
\bigl[1+\tau\gamma H_h+\tau(1-\gamma)L_h\bigr]}
{\lambda\alpha(1-\gamma)^4\widetilde{\sigma}_b}\)
\({}+\frac{2\gamma
\bigl[1+\tau\gamma H_h+\tau(1-\gamma)L_h\bigr]
\bigl[9(1+\tau\gamma H_h)+\tau(1-\gamma)L_h\bigr]}
{\lambda\alpha(1-\gamma)^5\widetilde{\sigma}_b}\),
and
\(C_3:=\frac{4m_b(1+\tau\gamma H_h)}
{(1-\gamma)^3\widetilde{\sigma}_b}\).
\end{theorem}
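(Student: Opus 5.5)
We will prove the bound through a Lan-style per-iteration PMD inequality measured under $d_\mu^*$. The critic error $Q_\tau^{\pi_{k+1}}-Q_\tau^k$ enters this inequality as a perturbation. We will control that perturbation by a separate signed recursion driven by the behavior-weighted contraction, and then sum geometrically with the output weights $(1-\rho)\rho^{K-k-1}$.

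\textbf{Step 1: per-iteration inequality.} Apply Lemma~\ref{lem:three-point-descent} with $p=\pi_\tau^*(\cdot\mid s)$ and average over $s\sim d_\mu^*$. Combine with Lemma~\ref{lem:pdl}, written as the value difference between $\pi_\tau^*$ and $\pi_{k+1}$ under $d_\mu^*$. After adding and subtracting $Q_\tau^{\pi_{k+1}}$ we expect
\[
(1-\gamma)\bigl[V_\tau^*(\mu)-V_\tau^{\pi_{k+1}}(\mu)\bigr]
\le \eta^{-1}D_{\pi_k}^{\pi_\tau^*}(d_\mu^*)-(\eta^{-1}+\tau)D_{\pi_{k+1}}^{\pi_\tau^*}(d_\mu^*)
+\E_{s\sim d_\mu^*}\bigl\langle \pi_\tau^*-\pi_{k+1},\,Q_\tau^{\pi_{k+1}}-Q_\tau^k\bigr\rangle(s).
\]
The mismatch between $\pi_{k+1}$ and $\pi_k$ costs terms of order $\eta$. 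These are handled by the drift bounds \eqref{eq:uniform-policy-increment-value-section} and \eqref{eq:h-trajectory-lipschitz-proved}, and produce the first two pieces of $C_2$. Next, multiply by $(\eta^{-1}+\tau)^{-1}$ and telescope with weights $\rho^{K-k-1}$. The Bregman terms then collapse to $\rho^K\tau D_{\pi_0}^{\pi_\tau^*}(d_\mu^*)$, which, after normalization by $1-\rho^K$ and using $K\ge\log 2/\log(1/\rho)$, gives the first part of $C_1/(\rho^{-K}-1)$.

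\textbf{Step 2: signed critic-error recursion.} Set $e_k:=Q_\tau^{\pi_k}-Q_\tau^k$. We will work with the signed vector rather than its norm, since the inner product with $\pi_\tau^*-\pi_{k+1}$ only needs to be bounded above. From \eqref{eq:expected-TD-PMD-critic-update} and $\calF_\tau^{\pi_{k+1}}Q_\tau^{\pi_{k+1}}=Q_\tau^{\pi_{k+1}}$ we get
\[
e_{k+1}=\bigl(I-\alpha\Sigma_b(I-\gamma P^{\pi_{k+1}}_{\calS\times\calA})\bigr)\bigl(e_k+Q_\tau^{\pi_{k+1}}-Q_\tau^{\pi_k}\bigr)-\alpha\bar\omega_k .
\]
The iteration matrix is nonnegative with row sums at most $1-\alpha(1-\gamma)\widetilde\sigma_b$, so it contracts in $\|\cdot\|_\infty$ at that rate. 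Unrolling gives three contributions.
\begin{itemize}
\item The initial error $e_0$, bounded by Lemma~\ref{lem:bounded_value} since $Q_\tau^0=0$.
\item The drift $Q_\tau^{\pi_{k+1}}-Q_\tau^{\pi_k}$, of order $\eta$ by \eqref{eq:q-trajectory-lipschitz} and \eqref{eq:uniform-policy-increment-value-section}.
\item The noise $\bar\omega_k$.
\end{itemize}
Summing geometric series with ratio $1-\alpha(1-\gamma)\widetilde\sigma_b$ produces the factor $1/(\alpha(1-\gamma)\widetilde\sigma_b)$ that appears in $C_1$, $C_2$ and $C_3$. The stepsize condition on $\eta$ is exactly $\rho\ge 1-\alpha(1-\gamma)\widetilde\sigma_b/2$. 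This ensures the critic contraction is at least twice as fast as the output weighting, so the double sums $\sum_k\rho^{K-k-1}\sum_{j\le k}(1-\alpha(1-\gamma)\widetilde\sigma_b)^{k-j}$ are bounded by $2\rho^{K-j-1}/(\alpha(1-\gamma)\widetilde\sigma_b)$.

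\textbf{Step 3: the noise term (main obstacle).} The contribution of $\bar\omega_k$ cannot be bounded in norm, since it is not small. We need its expectation only. The difficulty is that it is multiplied by future matrices that depend on $\pi_{j+1}$ for $j>k$, and these in turn depend on the batch. We will first try to move the randomness onto the earliest factor.
\begin{itemize}
\item Write the product of matrices applied to $\bar\omega_k$ as a frozen $\mathcal H_k$-measurable version plus a difference. The difference is controlled by policy drift times the bounded noise, which is of order $\eta$ and gives the last piece of $C_2$, including the constant 9.
\item For the frozen part, condition on $\mathcal H_k$ and apply Lemma~\ref{lem:stochastic-error-bound}.
\end{itemize}
With the weights \eqref{eq:exponential-batch-weights-value-section},
\[
\sum_t c_t^k\kappa_b^t\le \frac{\kappa_b^{B}}{\kappa_b-\vartheta}\cdot\frac{1}{\sum_\ell\vartheta^\ell}\le\frac{\kappa_b^B}{\kappa_b-\vartheta},
\]
which yields the $C_3$ term. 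The pairing with $\pi_\tau^*-\pi_{k+1}$ costs a factor of 2, and dividing by $1-\gamma$ gives the stated powers. Finally, we take expectations and collect the constants.
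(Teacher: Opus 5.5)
\textbf{There is a genuine gap, in Step 3.}

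Steps 1 and 2 are sound in outline. The inequality for $\pi_{k+1}$ in fact holds with no mismatch cost. The forward recursion $e_{k+1}=A^{\pi_{k+1}}(e_k+Q_\tau^{\pi_{k+1}}-Q_\tau^{\pi_k})-\alpha\bar\omega_k$, with $A^\pi=I-\alpha\Sigma_b(I-\gamma P_{\scriptscriptstyle\calS\times\calA}^{\pi})$, controls the initial error and the true-$Q$ drift in sup-norm with a single factor $1/(\alpha(1-\gamma)\widetilde\sigma_b)$.

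The problem is the noise. After forward unrolling, $\bar\omega_m$ enters as $\langle J_s(\pi_\tau^*-\pi_{k+1}),A^{\pi_k}\cdots A^{\pi_{m+2}}\alpha\bar\omega_m\rangle$, and both the test vector and every matrix factor depend on batch $m$. Suppose you replace them by $\mathcal H_m$-measurable frozen versions, say $\pi_{m+1}$ and $(A^{\pi_{m+1}})^{k-m-1}$. At lag $n=k-m$ the difference is controlled by three factors:
\begin{itemize}
\item the accumulated drift $\|\pi_{k+1}-\pi_{m+1}\|_{1,\infty}\le n\cdot O(\eta)$, which is the only available control;
\item a contraction factor $q_b^{n-1}$, with $q_b=1-\alpha(1-\gamma)\widetilde\sigma_b$;
\item $\alpha\|\bar\omega_m\|_\infty$, which is of order $\alpha/(1-\gamma)$, not $\alpha\widetilde\sigma_b/(1-\gamma)$, because a TD sample is a spike of size $O(1/(1-\gamma))$ on the visited coordinate.
\end{itemize}
Since $\sum_n nq_b^{n-1}=(\alpha(1-\gamma)\widetilde\sigma_b)^{-2}$, the test-vector part alone contributes order $\eta/(\alpha(1-\gamma)^3\widetilde\sigma_b^2)$. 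The matrix-product part is worse: it is $O(n^2\eta)$ at lag $n$, with a factor $\overline\sigma_b$ per swapped factor.

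So your freezing argument does give $O(\eta)$, but with a quadratic or worse dependence on $1/\widetilde\sigma_b$. It therefore cannot produce $C_2$, and it loses exactly the inverse-linear coverage dependence the theorem asserts. Your attribution of the constant $9$ is also wrong. In the paper it arises as $1+8$ from the true-$Q$ drift term and the operator-drift term, and no noise enters it.

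\textbf{The missing idea} is to decouple the noise by propagating the \emph{test vector} backward with powers of a single predictable operator, rather than propagating the error forward through products of future operators. The paper writes
$\langle\pi_\tau^*-\pi_k,Q_\tau^{\pi_k}-Q_\tau^k\rangle(s)=B_0^{(k)}(s)+\sum_{j=1}^k(C_j^{(k)}+D_j^{(k)}+E_j^{(k)}+F_j^{(k)})(s)$,
where $B_j^{(k)}$ uses the vector $[(A^{\pi_j})^{k-j}]^\top J_s(\pi_\tau^*-\pi_j)$, which is $\mathcal H_{j-1}$-measurable. The noise term $F_j^{(k)}$ then pairs $\bar\omega_{j-1}$ with a predictable vector, so Lemma~\ref{lem:stochastic-error-bound} applies with no freezing error.

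The price is the operator-drift term $E_j^{(k)}$. It involves $(A^{\pi_j})^n-(A^{\pi_{j-1}})^n$ paired with the bounded error $Q_\tau^{\pi_{j-1}}-Q_\tau^{j-1}$, and bounded term by term it still carries a factor $\overline\sigma_b/\widetilde\sigma_b^2$. The paper therefore bounds only its output-weighted aggregate. It forms $S_{j,N}=\sum_{n\le N}\rho^{-n}[(A^{\pi_j})^n-(A^{\pi_{j-1}})^n]$ and uses the visitation-weighted resolvent bound $\|(I-\rho^{-1}A^\pi)^{-1}\alpha\Sigma_b\|_\infty\le\alpha\widetilde\sigma_b/(1-\rho^{-1}q_b)$. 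This absorbs the $\Sigma_b$ in $A^{\pi_j}-A^{\pi_{j-1}}=\alpha\gamma\Sigma_b(P_{\scriptscriptstyle\calS\times\calA}^{\pi_j}-P_{\scriptscriptstyle\calS\times\calA}^{\pi_{j-1}})$, so only one factor $1/(\alpha(1-\gamma)\widetilde\sigma_b)$ survives. This step, together with the margin $1-\rho^{-1}q_b\ge\frac12\rho^{-1}\alpha(1-\gamma)\widetilde\sigma_b$ needed for the $\rho^{-n}$-weighted Neumann series, is the real role of the stepsize condition and of the exponentially weighted output.

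\textbf{A smaller issue} is the output index. Your inequality bounds $\pi_{k+1}$, which corresponds to outputting $\pi_{\widehat K+1}$, but the algorithm returns $\pi_{\widehat K}$. The paper works with $\pi_k$ directly. It pays for the $\pi_k$-versus-$\pi_{k+1}$ mismatch through Young's inequality with strong convexity plus a summation by parts, and that is where the first two pieces of $C_2$ come from. Shifting the index in your version rescales the bound by $\rho^{-1}$ and adds a $\pi_0$ term, so the stated constants would not come out as written.
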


\begin{corollary}[$\epsilon$-accuracy and sample complexity]
\label{cor:off-policy-markov-value-sample-complexity}
Assume that the conditions of Theorem~\ref{thm:off-policy-markov-value-gap} hold.  For any
$0<\epsilon\leq1$, choose
\begin{equation}
\eta_\epsilon
:=
\frac{\epsilon\lambda\alpha(1-\gamma)^5\widetilde{\sigma}_b}
{12\bigl[1+\lambda\tau(1+\tau H_h)\bigr]
\bigl[1+\tau\gamma H_h+\tau(1-\gamma)L_h\bigr]
\bigl[9(1+\tau\gamma H_h)+\tau(1-\gamma)L_h\bigr]}.
\label{eq:epsilon-policy-stepsize}
\end{equation}
Set
\begin{equation}
K_\epsilon
:=
\left\lceil
\frac{2}{\eta_\epsilon\tau}
\log\!\left(
1+\max\!\left\{
1,\frac{3}{\epsilon}
\left[
\frac{\tau D_{\pi_0}^{\pi_\tau^*}(d_\mu^*)}{1-\gamma}
+
\frac{4(1+\tau\gamma H_h)}{(1-\gamma)^2}
\right]\right\}\right)
\right\rceil
\label{eq:epsilon-outer-iterations}
\end{equation}
and
\begin{equation}
B_\epsilon
:=
\max\left\{1,
\left\lceil
\frac{
\log\!\left(
\max\!\left\{
1,\,
\frac{12m_b(1+\tau\gamma H_h)}
{\epsilon(1-\gamma)^3\widetilde{\sigma}_b
(\kappa_b-\vartheta)}
\right\}
\right)
}{
\log(1/\kappa_b)
}
\right\rceil\right\}.
\label{eq:epsilon-batch-length}
\end{equation}
With $\eta=\eta_\epsilon$, $K=K_\epsilon$, and $B=B_\epsilon$, the total number of observed transitions satisfies
$N_\epsilon:=K_\epsilon B_\epsilon
=\widetilde{\mathcal O}\bigl(
[(1-\gamma)^5\widetilde\sigma_b\epsilon]^{-1}\bigr)$, where $\alpha,\tau,\lambda,H_h$, $\max_{s,a}D_h(e_a\,\|\,\pi_0(\cdot\mid s))$, $m_b,\kappa_b$, and $\vartheta$ are treated as fixed constants. These transitions suffice to ensure
\begin{equation}
\mathbb E\!\left[
V_\tau^*(\mu)-V_\tau^{\pi_{\widehat K}}(\mu)
\right]\leq\epsilon.
\label{eq:epsilon-value-guarantee}
\end{equation}
\end{corollary}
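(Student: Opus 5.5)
The proof is a direct instantiation of Theorem~\ref{thm:off-policy-markov-value-gap}. I will show that, with $\eta=\eta_\epsilon$, $K=K_\epsilon$, $B=B_\epsilon$, each of the three terms on the right of \eqref{eq:explicit-three-term-value-gap} is at most $\epsilon/3$. Before doing so I must check the hypotheses of the theorem: the stepsize restriction on $\eta$, and $K_\epsilon\ge\lceil\log 2/\log(1/\rho)\rceil$.

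\textbf{Stepsize condition.} For the stepsize condition I use that the bracket $9(1+\tau\gamma H_h)+\tau(1-\gamma)L_h$ is at least $9$. By \eqref{eq:trajectory-regularity-constants}, $\tau(1-\gamma)L_h\ge 1+\tau H_h$, so the middle bracket is at least $1$. Together with $\epsilon\le1$ and $(1-\gamma)^4\le1$, this gives
\[
\eta_\epsilon\le \frac{\alpha(1-\gamma)\widetilde\sigma_b}{108\,[1+\lambda\tau(1+\tau H_h)]}.
\]
Since $2-\alpha(1-\gamma)\widetilde\sigma_b\le2$, the requirement of the theorem follows once $108\,[1+\lambda\tau(1+\tau H_h)]\ge 2\tau$. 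This is the one place where I expect friction, because it needs a factor $\tau$ in the denominator of $\eta_\epsilon$. It would follow from a lower bound such as $\lambda\ge c$, or from absorbing $\tau$ through the $L_h$ bracket. The first thing I would try is to bound the bracket product below by a multiple of $\tau$ using $\tau(1-\gamma)L_h\ge\tau(1-\gamma)H_h+\ldots$. If that fails, I would note that the constant $12$ can be enlarged harmlessly.

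\textbf{Term $C_2\eta_\epsilon$.} I bound the four summands of $C_2$ separately. Multiplied by $\eta_\epsilon$, each has numerator $\epsilon\lambda\alpha(1-\gamma)^5\widetilde\sigma_b$ times the corresponding summand.
\begin{itemize}
\item The fourth summand cancels the two brackets and $\lambda\alpha(1-\gamma)^5\widetilde\sigma_b$ exactly. It leaves $2\gamma\epsilon/(12[1+\lambda\tau(1+\tau H_h)])\le\epsilon/6$.
\item The first summand, $4\tau(1+\tau H_h)/(1-\gamma)^2$, is dominated via the factor $\lambda\tau(1+\tau H_h)$ in $1+\lambda\tau(1+\tau H_h)$, together with the bracket lower bounds $\ge 1$ and $\ge 9$, and $\alpha\widetilde\sigma_b\le1$.
\item The second and third summands are handled by $(1+\tau\gamma H_h)\le$ each bracket and the same bracket lower bounds.
\end{itemize}
Summing the four contributions gives $C_2\eta_\epsilon\le\epsilon/3$.

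\textbf{Term $C_1/(\rho^{-K}-1)$.} First, $\eta_\epsilon\tau\le1$. Then $\log(1+x)\ge x/2$ on $[0,1]$ gives
\[
\rho^{-K_\epsilon}=(1+\eta_\epsilon\tau)^{K_\epsilon}\ge \exp(K_\epsilon\eta_\epsilon\tau/2)\ge 1+\max\{1,3X/\epsilon\},
\]
where $X$ is the bracketed quantity in \eqref{eq:epsilon-outer-iterations}. In particular $\rho^{-K_\epsilon}\ge2$, which is exactly $K_\epsilon\ge\log2/\log(1/\rho)$. The stepsize condition gives $\eta\tau\le\alpha(1-\gamma)\widetilde\sigma_b$, so the second part of $C_1$ is at most $4(1+\tau\gamma H_h)/(1-\gamma)^2$. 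Hence $C_1\le X$ and $C_1/(\rho^{-K}-1)\le\epsilon/3$.

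\textbf{Term $C_3\kappa_b^B/(\kappa_b-\vartheta)$.} The definition \eqref{eq:epsilon-batch-length} gives $\kappa_b^{B_\epsilon}\le\epsilon(1-\gamma)^3\widetilde\sigma_b(\kappa_b-\vartheta)/(12m_b(1+\tau\gamma H_h))$. The third term is then at most $\epsilon/3$.

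\textbf{Sample count.} The three bounds give \eqref{eq:epsilon-value-guarantee}. For the sample count, $K_\epsilon=\mathcal O(\log(1/\epsilon)/(\eta_\epsilon\tau))$ with $1/\eta_\epsilon=\mathcal O(1/(\epsilon(1-\gamma)^5\widetilde\sigma_b))$. Also $B_\epsilon=\mathcal O(\log(1/(\epsilon(1-\gamma)\widetilde\sigma_b)))$ when the listed constants are fixed. Therefore
\[
N_\epsilon=K_\epsilon B_\epsilon=\widetilde{\mathcal O}\bigl([(1-\gamma)^5\widetilde\sigma_b\epsilon]^{-1}\bigr).
\]
Here $D_{\pi_0}^{\pi_\tau^*}(d_\mu^*)\le\max_{s,a}D_h(e_a\|\pi_0(\cdot\mid s))$ by convexity of $D_h(\cdot\|q)$, so it enters only logarithmically.
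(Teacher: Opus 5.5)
\textbf{Gap: the stepsize hypothesis is never verified.} Most of your argument matches the paper's proof and is correct: the split into three $\epsilon/3$ pieces, the term-by-term bound on $C_2\eta_\epsilon$, the $\log(1+x)\ge x/2$ argument for $K_\epsilon$, the bound $C_1\le X$, and the choice of $B_\epsilon$. What you leave unresolved is the stepsize hypothesis of Theorem~\ref{thm:off-policy-markov-value-gap}, and everything downstream rests on it. That includes your claims $\eta_\epsilon\tau\le1$ and $\eta_\epsilon\tau\le\alpha(1-\gamma)\widetilde\sigma_b$, hence $C_1\le X$ and the sufficiency of $\rho^{-K_\epsilon}\ge2$.

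The ``friction'' you hit comes from an algebra slip: your display drops the factor $\lambda$ from the numerator of \eqref{eq:epsilon-policy-stepsize}. The bracket bounds together with $\epsilon\le1$ and $(1-\gamma)^4\le1$ actually give
\[
\eta_\epsilon\le\frac{\lambda\,\alpha(1-\gamma)\widetilde\sigma_b}{108\,[1+\lambda\tau(1+\tau H_h)]}.
\]
Your version without $\lambda$ is false whenever $\lambda>1$. Set $a:=\alpha(1-\gamma)\widetilde\sigma_b$. The theorem's requirement $\eta_\epsilon\le a/(\tau(2-a))$ then follows from
\[
\frac{2\eta_\epsilon\tau}{a}\le\frac{2\lambda\tau}{108\,[1+\lambda\tau(1+\tau H_h)]}<1,
\]
because $H_h\ge0$ gives $\lambda\tau\le\lambda\tau(1+\tau H_h)$. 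The factor $\tau$ you were looking for comes from pairing the $\lambda$ in the numerator with $\lambda\tau(1+\tau H_h)$ in the denominator. This is exactly how the paper verifies the condition.

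\textbf{Why your fallbacks do not close the step.} The sufficient condition you reduced to, $108[1+\lambda\tau(1+\tau H_h)]\ge2\tau$, is not implied by the hypotheses. Take $h(p)=\tfrac{c}{2}\|p\|_2^2$, so $\lambda=c/|\calA|$ and $H_h=c/2$, with $c=1/\tau$ and $\tau>135$. In that example $\lambda\tau$, $\tau H_h$, and both brackets of $\eta_\epsilon$ (including $\tau(1-\gamma)L_h$) stay bounded as $\tau\to\infty$. So once $\lambda$ has been discarded, no sharper lower bound on the $L_h$ bracket can supply the missing $\tau$. A lower bound on $\lambda$ is not assumed. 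Enlarging the constant $12$ would prove a different corollary from the one stated.

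\textbf{A smaller point on the sample count.} State explicitly that $\tau(1-\gamma)L_h$ stays bounded as $\gamma\uparrow1$, because the $1/(1-\gamma)$ inside $L_h$ cancels. That is what makes the brackets in $\eta_\epsilon$ independent of $\gamma$ and gives $1/\eta_\epsilon=\mathcal O([\epsilon(1-\gamma)^5\widetilde\sigma_b]^{-1})$.
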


The terms $C_1/(\rho^{-K}-1)$, $C_2\eta$, and $C_3\kappa_b^B/(\kappa_b-\vartheta)$ in \eqref{eq:explicit-three-term-value-gap} are, respectively, the decaying optimization error, the constant-stepsize tracking bias, and the finite-batch Markov mixing bias.  Increasing $K$ reduces the first term, while decreasing $\eta$ reduces the tracking bias but slows the contraction $\rho=(1+\eta\tau)^{-1}$.  Increasing $B$ reduces the mixing bias exponentially as $\kappa_b^B$.  The choice $\vartheta=0$ uses only the last sample in each batch; every fixed $\vartheta\in[0,\kappa_b)$ retains this exponential decay in $B$, with the additional factor $(\kappa_b-\vartheta)^{-1}$.  The critic stepsize $\alpha$ and the coverage $\widetilde\sigma_b$ determine how quickly the behavior-weighted critic tracks its target.  Corollary~\ref{cor:off-policy-markov-value-sample-complexity} balances these effects by taking $\eta=\Theta(\epsilon)$, $K=\widetilde{\mathcal O}(1/\epsilon)$, and $B=\mathcal O(\log(1/\epsilon))$ while the remaining problem parameters are fixed. Under near-uniform behavior coverage, $\widetilde\sigma_b=\Theta((|\calS||\calA|)^{-1})$. Thus, Corollary~\ref{cor:off-policy-markov-value-sample-complexity} yields the sample complexity
\(
\widetilde{\mathcal O}\!\left(
\frac{|\calS||\calA|}{\epsilon(1-\gamma)^5}
\right)
\).
This matches the leading-order dependence on $|\calS|$, $|\calA|$, $1-\gamma$, and $\epsilon$ of strongly convex SVMD~\citep{jia2026value}, while our result is established using only a single off-policy Markov trajectory, without trajectory resets or generative-model access.
\section{Numerical Experiments}
\label{sec:numerical-experiments}

We use a randomly generated MDP with $|\calS|=50$ and $|\calA|=10$ to illustrate the behavior of exact coordinate-wise TD--PMD and its finite-batch stochastic counterpart under off-policy Markov data. Each reward $r(s,a)$ is drawn independently from $\operatorname{Unif}[0,1]$. For each state--action pair \((s,a)\), we generate \(P(\cdot\mid s,a)\) by drawing i.i.d. samples from \(\operatorname{Unif}[0,1]\) across the next states, followed by normalization. Both the initial policy $\pi_0$ and the evaluation distribution $\mu$ are uniform. The behavior policy is generated independently as $\pi_b(a\mid s)\propto u_{s,a}$, where $u_{s,a}\sim\operatorname{Unif}[0.5,1.5]$. We consider two standard regularizers:
\begin{equation}
h_{\mathrm{NPG}}(p)=\sum_{a\in\calA}p(a)\log p(a),
\qquad
h_{\mathrm{PQA}}(p)=\frac{1}{2}\|p\|_2^2,
\label{eq:numerical-regularizers}
\end{equation}

For each regularizer \(h\), the optimal policy \(\pi_{\tau,h}^*\) and value function \(V_{\tau,h}^*\) are computed via regularized value iteration with the termination condition \(\|V^{k+1}-V^k\|_\infty\le 10^{-13}\). We then measure the value gap and the mean squared \(\ell_1\) policy error of exact coordinate-wise TD--PMD relative to the corresponding optimum:
\begin{equation}
\mathcal E_V(\pi)
:=V_{\tau,h}^*(\mu)-V_{\tau,h}^{\pi}(\mu),
\qquad
\mathcal E_\pi(\pi)
:=\E_{s\sim\mu}\!\left[
\|\pi(\cdot\mid s)-\pi_{\tau,h}^*(\cdot\mid s)\|_1^2
\right].
\label{eq:numerical-error-metrics}
\end{equation}

\subsection{Exact coordinate-wise update experiments}

The exact experiment illustrates convergence from two critic initializations under the coordinate-wise updates. We use $\gamma=0.95$, $\tau=0.1$, and $\eta=0.5$, and run each experiment for $1000$ iterations with fixed weights $W=\alpha\Sigma_b$ and $\alpha=250$. For the generated environment and behavior policy, $\widetilde\sigma_b=9.565\times10^{-4}$ and $\overline\sigma_b=3.436\times10^{-3}$, giving $w(s,a)=\alpha\sigma^{\pi_b}(s,a)\in[0.239,0.859]$.

The solid curves use $Q_\tau^0=0$, which satisfies the one-sided Bellman condition $\calF_\tau^{\pi_0}Q_\tau^0-Q_\tau^0\geq0$ in this instance. For the dashed curves, a bounded random critic is shifted by the same constant across all state--action pairs  so that $\calF_\tau^{\pi_0}Q_\tau^0-Q_\tau^0\leq-\mathbf1$.  The left panel reports $\mathcal E_V(\pi_k)$, and the right panel reports $\mathcal E_\pi(\pi_k)$.

\begin{figure}[ht!]
\centering
\includegraphics[width=0.98\linewidth]{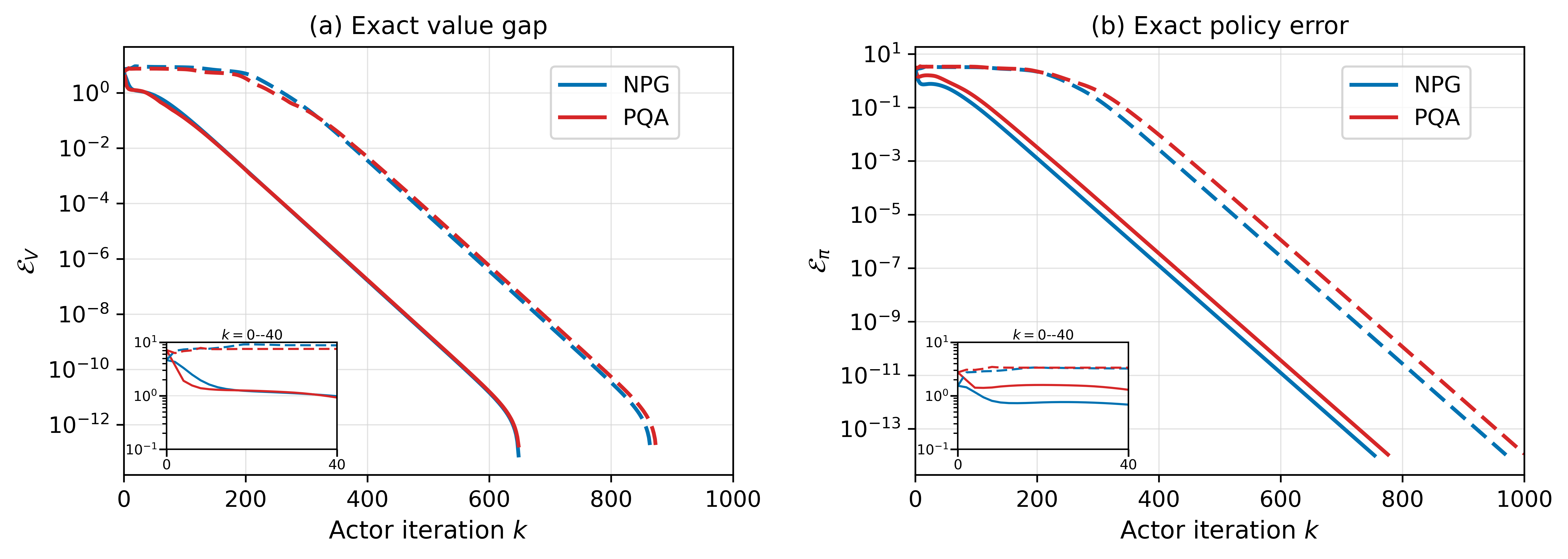}
\caption{Exact coordinate-wise TD--PMD with different critic initializations. Solid curves correspond to initialization that satisfies the one-sided Bellman condition $\calF_\tau^{\pi_0}Q_\tau^0-Q_\tau^0\geq0$ while dashed curves correspond to general initialization. }
\label{fig:numerical-exact-coordinate-comparison}
\end{figure}

It can be observed that both initializations lead to approximately geometric decay of the value gap and policy error after an initial transient. The initialization that violates the one-sided Bellman condition leads to an initial increase in error before the subsequent geometric decay. These observations are consistent with Theorem~\ref{thm:diagonal-global-linear-convergence} and Corollary~\ref{cor:diagonal-policy-convergence}, which control the value gap and policy error, respectively, for arbitrary finite critic initializations. It is also worth noting that the one-sided Bellman condition does not guarantee a monotonic decrease in the policy error, as illustrated in  Figure~\ref{fig:numerical-exact-coordinate-comparison} (right).

\subsection{Finite-batch stochastic experiment}
\label{subsec:numerical-stochastic}

The stochastic experiment uses the same problem setup as the exact experiment. We use sampled transitions with $Q_\tau^0=0$ and set $\gamma=0.5$, $\tau=0.7$, $\eta=4\times10^{-7}$, $\alpha=1$, and $B=10$. The batch weights are given by \eqref{eq:exponential-batch-weights-value-section} with $\vartheta=0.1$. For $x\in\{V,\pi\}$, each run reports the conditional average
\begin{equation}
\widehat{\mathcal E}_{x}(K)
=\frac{\sum_{k=0}^{K-1}\rho^{-k}\mathcal E_x(\pi_k)}
{\sum_{k=0}^{K-1}\rho^{-k}},
\qquad
\rho:=(1+\eta\tau)^{-1}.
\label{eq:numerical-stochastic-output-metric}
\end{equation}
For each regularizer, we perform five independent runs for \(K=5\times10^7\) iterations (\(5\times10^8\) transitions per run). The error are evaluated and accumulated at every iteration and the resulting weighted metrics are recorded every \(10^5\) iterations. Figure~\ref{fig:random-mdp-stochastic-m5} reports the five-run mean together with the range of the five trajectories.

\begin{figure}[ht!]
\centering
\includegraphics[width=0.94\linewidth]{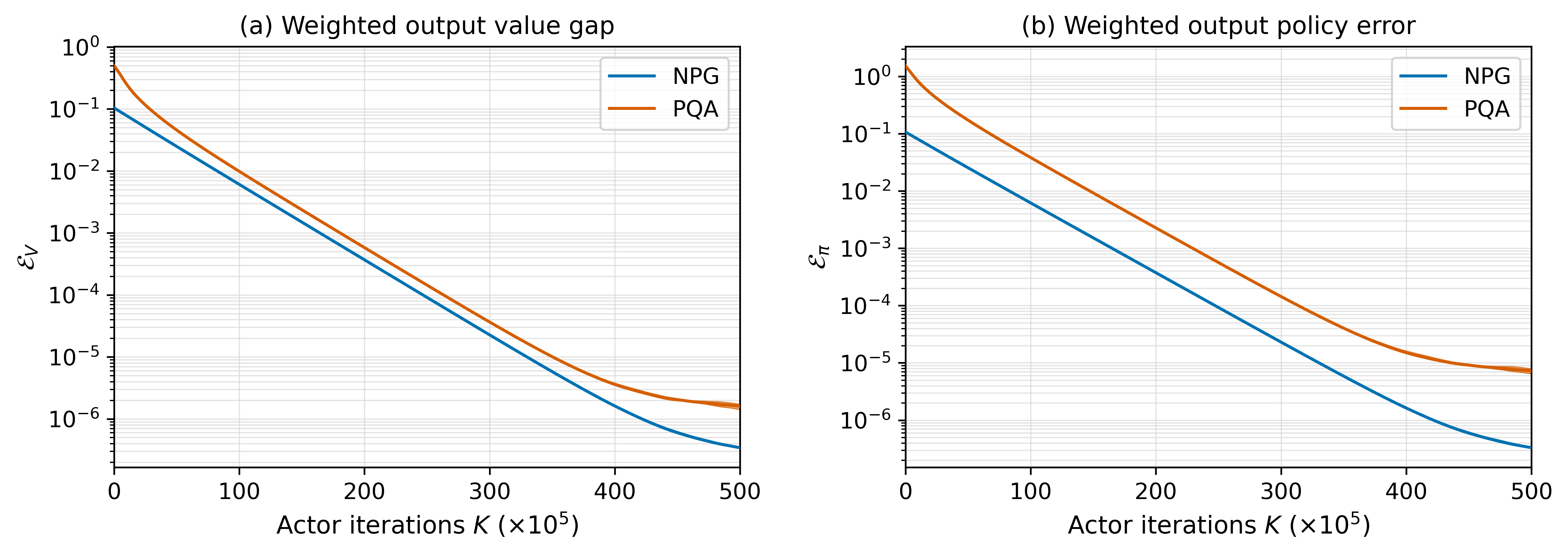}
\caption{Finite-batch stochastic TD--PMD over five independent Markov trajectories.}
\label{fig:random-mdp-stochastic-m5}
\end{figure}

The value metric shows an initial approximately exponential decrease, followed by slower improvement, which is qualitatively consistent with the three-term bound \eqref{eq:explicit-three-term-value-gap} of Theorem~\ref{thm:off-policy-markov-value-gap}.  The policy metric exhibits a similar empirical pattern.  The five independent runs remain close to each other throughout the experiment.

\section{Conclusion and Future Directions}
\label{sec:conclusion}

The present paper develops a convergence theory for regularized TD--PMD with a persistent critic updated through one-step TD recursions.  In the exact setting, we establish global linear convergence for coordinate-wise Bellman updates under general convex mirror geometry.  We then extend the analysis to a single off-policy Markov trajectory and show that, under strong convexity, finite-batch TD--PMD attains an expected value gap of at most $\epsilon$ with $\widetilde{\mathcal O}(\epsilon^{-1})$ observed transitions.  Numerical experiments illustrate the predicted behavior in both the exact and Markov-sampling settings.

Several questions remain open.  First, it is interesting to analyze an online variant that performs one critic update and one policy update for every observed transition. Such a result would require controlling the interaction between Markov dependence, critic noise, and policy drift without relying on within-batch mixing. Second, one may consider a stochastic variant in which the conditional expectation over actions in the TD target is replaced by a single action sampled from the current target policy. In this case, the resulting additional sampling noise needs to be controlled jointly with the Markov dependence and the drift of the target policy. It would also be interesting to develop stochastic TD--PMD schemes with more flexible output rules and stepsize schedules, as well as parameter choices that require less prior knowledge of the behavior-chain mixing and coverage properties.

\bibliography{refs}
\bibliographystyle{plainnat}

%\clearpage
\appendix
\section{Proofs for Section~\ref{sec:exact-diagonal-td-pmd}}
\label{app:exact-diagonal-proof}

\subsection{Proof of Lemma~\ref{lem:diagonal-bellman-violation}}
By setting $p=\pi_k(\cdot\mid s)$, Lemma~\ref{lem:three-point-descent} implies the following actor-improvement inequality
\begin{equation}
\calF_\tau^{\pi_{k+1}}Q_\tau^k
\geq\calF_\tau^{\pi_k}Q_\tau^k.
\label{eq:diagonal-actor-improvement}
\end{equation}
Let $R_k:=\calF_\tau^{\pi_k}Q_\tau^k-Q_\tau^k$ and $P_{k+1}:=P_{\scriptscriptstyle\calS\times\calA}^{\pi_{k+1}}$, and set $\widehat R_k:=\calF_\tau^{\pi_{k+1}}Q_\tau^k-Q_\tau^k$. By \eqref{eq:diagonal-actor-improvement}, there holds $\widehat R_k\geq R_k$. Additionally, the matrix $I-W+\gamma P_{k+1}W$ is nonnegative since $0<W\leq I$. Leveraging the affine identity $\calF_\tau^{\pi_{k+1}}Q'-\calF_\tau^{\pi_{k+1}}Q =\gamma P_{k+1}(Q'-Q)$ and the critic update $Q_\tau^{k+1}-Q_\tau^k=W\widehat R_k$, we obtain that 
\begin{align}
R_{k+1}
&=\widehat R_k-(Q_\tau^{k+1}-Q_\tau^k)
+\gamma P_{k+1}(Q_\tau^{k+1}-Q_\tau^k)
\nonumber\\
&=\widehat R_k-W\widehat R_k+\gamma P_{k+1}W\widehat R_k
\nonumber\\
&=(I-W+\gamma P_{k+1}W)\widehat R_k
\nonumber\\
&\geq(I-W+\gamma P_{k+1}W)R_k.
\label{eq:diagonal-residual-recursion}
\end{align}
Taking positive parts after multiplying by $-1$,
\begin{equation}
[-R_{k+1}]_+\leq(I-W+\gamma P_{k+1}W)[-R_k]_+.
\label{eq:diagonal-positive-part-recursion}
\end{equation}
Since $W$ is diagonal, we have $W(I-W+\gamma P_{k+1}W)=(I-W+\gamma WP_{k+1})W$, and the row sum of $I-W+\gamma WP_{k+1}$ corresponding to coordinate $(s,a)$ is $1-w(s,a)(1-\gamma)\leq1-\underline w(1-\gamma)$.  Therefore
\[
\|W[-R_{k+1}]_+\|_\infty
\leq[1-\underline w(1-\gamma)]\|W[-R_k]_+\|_\infty,
\]
which proves $\delta_{k+1}\leq[1-\underline w(1-\gamma)]\delta_k$.

We next prove the critic comparisons in \eqref{eq:diagonal-violation-conclusions}.  By the definition of $\delta_k$, for every $(s,a)$,
\[
w(s,a)[-R_k(s,a)]_+
\leq\|W[-R_k]_+\|_\infty
=\underline w(1-\gamma)\delta_k,
\]
and hence
\[
[-R_k(s,a)]_+
\leq\frac{\underline w}{w(s,a)}(1-\gamma)\delta_k
\leq(1-\gamma)\delta_k.
\]
Thus $R_k\geq-(1-\gamma)\delta_k\mathbf1$.  By the definition of the Bellman operator,
\[
\calF_\tau^{\pi_k}(Q_\tau^k-\delta_k\mathbf1)
-(Q_\tau^k-\delta_k\mathbf1)
=R_k+(1-\gamma)\delta_k\mathbf1\geq0.
\]
Iterating the monotone $\gamma$-contraction operator $\calF_\tau^{\pi_k}$ to its fixed point proves $Q_\tau^k-\delta_k\mathbf1\leq Q_\tau^{\pi_k}$, while $Q_\tau^{\pi_k}\leq Q_\tau^*$ follows from the optimality of $Q^*_\tau$.  Finally, \eqref{eq:diagonal-actor-improvement} implies
\[
\calF_\tau^{\pi_{k+1}}(Q_\tau^k-\delta_k\mathbf1)
-(Q_\tau^k-\delta_k\mathbf1)
=\widehat R_k+(1-\gamma)\delta_k\mathbf1
\geq R_k+(1-\gamma)\delta_k\mathbf1\geq0.
\]
Iterating $\calF_\tau^{\pi_{k+1}}$ to its fixed point gives the inequality $Q_\tau^k-\delta_k\mathbf1\leq Q_\tau^{\pi_{k+1}}$, which completes the proof.

%%%%%%%%%%%%%%%%%%

\subsection{Proof of Lemma~\ref{lem:diagonal-resolvent-comparison}}
Since $\gamma/\xi\in(0,1)$, the distribution $\nu_{\mu,\xi}^*$ is a convex combination of probability distributions.  Shifting its series by one index gives
\[
\frac{\gamma}{\xi}(\nu_{\mu,\xi}^*)^\top
P_{\scriptscriptstyle\calS}^{\pi_\tau^*}
=(\nu_{\mu,\xi}^*)^\top-
\left(1-\frac{\gamma}{\xi}\right)\mu^\top.
\]
Multiplying by $\xi$ proves the first identity in \eqref{eq:diagonal-resolvent-comparison}.

Let $m:=\min_{s:\,\nu_{\mu,\xi}^*(s)>0} \mu(s)/\nu_{\mu,\xi}^*(s) \geq 0$. Noting that the support of $\mu$ is contained in that of $\nu_{\mu,\xi}^*$, there holds
\[
1=\sum_{s:\,\nu_{\mu,\xi}^*(s)>0}\nu_{\mu,\xi}^*(s)
\frac{\mu(s)}{\nu_{\mu,\xi}^*(s)}
\geq m\sum_{s:\,\nu_{\mu,\xi}^*(s)>0}\nu_{\mu,\xi}^*(s)=m,
\]
so $m\leq1$.  Since $\gamma_{\mu,\xi}=\xi-(\xi-\gamma)m$, we have
$(\xi-\gamma)\mu(s)
\geq(\xi-\gamma)m\nu_{\mu,\xi}^*(s)
=(\xi-\gamma_{\mu,\xi})\nu_{\mu,\xi}^*(s)$
for every state.  This also proves $\gamma_{\mu,\xi}\in[\gamma,\xi]$.

%%%%%%%%%%%%%%%%

\subsection{Proof of Lemma~\ref{lem:diagonal-weighted-recursion}}
For brevity, define the first term of \eqref{eq:diagonal-arbitrary-mu-potential} by
\[
\mathcal B_k:=
\sum_{s,a}\frac{\nu_{\mu,\xi}^*(s)\pi_\tau^*(a\mid s)}{w(s,a)}
\bigl(Q_\tau^*(s,a)-Q_\tau^k(s,a)+\delta_k\bigr).
\]
Lemma~\ref{lem:diagonal-bellman-violation} implies $\mathcal B_k\geq0$. We first derive a critic inequality for an arbitrary fixed $\beta \in [ \gamma_{\mu,\xi}, 1)$ and specify its value at the end of the proof.

\paragraph{Step 1: Policy estimate.}
Define the value estimate
$V_\tau^{k+1}(s):=
\langle\pi_{k+1}(\cdot\mid s),Q_\tau^k(s,\cdot)\rangle
-\tau h^{\pi_{k+1}}(s)$.
Expanding the two values statewise, applying Lemma~\ref{lem:three-point-descent} with $p=\pi_\tau^*(\cdot\mid s)$, and immediately dropping the nonnegative term $D_{\pi_k}^{\pi_{k+1}}(s)$ gives
\begin{equation}
V_\tau^*(\nu_{\mu,\xi}^*)-V_\tau^{k+1}(\nu_{\mu,\xi}^*)
+(\eta^{-1}+\tau)
D_{\pi_{k+1}}^{\pi_\tau^*}(\nu_{\mu,\xi}^*)
\leq
\E_{\substack{s\sim\nu_{\mu,\xi}^*\\
a\sim\pi_\tau^*(\cdot\mid s)}}
\!\left[Q_\tau^*(s,a)-Q_\tau^k(s,a)\right]
+
\eta^{-1}D_{\pi_k}^{\pi_\tau^*}(\nu_{\mu,\xi}^*).
\label{eq:diagonal-three-point-value-bound}
\end{equation}

\paragraph{Step 2: Distribution comparison.}
By Lemma~\ref{lem:diagonal-bellman-violation} we have $Q_\tau^k-\delta_k\mathbf1\leq Q_\tau^{\pi_{k+1}}$.  Hence, by the definition of $V_\tau^{k+1}$,
\begin{align*}
\forall\, s\in\mathcal{S}: \;\; V_\tau^{k+1}(s)-\delta_k
&=\left\langle\pi_{k+1}(\cdot\mid s),
Q_\tau^k(s,\cdot)-\delta_k\mathbf1\right\rangle
-\tau h^{\pi_{k+1}}(s)\\
&\leq
\left\langle\pi_{k+1}(\cdot\mid s),
Q_\tau^{\pi_{k+1}}(s,\cdot)\right\rangle
-\tau h^{\pi_{k+1}}(s)
=V_\tau^{\pi_{k+1}}(s)
\leq V_\tau^*(s).
\end{align*}
Thus $V_\tau^*-V_\tau^{k+1}+\delta_k\mathbf1\geq0$. Together with the second componentwise relation in~\eqref{eq:diagonal-resolvent-comparison} and the fact that $(\xi - \gamma) \mu(s) \ge (\xi - \gamma_{\mu,\xi}) \nu_{\mu,\xi}^*(s) \ge (\xi - \beta) \nu_{\mu,\xi}^*(s)$, this gives
\begin{equation}
(\xi-\gamma)
\left[V_\tau^*(\mu)-V_\tau^{k+1}(\mu)\right]
\geq(\xi-\beta)
\left[V_\tau^*(\nu_{\mu,\xi}^*)
-V_\tau^{k+1}(\nu_{\mu,\xi}^*)\right]
-(\beta-\gamma)\delta_k.
\label{eq:diagonal-arbitrary-mu-overlap-comparison}
\end{equation}
By the Bellman equation,
\[
\forall\, (s,a) \in \mathcal{S}\times\mathcal{A}: \quad Q_\tau^*(s,a)-\calF_\tau^{\pi_{k+1}}Q_\tau^k(s,a)
=\gamma\sum_{s'}P(s'\mid s,a)
\bigl[V_\tau^*(s')-V_\tau^{k+1}(s')\bigr].
\]
Taking expectation with respect to $\nu_{\mu,\xi}^*(s)\pi_\tau^*(a\mid s)$ and then applying the first identity in \eqref{eq:diagonal-resolvent-comparison}, we obtain
\begin{align*}
&\E_{\substack{s\sim\nu_{\mu,\xi}^*\\
a\sim\pi_\tau^*(\cdot\mid s)}}
\!\left[\calF_\tau^{\pi_{k+1}}Q_\tau^k(s,a)-Q_\tau^k(s,a)\right]
=
\E_{\substack{s\sim\nu_{\mu,\xi}^*\\
a\sim\pi_\tau^*(\cdot\mid s)}}
\!\left[Q_\tau^*(s,a)-Q_\tau^k(s,a)\right]
 -\gamma(\nu_{\mu,\xi}^*)^\top
 P_{\scriptscriptstyle\calS}^{\pi_\tau^*}
 (V_\tau^*-V_\tau^{k+1})
 \\
 &=
 \E_{\substack{s\sim\nu_{\mu,\xi}^*\\
 a\sim\pi_\tau^*(\cdot\mid s)}}
 \!\left[Q_\tau^*(s,a)-Q_\tau^k(s,a)\right]
-\xi\left[V_\tau^*(\nu_{\mu,\xi}^*)
-V_\tau^{k+1}(\nu_{\mu,\xi}^*)\right]
+(\xi-\gamma)
\left[V_\tau^*(\mu)-V_\tau^{k+1}(\mu)\right] \\
&\geq \E_{\substack{s\sim\nu_{\mu,\xi}^*\\
 a\sim\pi_\tau^*(\cdot\mid s)}}
 \!\left[Q_\tau^*(s,a)-Q_\tau^k(s,a)\right]
-\xi\left[V_\tau^*(\nu_{\mu,\xi}^*)
-V_\tau^{k+1}(\nu_{\mu,\xi}^*)\right] + (\xi-\beta)
\left[V_\tau^*(\nu_{\mu,\xi}^*)
-V_\tau^{k+1}(\nu_{\mu,\xi}^*)\right]
-(\beta-\gamma)\delta_k.
\end{align*}
The last inequality follows from
\eqref{eq:diagonal-arbitrary-mu-overlap-comparison}.
Substituting \eqref{eq:diagonal-three-point-value-bound} yields
\begin{align}
\E_{\substack{s\sim\nu_{\mu,\xi}^*\\
a\sim\pi_\tau^*(\cdot\mid s)}}
\!\left[
\calF_\tau^{\pi_{k+1}}Q_\tau^k(s,a)-Q_\tau^k(s,a)
\right]
&\geq(1-\beta)
\E_{\substack{s\sim\nu_{\mu,\xi}^*\\
a\sim\pi_\tau^*(\cdot\mid s)}}
\!\left[Q_\tau^*(s,a)-Q_\tau^k(s,a)\right]
\nonumber\\
&\quad
-\beta\eta^{-1}
D_{\pi_k}^{\pi_\tau^*}(\nu_{\mu,\xi}^*)
+\beta(\eta^{-1}+\tau)
D_{\pi_{k+1}}^{\pi_\tau^*}(\nu_{\mu,\xi}^*)
-(\beta-\gamma)\delta_k.
\label{eq:diagonal-general-critic-recursion}
\end{align}

\paragraph{Step 3: Critic update.}
The inverse weight in $\mathcal B_k$ cancels the corresponding coordinate weight $w(s,a)$ in the critic update.  Thus
\begin{equation}
\mathcal B_{k+1}
=\mathcal B_k
-\E_{\substack{s\sim\nu_{\mu,\xi}^*\\
a\sim\pi_\tau^*(\cdot\mid s)}}
\!\left[\calF_\tau^{\pi_{k+1}}Q_\tau^k(s,a)-Q_\tau^k(s,a)\right]
+(\delta_{k+1}-\delta_k)
\sum_{s,a}\frac{\nu_{\mu,\xi}^*(s)\pi_\tau^*(a\mid s)}{w(s,a)}.
\label{eq:diagonal-general-critic-update}
\end{equation}
Since every shifted critic coordinate is nonnegative and $w(s,a)\geq\underline w$,
\begin{equation}
0\leq\underline w\mathcal B_k
\leq
\E_{\substack{s\sim\nu_{\mu,\xi}^*\\
a\sim\pi_\tau^*(\cdot\mid s)}}
\!\left[Q_\tau^*(s,a)-Q_\tau^k(s,a)\right]+\delta_k.
\label{eq:diagonal-general-lower-weight-comparison}
\end{equation}
Substituting \eqref{eq:diagonal-general-critic-recursion} into \eqref{eq:diagonal-general-critic-update}, using $\delta_{k+1}-\delta_k\leq-\underline w(1-\gamma)\delta_k$, and then applying \eqref{eq:diagonal-general-lower-weight-comparison}, we obtain
\begin{align}
\mathcal B_{k+1}
&\leq
[1-\underline w(1-\beta)]\mathcal B_k
+\beta\eta^{-1}
D_{\pi_k}^{\pi_\tau^*}(\nu_{\mu,\xi}^*)
-\beta(\eta^{-1}+\tau)
D_{\pi_{k+1}}^{\pi_\tau^*}(\nu_{\mu,\xi}^*)
\nonumber\\
&\quad
+(1-\gamma)\left(
1-\underline w\sum_{s,a}
\frac{\nu_{\mu,\xi}^*(s)\pi_\tau^*(a\mid s)}{w(s,a)}
\right)\delta_k.
\label{eq:diagonal-general-corrected-recursion}
\end{align}
Now choosing
\[
\beta = \max \{ \gamma_{\mu,\xi}, (1+\eta\tau)^{-1}\} \in [\gamma_{\mu,\xi},1),
\]
we obtain $\beta(\eta^{-1}+ \tau) = \max \{\eta^{-1}, \gamma_{\mu,\xi}(\eta^{-1} +\tau) \}$ and $1 - \underline w (1 - \beta) = \rho$. Thus the potential in \eqref{eq:diagonal-arbitrary-mu-potential} is
$\mathcal L_k
=\mathcal B_k
+\beta(\eta^{-1}+\tau)
D_{\pi_k}^{\pi_\tau^*}(\nu_{\mu,\xi}^*)$.
Adding the Bregman component of $\mathcal L_{k+1}$ to both sides of \eqref{eq:diagonal-general-corrected-recursion}, the definition of $\mathcal L_k$ and the fact that $(1+\eta\tau)^{-1} \le \rho$ gives that
\[
\mathcal L_{k+1}\leq\rho\mathcal L_k
+(1-\gamma)\left(1-\underline w\sum_{s,a}
\frac{\nu_{\mu,\xi}^*(s)\pi_\tau^*(a\mid s)}{w(s,a)}\right)\delta_k,
\]
which completes the proof.
%%%%%%%%%%%%%%%%%%

\subsection{Proof of Theorem~\ref{thm:diagonal-global-linear-convergence}}
By Lemma~\ref{lem:diagonal-bellman-violation} there holds $\delta_k\leq[1-\underline w(1-\gamma)]^k\delta_0$. Iterating \eqref{eq:diagonal-exact-energy-recursion} and using the definition of $\mathcal C_k$ yields 
\begin{equation}
\mathcal L_k\leq\mathcal C_k.
\label{eq:diagonal-iterated-potential-bound}
\end{equation}
The first term of the resolvent series shows that $\nu_{\mu,\xi}^*$ assigns positive mass wherever $\mu(s) > 0$.  Hence, for every nonnegative $f:\calS\to\mathbb R$, there holds
\[
\mu^\top f
\leq\left\|\frac{\mu}{\nu_{\mu,\xi}^*}\right\|_\infty
(\nu_{\mu,\xi}^*)^\top f.
\]
Write the first term of $\mathcal L_k$ as $\mathcal B_k$. The policy estimate \eqref{eq:diagonal-three-point-value-bound} and the comparison $V_\tau^{\pi_{k+1}}\geq V_\tau^{k+1}-\delta_k\mathbf1$ imply that
\begin{align}
V_\tau^*(\nu_{\mu,\xi}^*)
-V_\tau^{\pi_{k+1}}(\nu_{\mu,\xi}^*)
&\leq V_\tau^*(\nu_{\mu,\xi}^*)
-V_\tau^{k+1}(\nu_{\mu,\xi}^*)+\delta_k
\nonumber\\
&\overset{(a)}{\leq}
\E_{\substack{s\sim\nu_{\mu,\xi}^*\\
a\sim\pi_\tau^*(\cdot\mid s)}}
\!\left[Q_\tau^*(s,a)-Q_\tau^k(s,a)+\delta_k\right]
+\eta^{-1}D_{\pi_k}^{\pi_\tau^*}(\nu_{\mu,\xi}^*)
\nonumber\\
&\overset{(b)}{\leq}\overline w\mathcal B_k
+\eta^{-1}D_{\pi_k}^{\pi_\tau^*}(\nu_{\mu,\xi}^*)
\nonumber\\
&\overset{(c)}{\leq}
\mathcal L_k.
\label{eq:diagonal-resolvent-value-comparison}
\end{align}
Here (a) follows from \eqref{eq:diagonal-three-point-value-bound} after dropping the nonnegative Bregman term, (b) uses $Q_\tau^*-Q_\tau^k+\delta_k\mathbf1\geq0$ and $w(s,a)\leq\overline w$, and (c) follows directly from $\overline w \le 1$ and the definition of $\mathcal L_k$. Since the value gap is pointwise nonnegative, we apply the preceding inequality for nonnegative $f$ with
$f=V_\tau^*-V_\tau^{\pi_{k+1}}$. Combining the resulting bound with \eqref{eq:diagonal-resolvent-value-comparison} and \eqref{eq:diagonal-iterated-potential-bound} proves \eqref{eq:diagonal-arbitrary-mu-value-rate}.
Moreover, stationarity gives $\nu_{\nu^*,\xi}^*=\nu^*$ for every $\xi\in(\gamma,1)$, which implies $\gamma_{\nu^*,\xi}=\gamma$ and $\|\nu^*/\nu_{\nu^*,\xi}^*\|_\infty=1$. Substituting these identities into the definition of $\mathcal C_k$ gives the stated value bound and contraction factor.

%%%%%%%%%%%%%%%%%%

\subsection{Proof of Corollary~\ref{cor:unregularized-exact-td-pmd}}
Let
\[
B_k:=\sum_{s,a}\nu^*(s)\pi^*(a\mid s)
\bigl(Q^*(s,a)-Q^k(s,a)+\delta_k\bigr)\geq0.
\]
Taking $\mu=\nu^*$, $\beta=\gamma$, $\tau=0$, and $W=I$
in the first inequality of
\eqref{eq:diagonal-general-corrected-recursion},
and replacing $\eta$ by $\eta_k>0$, yields
\begin{equation}
B_{k+1}+\frac{\gamma}{\eta_k}
D_{\pi_{k+1}}^{\pi^*}(\nu^*)
\leq
\gamma\left[
B_k+\frac{1}{\eta_k}D_{\pi_k}^{\pi^*}(\nu^*)
\right].
\label{eq:unregularized-potential-descent}
\end{equation}

The same three-point argument, together with
$V^{\pi_{k+1}}\geq V^{k+1}-\delta_k\mathbf1$, yields
\begin{equation}
V^*(\nu^*)-V^{\pi_{k+1}}(\nu^*)
+\eta_k^{-1}D_{\pi_{k+1}}^{\pi^*}(\nu^*)
\leq B_k+\eta_k^{-1}D_{\pi_k}^{\pi^*}(\nu^*).
\label{eq:unregularized-value-descent}
\end{equation}
When $\eta_k=\eta$, summing \eqref{eq:unregularized-potential-descent} from $k=0$ to $K-1$ provides the bound for $\sum_{k=0}^{K-1}B_k$. Furthermore, summing \eqref{eq:unregularized-value-descent} and then telescoping the Bregman terms gives \eqref{eq:unregularized-constant-step-rate}.

Now consider the increasing-stepsize case and set $\Phi_k:=B_k+\eta_k^{-1}D_{\pi_k}^{\pi^*}(\nu^*)$.  Since $\eta_{k+1}^{-1}\leq\gamma\eta_k^{-1}$, \eqref{eq:unregularized-potential-descent} gives $\Phi_{k+1}\leq\gamma\Phi_k$.  Moreover, \eqref{eq:unregularized-value-descent} gives $V^*(\nu^*)-V^{\pi_{k+1}}(\nu^*)\leq\Phi_k$. Therefore $\Phi_k\leq\gamma^k\Phi_0$, which proves \eqref{eq:unregularized-increasing-step-rate}.

%%%%%%%%%%%%%%%%%%

\section{Proofs for Section~\ref{sec:off-policy-markov-data}}
\label{app:markov-proof}

%\subsection{Proofs for Lemmas~\ref{lem:bounded-stochastic-critic-iterates}--\ref{lem:stochastic-error-bound}}

\subsection{Proof of Lemma~\ref{lem:bounded-stochastic-critic-iterates}}
For each sample, define its TD target by
\[
Y_t^k
:=
r_t^k
+\gamma\mathbb E_{a'\sim\pi_{k+1}(\cdot\mid s_{t+1}^k)}
\left[Q_\tau^k(s_{t+1}^k,a')\right]
-\tau\gamma h^{\pi_{k+1}}(s_{t+1}^k).
\]
For each $(s,a)$, let
\[
\widehat\sigma_k(s,a)
:=
\sum_{t=0}^{B_k-1}
c_t^k\mathds{1}[(s_t^k,a_t^k)=(s,a)]\in[0,1].
\]
The normalized weighted target is then defined as
\[
\widehat y_k(s,a)
:=
\begin{cases}
\displaystyle
\frac{1}{\widehat\sigma_k(s,a)}
\sum_{t=0}^{B_k-1}
c_t^k\mathds{1}[(s_t^k,a_t^k)=(s,a)]Y_t^k,
& \widehat\sigma_k(s,a)>0,\\[2ex]
0,
& \widehat\sigma_k(s,a)=0.
\end{cases}
\]
With this notation, the coordinate critic update becomes
\[
Q_\tau^{k+1}(s,a)
=
\bigl(1-\alpha_k\widehat\sigma_k(s,a)\bigr)Q_\tau^k(s,a)
+
\alpha_k\widehat\sigma_k(s,a)\widehat y_k(s,a).
\]
If
$\|Q_\tau^k\|_\infty\leq(1+\tau\gamma H_h)/(1-\gamma)$,
Assumptions~\ref{ass:normalization} and \ref{ass:h} give
\[
|\widehat y_k(s,a)|
\leq
1+\gamma\frac{1+\tau\gamma H_h}{1-\gamma}
+\tau\gamma H_h
=
\frac{1+\tau\gamma H_h}{1-\gamma}.
\]
Thus $Q_\tau^{k+1}(s,a)$ is a convex combination of two points in
\(
[-(1+\tau\gamma H_h)/(1-\gamma),
(1+\tau\gamma H_h)/(1-\gamma)]
\).
Induction from $Q_\tau^0=0$ proves the claim.
%%%%%%%%%%%%%%%%%

\subsection{Proof of Lemma~\ref{lem:trajectory-regularity}}
%We proceed in four steps.

\paragraph{Step 1: Bounded vertex divergences.}
Fix $k\geq0$, $s\in\calS$, and $a\in\calA$. By Lemma~\ref{lem:three-point-descent} with $p=e_a$,
\begin{align*}
&
\left\langle
\pi_{k+1}(\cdot\mid s)-e_a,Q_\tau^k(s,\cdot)
\right\rangle
-\tau\left[h^{\pi_{k+1}}(s)-h(e_a)\right]
\\
&\qquad\geq
\eta^{-1}\!\left[
D_h\!\left(\pi_{k+1}(\cdot\mid s)\,\middle\|\,
\pi_k(\cdot\mid s)\right)
+(1+\eta\tau)D_h\!\left(e_a\,\middle\|\,
\pi_{k+1}(\cdot\mid s)\right)
-D_h\!\left(e_a\,\middle\|\,\pi_k(\cdot\mid s)\right)
\right].
\end{align*}
Using Lemma~\ref{lem:bounded-stochastic-critic-iterates} to bound the critic,
\begin{equation}
(1+\eta\tau)D_h\!\left(e_a\,\middle\|\,
\pi_{k+1}(\cdot\mid s)\right)
\leq
D_h\!\left(e_a\,\middle\|\,\pi_k(\cdot\mid s)\right)
+\frac{2\eta(1+\tau H_h)}{1-\gamma},
\label{eq:vertex-divergence-one-step}
\end{equation}
where we drop the nonnegative divergence term $D_h(\pi_{k+1}(\cdot|s) \, \| \, \pi_k(\cdot|s))$ and use the bounds  $\|\pi_{k+1}(\cdot\mid s)-e_a\|_1\leq2$ and $|h|\leq H_h$ on $\Delta(\calA)$. Equivalently,
\[
D_h\!\left(e_a\,\middle\|\,\pi_{k+1}(\cdot\mid s)\right)
\leq
\frac{D_h\!\left(e_a\,\middle\|\,\pi_k(\cdot\mid s)\right)}
{1+\eta\tau}
+\left(1-\frac{1}{1+\eta\tau}\right)
\frac{2(1+\tau H_h)}{\tau(1-\gamma)}.
\]
The right-hand side is a convex combination of the preceding divergence and the displayed constant. By induction we have
\[
D_h\!\left(e_a\,\middle\|\,\pi_k(\cdot\mid s)\right)
\leq
\max\left\{
\max_{s'\in\calS,\,b\in\calA}
D_h\!\left(e_b\,\middle\|\,\pi_0(\cdot\mid s')\right),
\frac{2(1+\tau H_h)}{\tau(1-\gamma)}
\right\}.
\]
The initial divergences are finite because Assumption~\ref{ass:h} ensures $e_a\in\dom\, h$, and Assumption~\ref{ass:policy-initialization} ensures $\pi_0(\cdot\mid s)\in\rint(\dom\, h)$.

\paragraph{Step 2: Trajectory-wise Lipschitz continuity.}
For any $a,b\in\calA$ we have
\[
\nabla_a h\!\left(\pi_k(\cdot\mid s)\right)
-\nabla_b h\!\left(\pi_k(\cdot\mid s)\right)
=
h(e_a)-h(e_b)
-D_h\!\left(e_a\,\middle\|\,\pi_k(\cdot\mid s)\right)
+D_h\!\left(e_b\,\middle\|\,\pi_k(\cdot\mid s)\right).
\]
The vertex divergences are nonnegative and satisfy the preceding bound as $|h(e_a)-h(e_b)|\leq2H_h$.  Thus,
\begin{equation}
\max_{a\in\calA}\nabla_a h\!\left(\pi_k(\cdot\mid s)\right)
-\min_{a\in\calA}\nabla_a h\!\left(\pi_k(\cdot\mid s)\right)
\leq
2H_h+\max\left\{
\max_{s'\in\calS,\,c\in\calA}
D_h\!\left(e_c\,\middle\|\,\pi_0(\cdot\mid s')\right),
\frac{2(1+\tau H_h)}{\tau(1-\gamma)}
\right\}
=
2L_h.
\label{eq:mirror-gradient-oscillation}
\end{equation}
For $p\in\{\pi_k(\cdot\mid s),\pi_{k+1}(\cdot\mid s)\}$, let
$c_p:=\frac12[\max_{a\in\calA}\nabla_a h(p)
+\min_{a\in\calA}\nabla_a h(p)]$.
Leveraging the convexity of $h$ and \eqref{eq:mirror-gradient-oscillation} yields
\[
h^{\pi_{k+1}}(s)-h^{\pi_k}(s)
\leq
\left\langle
\nabla h\!\left(\pi_{k+1}(\cdot\mid s)\right)
-c_{\pi_{k+1}(\cdot\mid s)}\mathbf1,
\pi_{k+1}(\cdot\mid s)-\pi_k(\cdot\mid s)
\right\rangle
\leq L_h\|\pi_{k+1}(\cdot\mid s)-\pi_k(\cdot\mid s)\|_1.
\]
Interchanging the two policies gives the reverse inequality and proves
\eqref{eq:h-trajectory-lipschitz-proved}.

\paragraph{Step 3: Policy increments.}
By setting $p = \pi_k(\cdot\mid s)$, Lemma~\ref{lem:three-point-descent} implies that
\begin{align*}
&
\left\langle
\pi_{k+1}(\cdot\mid s)-\pi_k(\cdot\mid s),Q_\tau^k(s,\cdot)
\right\rangle
-\tau\left[h^{\pi_{k+1}}(s)-h^{\pi_k}(s)\right]
\\
&\qquad\geq
\eta^{-1}\!\left[
D_h\!\left(\pi_{k+1}(\cdot\mid s)\,\middle\|\,
\pi_k(\cdot\mid s)\right)
+(1+\eta\tau)D_h\!\left(\pi_k(\cdot\mid s)\,\middle\|\,
\pi_{k+1}(\cdot\mid s)\right)
\right].
\end{align*}
Using Assumption~\ref{ass:strong-convexity}, together with H\"older's inequality,
Lemma~\ref{lem:bounded-stochastic-critic-iterates}, and \eqref{eq:h-trajectory-lipschitz-proved}, we obtain
\[
\lambda\|\pi_{k+1}(\cdot\mid s)-\pi_k(\cdot\mid s)\|_1^2
\leq
\eta\left(\frac{1+\tau\gamma H_h}{1-\gamma}+\tau L_h\right)
\|\pi_{k+1}(\cdot\mid s)-\pi_k(\cdot\mid s)\|_1,
\]
from which \eqref{eq:uniform-policy-increment-value-section} follows immediately.

\paragraph{Step 4: Action-value variation.}
For every state $s\in\calS$, the Bellman representation gives
\begin{align}
V_\tau^{\pi_{k+1}}(s)-V_\tau^{\pi_k}(s)
&=\left\langle\pi_{k+1}(\cdot\mid s),
Q_\tau^{\pi_{k+1}}(s,\cdot)-Q_\tau^{\pi_k}(s,\cdot)\right\rangle
\nonumber\\
&\quad+\left\langle\pi_{k+1}(\cdot\mid s)-\pi_k(\cdot\mid s),
Q_\tau^{\pi_k}(s,\cdot)\right\rangle
-\tau\left[h^{\pi_{k+1}}(s)-h^{\pi_k}(s)\right].
\label{eq:value-policy-difference-decomposition}
\end{align}
By H\"older's inequality, Lemma~\ref{lem:bounded_value}, and
\eqref{eq:h-trajectory-lipschitz-proved},
\begin{equation}
\|V_\tau^{\pi_{k+1}}-V_\tau^{\pi_k}\|_\infty
\leq\|Q_\tau^{\pi_{k+1}}-Q_\tau^{\pi_k}\|_\infty
+\left(\frac{1+\tau\gamma H_h}{1-\gamma}+\tau L_h\right)
\|\pi_{k+1}-\pi_k\|_{1,\infty}.
\label{eq:value-policy-difference-bound}
\end{equation}
Additionally, the Bellman equations give
\begin{equation}
\|Q_\tau^{\pi_{k+1}}-Q_\tau^{\pi_k}\|_\infty
\leq\gamma\|V_\tau^{\pi_{k+1}}-V_\tau^{\pi_k}\|_\infty.
\label{eq:q-value-difference}
\end{equation}
Combining the last two inequalities proves \eqref{eq:q-trajectory-lipschitz} by rearranging terms.
%%%%%%%%%%%%%%%%%%%%%%%

\subsection{Proof of Lemma~\ref{lem:stochastic-error-bound}}
Fix $k$, $(s,a)$, and condition on $\mathcal H_k$. By the Markov property and the definition of the regularized TD sample,
\begin{align}
\mathbb E\!\left[\delta_t^k(s,a)\mid\mathcal H_k\right]
&=
\mathbb P(s_t^k=s\mid s_0^k)\pi_b(a\mid s)
\left[
\mathcal F_\tau^{\pi_{k+1}}Q_\tau^k-Q_\tau^k
\right](s,a).
\label{eq:conditional-td-mean}
\end{align}
The corresponding stationary mean of the sampled increment is
\[
\nu^{\pi_b}(s)\pi_b(a\mid s)
\left[\mathcal F_\tau^{\pi_{k+1}}Q_\tau^k-Q_\tau^k\right](s,a)
=
\left[
\Sigma_b\bigl(\mathcal F_\tau^{\pi_{k+1}}Q_\tau^k-Q_\tau^k\bigr)
\right](s,a).
\]
Hence the definition of $\omega_t^k$ yields the exact identity
\begin{align}
\mathbb E\!\left[\omega_t^k(s,a)\mid\mathcal H_k\right]
&=
\left[
\mathbb P(s_t^k=s\mid s_0^k)-\nu^{\pi_b}(s)
\right]\pi_b(a\mid s)
\left[
\mathcal F_\tau^{\pi_{k+1}}Q_\tau^k-Q_\tau^k
\right](s,a).
\label{eq:conditional-omega-identity}
\end{align}
Lemma~\ref{lem:bounded-stochastic-critic-iterates} and Assumptions~\ref{ass:normalization}--\ref{ass:h} imply that both $Q_\tau^k(s,a)$ and $\mathcal F_\tau^{\pi_{k+1}}Q_\tau^k(s,a)$ belong to $[-(1+\tau\gamma H_h)/(1-\gamma), (1+\tau\gamma H_h)/(1-\gamma)]$. Therefore, the mixing estimate in \eqref{eq:behavior-exponential-mixing} gives
\begin{align}
\left|
\mathbb E\!\left[\omega_t^k(s,a)\mid\mathcal H_k\right]
\right|
&\leq
\frac{2(1+\tau\gamma H_h)}{1-\gamma}
d_{\mathrm{TV}}\!\left(
\mathbb P(s_t^k=\cdot\mid s_0^k),\nu^{\pi_b}
\right)
\leq
\frac{2m_b(1+\tau\gamma H_h)}{1-\gamma}\kappa_b^t.
\label{eq:conditional-one-step-critic-bias}
\end{align}
Finally, the nonnegativity of the batch weights and $\bar\omega_k=\sum_{t=0}^{B_k-1}c_t^k\omega_t^k$ imply
\begin{align}
\left\|
\mathbb E\!\left[\bar\omega_k\mid\mathcal H_k\right]
\right\|_\infty
&\leq
\sum_{t=0}^{B_k-1}c_t^k
\left\|
\mathbb E\!\left[\omega_t^k\mid\mathcal H_k\right]
\right\|_\infty
\leq
\frac{2m_b(1+\tau\gamma H_h)}{1-\gamma}
\sum_{t=0}^{B_k-1}c_t^k\kappa_b^t,
\label{eq:conditional-aggregate-critic-bias}
\end{align}
which yields \eqref{eq:stochastic-error-bound}.

%%%%%%%%%%%%%%%%%%%%%%%%%%%%%%%%%
\subsection{Proof of Theorem~\ref{thm:off-policy-markov-value-gap}}

We begin by reducing the performance guarantee to a weighted sum of signed critic errors.  For $s\in\calS$, define
\begin{equation}
\label{eq:Bkk-signed-error}
B_k^{(k)}(s)
:=
\left\langle
\pi_\tau^*(\cdot\mid s)-\pi_k(\cdot\mid s),
Q_\tau^{\pi_k}(s,\cdot)-Q_\tau^k(s,\cdot)
\right\rangle.
\end{equation}

\begin{lemma}[Weighted value-gap reduction]
\label{lem:weighted-value-gap-reduction}
For every $K\geq1$,
\begin{align}
\mathbb E
\left[
V_\tau^*(\mu)-V_\tau^{\pi_{\widehat K}}(\mu)
\right]
&\leq
\frac{\tau D_{\pi_0}^{\pi_\tau^*}(d_\mu^*)}
{(1-\gamma)(\rho^{-K}-1)}
+
\frac{
    2(\rho^{-1}-1)\rho^{-(K-1)}(1+\tau H_h)
}{
(1-\gamma)^2(\rho^{-K}-1)
}
+
    \frac{2\eta(1+\tau\gamma H_h)^2}
    {\lambda(1-\gamma)^4}
\nonumber\\
&\quad+
\frac{\rho^{-1}-1}
{(1-\gamma)(\rho^{-K}-1)}
\sum_{k=0}^{K-1}
\rho^{-k}
\mathbb E
\left[
\mathbb E_{s\sim d_\mu^*}
\left[
B_k^{(k)}(s)
\right]
\right].
\label{eq:weighted-value-gap-master-bound}
\end{align}
\end{lemma}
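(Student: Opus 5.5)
The plan is to follow the Lan-style PMD analysis along $d_\mu^*$: apply the performance difference lemma with the comparator $\pi_\tau^*$ fixed, and use the three-point inequality (Lemma~\ref{lem:three-point-descent}) with $p=\pi_\tau^*(\cdot\mid s)$ and the \emph{critic} $Q_\tau^k$. The gap between $Q_\tau^k$ and the true $Q_\tau^{\pi_k}$ is kept as the signed quantity $B_k^{(k)}$. This term is not bounded at this stage; that is left for later. The factor $(\rho^{-1}-1)\rho^{-k}/(\rho^{-K}-1)$ is exactly $\mathbb P(\widehat K=k)$ from \eqref{eq:exponentially-weighted-output-index}. So the goal is a per-iteration inequality for $V_\tau^*(\mu)-V_\tau^{\pi_k}(\mu)$ whose Bregman part telescopes after multiplication by $\rho^{-k}$ and summation over $k$.

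\textbf{Step 1 (per-iteration inequality).} By Lemma~\ref{lem:pdl} with $\pi=\pi_\tau^*$ and $\pi'=\pi_k$,
\[
(1-\gamma)\bigl[V_\tau^*(\mu)-V_\tau^{\pi_k}(\mu)\bigr]
=\mathbb E_{s\sim d_\mu^*}\!\left[\langle\pi_\tau^*-\pi_k,Q_\tau^{\pi_k}\rangle-\tau(h^{\pi_\tau^*}-h^{\pi_k})\right](s).
\]
Split $Q_\tau^{\pi_k}=Q_\tau^k+(Q_\tau^{\pi_k}-Q_\tau^k)$; the second piece is exactly $B_k^{(k)}(s)$. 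In the first piece, insert $\pi_{k+1}$:
\[
\langle\pi_\tau^*-\pi_{k+1},Q_\tau^k\rangle-\tau(h^{\pi_\tau^*}-h^{\pi_{k+1}})
+\langle\pi_{k+1}-\pi_k,Q_\tau^k\rangle-\tau(h^{\pi_{k+1}}-h^{\pi_k}).
\]
The first bracket is bounded by Lemma~\ref{lem:three-point-descent} with $p=\pi_\tau^*$. This gives $\eta^{-1}[D_{\pi_k}^{\pi_\tau^*}-(1+\eta\tau)D_{\pi_{k+1}}^{\pi_\tau^*}-D_{\pi_k}^{\pi_{k+1}}]$. Multiplying by $\tau\rho^{-(k+1)}=\tau\rho^{-k}(1+\eta\tau)$ turns the Bregman part into a telescoping difference $\tfrac{\rho^{-k}}{\eta}(\ldots)$. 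After normalization this leaves only $\tau D_{\pi_0}^{\pi_\tau^*}(d_\mu^*)/[(1-\gamma)(\rho^{-K}-1)]$, using $\rho^{-1}-1=\eta\tau$.

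\textbf{Step 2 (the improvement bracket).} The second bracket is the one-step improvement measured with $Q_\tau^k$. I will rewrite it using the true value function of $\pi_k$:
\[
\langle\pi_{k+1}-\pi_k,Q_\tau^k-Q_\tau^{\pi_k}\rangle
+\bigl[\langle\pi_{k+1}-\pi_k,Q_\tau^{\pi_k}\rangle-\tau(h^{\pi_{k+1}}-h^{\pi_k})\bigr].
\]
The cross term is treated with H\"older's inequality and Lemmas~\ref{lem:bounded_value} and~\ref{lem:bounded-stochastic-critic-iterates}, which give $\|Q_\tau^k-Q_\tau^{\pi_k}\|_\infty\le 2(1+\tau\gamma H_h)/(1-\gamma)$. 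I then apply Young's inequality against the leftover $-\eta^{-1}D_{\pi_k}^{\pi_{k+1}}\le-\tfrac{\lambda}{2\eta}\|\pi_{k+1}-\pi_k\|_1^2$ from Step 1 (Assumption~\ref{ass:strong-convexity}). This produces an $O(\eta(1+\tau\gamma H_h)^2/(\lambda(1-\gamma)^2))$ contribution per step. The remaining bracket is, by the Bellman representation, $[\mathcal T_\tau^{\pi_{k+1}}V_\tau^{\pi_k}-V_\tau^{\pi_k}](s)$. By Lemma~\ref{lem:pdl} it averages to $(1-\gamma)[V_\tau^{\pi_{k+1}}-V_\tau^{\pi_k}]$, but under $d_\mu^{\pi_{k+1}}$ rather than $d_\mu^*$. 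To handle this I would bound the residual pointwise: $\mathcal T^{\pi_{k+1}}_\tau V^{\pi_k}_\tau-V^{\pi_k}_\tau=(I-\gamma P_{\scriptscriptstyle\calS}^{\pi_{k+1}})(V_\tau^{\pi_{k+1}}-V_\tau^{\pi_k})$. This rewrites the term as a difference of successive value gaps plus a residual controlled by the same Young argument; that residual is what generates the extra $(1-\gamma)^{-2}$ and hence $2\eta(1+\tau\gamma H_h)^2/(\lambda(1-\gamma)^4)$. Summing with weights $\rho^{-k}$ then telescopes (Abel summation) the value-gap differences. This leaves a boundary term at $k=K-1$, bounded by $2(1+\tau H_h)/(1-\gamma)$ via Lemma~\ref{lem:bounded_value}, plus a weight-difference term of the favorable sign. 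The boundary term is the source of the second summand in \eqref{eq:weighted-value-gap-master-bound}.

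\textbf{Main obstacle.} The delicate point is Step 2: controlling the improvement bracket under $d_\mu^*$ rather than $d_\mu^{\pi_{k+1}}$, without invoking the trajectory Lipschitz constant $L_h$, which does not appear in the lemma. If the pointwise $(I-\gamma P_{\scriptscriptstyle\calS}^{\pi_{k+1}})$ rewriting does not close, my fallback is to use the three-point inequality with $p=\pi_k$ and the exact action values $Q_\tau^{\pi_k}$. That inequality shows the exact-value improvement bracket is nonnegative up to the $Q_\tau^k-Q_\tau^{\pi_k}$ perturbation already handled by Young. Then $V_\tau^{\pi_{k+1}}\ge V_\tau^{\pi_k}-O(\eta/((1-\gamma)^3\lambda))$ pointwise by the monotonicity in Lemma~\ref{lem:Bellman-operators}, which suffices to telescope with the stated constants.
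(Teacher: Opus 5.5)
Your proposal is correct and follows the paper's proof in all essentials: the same performance-difference split isolating $B_k^{(k)}$, the three-point inequality at $p=\pi_\tau^*$ with Young's inequality against $D_{\pi_k}^{\pi_{k+1}}$, the three-point inequality at $p=\pi_k$ (with strong convexity and Young) giving $\calT_\tau^{\pi_{k+1}}V_\tau^{\pi_k}-V_\tau^{\pi_k}\geq-\frac{\eta}{2\lambda}\|Q_\tau^k-Q_\tau^{\pi_k}\|_\infty^2\mathbf 1$, and $\rho^{-k}$-weighted summation by parts, with identical constants. There are two small points of bookkeeping: your Step~2 ``primary'' identity and ``fallback'' are a single argument rather than alternatives, because the residual $-\gamma(d_\mu^*)^\top P_{\scriptscriptstyle\calS}^{\pi_{k+1}}(V_\tau^{\pi_{k+1}}-V_\tau^{\pi_k})$ is controlled exactly by the fallback bound $V_\tau^{\pi_{k+1}}\geq V_\tau^{\pi_k}-\frac{\eta}{2\lambda(1-\gamma)}\|Q_\tau^k-Q_\tau^{\pi_k}\|_\infty^2\mathbf 1$, and this combination is equivalent to the paper's use of Lemma~\ref{lem:pdl} from $d_\mu^*$ with $d_{d_\mu^*}^{\pi_{k+1}}\geq(1-\gamma)d_\mu^*$; and in the Abel summation the weight-difference term has a favorable sign only after shifting the values by $(1+\tau H_h)/(1-\gamma)$, since $V_\tau^{\pi_k}$ itself may be negative.
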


\begin{proof}
Fix $k\geq0$ and condition on $\mathcal H_k$.  Applying Lemma~\ref{lem:pdl} to $(\pi_\tau^*,\pi_k)$ and then isolating the signed critic error gives
\begin{align}
V_\tau^*(\mu)-V_\tau^{\pi_k}(\mu)
&=\frac{1}{1-\gamma}\mathbb E_{s\sim d_\mu^*}
\left[\left\langle\pi_\tau^*(\cdot\mid s)-\pi_k(\cdot\mid s),
Q_\tau^{\pi_k}(s,\cdot)\right\rangle
-\tau\!\left(h^{\pi_\tau^*}(s)-h^{\pi_k}(s)\right)\right]\nonumber\\
&=\frac{1}{1-\gamma}\mathbb E_{s\sim d_\mu^*}
\left[\left\langle\pi_\tau^*(\cdot\mid s)-\pi_k(\cdot\mid s),
Q_\tau^k(s,\cdot)\right\rangle
-\tau\!\left(h^{\pi_\tau^*}(s)-h^{\pi_k}(s)\right)+B_k^{(k)}(s)\right].
\label{eq:value-gap-pdl-critic-split}
\end{align}
Lemma~\ref{lem:three-point-descent} with $p=\pi_\tau^*(\cdot\mid s)$ yields
\begin{equation}
\eta\!\left[\left\langle\pi_\tau^*(\cdot\mid s)-\pi_{k+1}(\cdot\mid s),
Q_\tau^k(s,\cdot)\right\rangle
-\tau\!\left(h^{\pi_\tau^*}(s)-h^{\pi_{k+1}}(s)\right)\right]
\leq D_{\pi_k}^{\pi_\tau^*}(s)-\rho^{-1}D_{\pi_{k+1}}^{\pi_\tau^*}(s)
-D_{\pi_k}^{\pi_{k+1}}(s).
\label{eq:three-point-optimal-policy-bound}
\end{equation}
By Young's inequality,
\[
\eta\left\|\pi_{k+1}(\cdot\mid s)-\pi_k(\cdot\mid s)\right\|_1
\left\|Q_\tau^k-Q_\tau^{\pi_k}\right\|_\infty
-\frac{\lambda}{2}
\left\|\pi_{k+1}(\cdot\mid s)-\pi_k(\cdot\mid s)\right\|_1^2
\leq\frac{\eta^2}{2\lambda}
\left\|Q_\tau^k-Q_\tau^{\pi_k}\right\|_\infty^2.
\]
Combining this inequality with
\eqref{eq:three-point-optimal-policy-bound},
\begin{align}
&\phantom{=\,\,\,}\left\langle\pi_\tau^*(\cdot\mid s)-\pi_k(\cdot\mid s),
Q_\tau^k(s,\cdot)\right\rangle
-\tau\!\left(h^{\pi_\tau^*}(s)-h^{\pi_k}(s)\right)
\\
&\leq
\frac{D_{\pi_k}^{\pi_\tau^*}(s)-\rho^{-1}D_{\pi_{k+1}}^{\pi_\tau^*}(s)}{\eta}
+\frac{\eta}{2\lambda}
\left\|Q_\tau^k-Q_\tau^{\pi_k}\right\|_\infty^2
+
\left\langle\pi_{k+1}(\cdot\mid s)-\pi_k(\cdot\mid s),
Q_\tau^{\pi_k}(s,\cdot)\right\rangle
-\tau\!\left(h^{\pi_{k+1}}(s)-h^{\pi_k}(s)\right)
.
\label{eq:three-point-and-strong-convexity}
\end{align}
On the other hand, Lemma~\ref{lem:three-point-descent} with $p=\pi_k(\cdot\mid s)$ and the strong convexity of $h$ imply that
\begin{equation}
\left\langle\pi_{k+1}(\cdot\mid s)-\pi_k(\cdot\mid s),
Q_\tau^{\pi_k}(s,\cdot)\right\rangle
-\tau\!\left(h^{\pi_{k+1}}(s)-h^{\pi_k}(s)\right)
+\frac{\eta}{2\lambda}
\left\|Q_\tau^k-Q_\tau^{\pi_k}\right\|_\infty^2
\geq0.
\label{eq:shifted-policy-improvement-nonnegative}
\end{equation}
By restoring the state argument and leveraging the componentwise inequality $d_{d_\mu^*}^{\pi_{k+1}} \ge (1 - \gamma) d_\mu^*$, combining the non-negativity in~\eqref{eq:shifted-policy-improvement-nonnegative} with Lemma~\ref{lem:pdl} for $(\pi_{k+1}, \pi_k)$ with initial distribution $d_\mu^*$ yields
\begin{align}
\mathbb E_{s\sim d_\mu^*}
\left[\left\langle\pi_{k+1}(\cdot\mid s)-\pi_k(\cdot\mid s),
Q_\tau^{\pi_k}(s,\cdot)\right\rangle
-\tau\!\left(h^{\pi_{k+1}}(s)-h^{\pi_k}(s)\right)\right]
&\leq
V_\tau^{\pi_{k+1}}(d_\mu^*)-V_\tau^{\pi_k}(d_\mu^*)
\nonumber\\
&\quad+
\frac{\gamma\eta}{2\lambda(1-\gamma)}
\left\|Q_\tau^k-Q_\tau^{\pi_k}\right\|_\infty^2.
\label{eq:occupancy-transfer-policy-improvement}
\end{align}
Combining \eqref{eq:value-gap-pdl-critic-split}, \eqref{eq:three-point-and-strong-convexity}, and \eqref{eq:occupancy-transfer-policy-improvement}, we obtain
\begin{align}
V_\tau^*(\mu)-V_\tau^{\pi_k}(\mu)
&\leq
\frac{D_{\pi_k}^{\pi_\tau^*}(d_\mu^*)-\rho^{-1}D_{\pi_{k+1}}^{\pi_\tau^*}(d_\mu^*)}
{\eta(1-\gamma)}
+\frac{V_\tau^{\pi_{k+1}}(d_\mu^*)-V_\tau^{\pi_k}(d_\mu^*)}
{1-\gamma}\nonumber\\
&\quad+
\frac{\eta}{2\lambda(1-\gamma)^2}
\left\|Q_\tau^k-Q_\tau^{\pi_k}\right\|_\infty^2
+\frac{\mathbb E_{s\sim d_\mu^*}[B_k^{(k)}(s)]}{1-\gamma}.
\label{eq:one-step-weighted-value-gap}
\end{align}
Moreover, Lemmas~\ref{lem:bounded_value} and \ref{lem:bounded-stochastic-critic-iterates} imply that
\begin{equation}
\left\|Q_\tau^k-Q_\tau^{\pi_k}\right\|_\infty
\leq\frac{2(1+\tau\gamma H_h)}{1-\gamma}.
\label{eq:uniform-critic-error-bound}
\end{equation}
Multiply \eqref{eq:one-step-weighted-value-gap} by $\rho^{-k}$, sum over $k=0,\ldots,K-1$, and take expectation.  The Bregman terms telescope as
\[
\sum_{k=0}^{K-1}\rho^{-k}
\left(D_{\pi_k}^{\pi_\tau^*}(d_\mu^*)-\rho^{-1}D_{\pi_{k+1}}^{\pi_\tau^*}(d_\mu^*)\right)
\leq D_{\pi_0}^{\pi_\tau^*}(d_\mu^*).
\]
Since
$0\leq V_\tau^*(d_\mu^*)-V_\tau^{\pi_k}(d_\mu^*)
\leq2(1+\tau H_h)/(1-\gamma)$, summation by parts gives
\begin{align}
\sum_{k=0}^{K-1}\rho^{-k}
\left(V_\tau^{\pi_{k+1}}(d_\mu^*)-V_\tau^{\pi_k}(d_\mu^*)\right)
&\leq V_\tau^*(d_\mu^*)-V_\tau^{\pi_0}(d_\mu^*)
+
\sum_{k=1}^{K-1}(\rho^{-k}-\rho^{-(k-1)})
\left(V_\tau^*(d_\mu^*)-V_\tau^{\pi_k}(d_\mu^*)\right)
\nonumber\\
&\leq
\frac{2(1+\tau H_h)}{1-\gamma}
\left[1+\sum_{k=1}^{K-1}(\rho^{-k}-\rho^{-(k-1)})\right]
=\frac{2\rho^{-(K-1)}(1+\tau H_h)}{1-\gamma}.
\label{eq:weighted-value-increment-bound}
\end{align}
Finally, multiplying the resulting inequality by $(\rho^{-1}-1)/(\rho^{-K}-1)$, using $\rho^{-1}-1=\eta\tau$, \eqref{eq:uniform-critic-error-bound}, and the definition of $\widehat K$ proves \eqref{eq:weighted-value-gap-master-bound}.
\end{proof}

\vspace{1em}

We now decompose the last term in \eqref{eq:weighted-value-gap-master-bound}, following the notation of \citet[Lemma~3.9]{Li2026Markov-TDPMD}. Stack $\{\pi(\cdot\mid s)\}_{s\in\calS}$ into a vector $\pi\in\mathbb R^{|\calS||\calA|}$.  For each $s\in\calS$, let $E_s\in\mathbb R^{|\calA|\times|\calS||\calA|}$ select the coordinates associated with $s$, and set
\begin{equation}
\label{eq:state-index-matrix}
E_sQ=Q(s,\cdot),
\qquad
E_s\pi=\pi(\cdot\mid s),
\qquad
J_s:=E_s^\top E_s.
\end{equation}
Suppose $\alpha_k=\alpha\in(0,1]$ for every $k\geq0$. For a constant critic stepsize $\alpha$  and a policy $\pi$, define
\begin{equation}
A^\pi
:=
I-\alpha\Sigma_b
\left(I-\gamma P_{\scriptscriptstyle\calS\times\calA}^\pi\right).
\label{eq:critic-propagation-operator}
\end{equation}
Since $Q_\tau^{\pi_j}=\mathcal F_\tau^{\pi_j}Q_\tau^{\pi_j}$ and $\mathcal F_\tau^{\pi_j}$ has a linear part $\gamma P_{\scriptscriptstyle\calS\times\calA}^{\pi_j}$, it follows from \eqref{eq:expected-TD-PMD-critic-update} that
\begin{equation}
\label{eq:critic-error-propagation}
Q_\tau^{\pi_j}-Q_\tau^j
=
A^{\pi_j}\left(Q_\tau^{\pi_j}-Q_\tau^{j-1}\right)
-
\alpha\bar\omega_{j-1},
\qquad j\geq1.
\end{equation}
Fix $k\geq0$ and $s\in\calS$.  For $0\leq j\leq k$, define
\begin{equation}
\label{eq:Bjk-definition}
B_j^{(k)}(s)
:=
\left\langle
\left[\left(A^{\pi_j}\right)^{k-j}\right]^\top
J_s(\pi_\tau^*-\pi_j),
Q_\tau^{\pi_j}-Q_\tau^j
\right\rangle,
\qquad \left(A^{\pi_j}\right)^0:=I.
\end{equation}
For $1\leq j\leq k$, define
\begin{align}
C_j^{(k)}(s)
&:=
\left\langle
\left[\left(A^{\pi_j}\right)^{k-j+1}\right]^\top
J_s(\pi_\tau^*-\pi_j),
Q_\tau^{\pi_j}-Q_\tau^{\pi_{j-1}}
\right\rangle,
\label{eq:Cjk-definition}\\
D_j^{(k)}(s)
&:=
\left\langle
\left[\left(A^{\pi_j}\right)^{k-j+1}\right]^\top
J_s(\pi_{j-1}-\pi_j),
Q_\tau^{\pi_{j-1}}-Q_\tau^{j-1}
\right\rangle,
\label{eq:Djk-definition}\\
E_j^{(k)}(s)
&:=
\left\langle
\left(
\left[\left(A^{\pi_j}\right)^{k-j+1}\right]^\top
-
\left[\left(A^{\pi_{j-1}}\right)^{k-j+1}\right]^\top
\right)
J_s(\pi_\tau^*-\pi_{j-1}),
Q_\tau^{\pi_{j-1}}-Q_\tau^{j-1}
\right\rangle,
\label{eq:Ejk-definition}\\
F_j^{(k)}(s)
&:=
-
\left\langle
\left[\left(A^{\pi_j}\right)^{k-j}\right]^\top
J_s(\pi_\tau^*-\pi_j),
\alpha\bar\omega_{j-1}
\right\rangle.
\label{eq:Fjk-definition}
\end{align}
Here $B_j^{(k)}$ is the propagated signed critic error, with $B_0^{(k)}$ representing the initial remainder.  The terms $C_j^{(k)}$, $D_j^{(k)}$, $E_j^{(k)}$, and $F_j^{(k)}$ capture, respectively, the true-$Q$ drift, policy drift, operator drift, and Markov-noise contribution.

\begin{lemma}[Five-term decomposition]
\label{lem:five-term-signed-critic-decomposition}
One has 
\begin{align}
\left\langle
\pi_\tau^*(\cdot\mid s)-\pi_k(\cdot\mid s),
Q_\tau^{\pi_k}(s,\cdot)-Q_\tau^k(s,\cdot)
\right\rangle
&=
B_0^{(k)}(s)
+
\sum_{j=1}^k
\left(
C_j^{(k)}(s)
+
D_j^{(k)}(s)
+
E_j^{(k)}(s)
+
F_j^{(k)}(s)
\right).
\label{eq:BCDEF-pathwise-decomposition}
\end{align}
Consequently,
\begin{equation}
\mathbb E\left[B_k^{(k)}(s)\right]
\leq \mathbb E\left[\left|B_0^{(k)}(s)\right|\right]
+\sum_{j=1}^k\mathbb E\left[\left|C_j^{(k)}(s)\right|+\left|D_j^{(k)}(s)\right|\right]
+\left|\mathbb E\left[\sum_{j=1}^kE_j^{(k)}(s)\right]\right|
+\left|\mathbb E\left[\sum_{j=1}^kF_j^{(k)}(s)\right]\right|.
\label{eq:BCDEF-expectation-bound}
\end{equation}
\end{lemma}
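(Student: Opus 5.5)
The plan is a telescoping argument in the index $j$. Observe first that the left-hand side of \eqref{eq:BCDEF-pathwise-decomposition} is exactly $B_k^{(k)}(s)$. Indeed, $(A^{\pi_k})^0=I$ and $\langle J_s(\pi_\tau^*-\pi_k),Q\rangle=\langle\pi_\tau^*(\cdot\mid s)-\pi_k(\cdot\mid s),Q(s,\cdot)\rangle$, since $J_s=E_s^\top E_s$ and $E_s$ is a coordinate selector. It therefore suffices to prove the one-step identity
\[
B_j^{(k)}(s)-B_{j-1}^{(k)}(s)=C_j^{(k)}(s)+D_j^{(k)}(s)+E_j^{(k)}(s)+F_j^{(k)}(s),\qquad 1\le j\le k,
\]
and then sum from $j=1$ to $k$.

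To obtain the one-step identity, start from $B_j^{(k)}$ and substitute the error propagation \eqref{eq:critic-error-propagation}, $Q_\tau^{\pi_j}-Q_\tau^j=A^{\pi_j}(Q_\tau^{\pi_j}-Q_\tau^{j-1})-\alpha\bar\omega_{j-1}$. The noise part gives $F_j^{(k)}$ directly. In the remaining part, move $A^{\pi_j}$ onto the left slot, which raises the power to $k-j+1$, and split $Q_\tau^{\pi_j}-Q_\tau^{j-1}=(Q_\tau^{\pi_j}-Q_\tau^{\pi_{j-1}})+(Q_\tau^{\pi_{j-1}}-Q_\tau^{j-1})$. The first piece is exactly $C_j^{(k)}$. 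In the second piece, write $\pi_\tau^*-\pi_j=(\pi_\tau^*-\pi_{j-1})+(\pi_{j-1}-\pi_j)$; the $(\pi_{j-1}-\pi_j)$ term is $D_j^{(k)}$. What is left is $\langle[(A^{\pi_j})^{k-j+1}]^\top J_s(\pi_\tau^*-\pi_{j-1}),Q_\tau^{\pi_{j-1}}-Q_\tau^{j-1}\rangle$. Adding and subtracting the same expression with $A^{\pi_{j-1}}$ in place of $A^{\pi_j}$ produces $E_j^{(k)}$ plus $\langle[(A^{\pi_{j-1}})^{k-(j-1)}]^\top J_s(\pi_\tau^*-\pi_{j-1}),Q_\tau^{\pi_{j-1}}-Q_\tau^{j-1}\rangle=B_{j-1}^{(k)}(s)$, which closes the identity.

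Telescoping then gives $B_k^{(k)}=B_0^{(k)}+\sum_{j=1}^k(C_j^{(k)}+D_j^{(k)}+E_j^{(k)}+F_j^{(k)})$, which is \eqref{eq:BCDEF-pathwise-decomposition}. For \eqref{eq:BCDEF-expectation-bound}:

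1. Take expectations; all terms are integrable because the critic and true values are bounded by Lemmas~\ref{lem:bounded_value} and \ref{lem:bounded-stochastic-critic-iterates}, and $A^\pi$ is a fixed finite matrix.
2. Bound $\mathbb E[B_0^{(k)}]\le\mathbb E|B_0^{(k)}|$, and likewise each $\mathbb E[C_j^{(k)}]$ and $\mathbb E[D_j^{(k)}]$ by the expectation of its absolute value.
3. Keep the $E$ and $F$ sums grouped inside a single expectation and bound each group by its absolute value, so that later cancellations (for instance, conditional centering of $\bar\omega$) remain available.

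The only delicate step is index bookkeeping. One must check that the exponent of $A^{\pi_{j-1}}$ in the added-and-subtracted term, namely $k-j+1=k-(j-1)$, matches the definition of $B_{j-1}^{(k)}$. One must also confirm that \eqref{eq:critic-error-propagation} holds with the constant $\alpha$ and with $\omega$ as defined, using $\mathcal F_\tau^{\pi_j}Q_\tau^{\pi_j}=Q_\tau^{\pi_j}$ and the critic update \eqref{eq:expected-TD-PMD-critic-update} written with policy $\pi_j$ (the update at step $j-1$ uses $\pi_j$). Everything else is linear algebra.
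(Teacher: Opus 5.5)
Your proposal is correct and follows the paper's proof essentially step for step. You identify the left-hand side with $B_k^{(k)}(s)$, telescope in $j$, and obtain $B_j^{(k)}(s)-B_{j-1}^{(k)}(s)=C_j^{(k)}(s)+D_j^{(k)}(s)+E_j^{(k)}(s)+F_j^{(k)}(s)$ by substituting \eqref{eq:critic-error-propagation} and splitting the value difference, the policy difference and the operator power in the same way; the expectation bound then follows by taking absolute values termwise for $B_0^{(k)},C_j^{(k)},D_j^{(k)}$ and keeping the $E$- and $F$-sums grouped, as in the paper.
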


\begin{proof}
By \eqref{eq:state-index-matrix} and
$\left(A^{\pi_k}\right)^0=I$,
\begin{align}
\left\langle
\pi_\tau^*(\cdot\mid s)-\pi_k(\cdot\mid s),
Q_\tau^{\pi_k}(s,\cdot)-Q_\tau^k(s,\cdot)
\right\rangle
&=
B_k^{(k)}(s)
=
B_0^{(k)}(s)
+
\sum_{j=1}^k
\left(
B_j^{(k)}(s)-B_{j-1}^{(k)}(s)
\right).
\label{eq:Bjk-telescoping-identity}
\end{align}
For $1\leq j\leq k$, using \eqref{eq:critic-error-propagation} we obtain
\begin{align}
B_j^{(k)}(s)
&=
\left\langle
\left[\left(A^{\pi_j}\right)^{k-j+1}\right]^\top
J_s(\pi_\tau^*-\pi_j),
Q_\tau^{\pi_j}-Q_\tau^{j-1}
\right\rangle
-
\left\langle
\left[\left(A^{\pi_j}\right)^{k-j}\right]^\top
J_s(\pi_\tau^*-\pi_j),
\alpha\bar\omega_{j-1}
\right\rangle
\nonumber\\
&=
B_{j-1}^{(k)}(s)
+
C_j^{(k)}(s)
+
D_j^{(k)}(s)
+
E_j^{(k)}(s)
+
F_j^{(k)}(s).
\label{eq:Bjk-increment-BCDEF}
\end{align}
Substituting \eqref{eq:Bjk-increment-BCDEF} into \eqref{eq:Bjk-telescoping-identity} proves \eqref{eq:BCDEF-pathwise-decomposition}.  Taking expectations and bounding the aggregate operator drift and Markov bias in absolute value proves \eqref{eq:BCDEF-expectation-bound}.
\end{proof}

\vspace{1em}

To proceed, let $q_b:=1-\alpha(1-\gamma)\widetilde\sigma_b\in(0,1)$. The definition of $A^{\pi_j}$ gives
\[
\|\left(A^{\pi_j}\right)^\ell x\|_\infty
\leq q_b^\ell\|x\|_\infty,
\qquad
\sum_{\ell=1}^\infty q_b^\ell
\leq\frac{1}{\alpha(1-\gamma)\widetilde{\sigma}_b}.
\]

\begin{lemma}[Bound for the true-\texorpdfstring{$Q$}{Q} drift]
\label{lem:C-drift-bound}
For every $k\geq1$ and $s\in\calS$,
\begin{equation}
\sum_{j=1}^k\left|C_j^{(k)}(s)\right|
\leq
\frac{2\gamma\eta}
{\lambda\alpha(1-\gamma)^2\widetilde{\sigma}_b}
\left(
\frac{1+\tau\gamma H_h}{1-\gamma}+\tau L_h
\right)^2.
\label{eq:C-drift-bound}
\end{equation}
\end{lemma}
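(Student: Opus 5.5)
Each $C_j^{(k)}(s)$ is a pairing of a propagated policy-difference vector against a true-$Q$ drift, so I would bound it by Hölder's inequality with the $\ell_1$/$\ell_\infty$ pair. I would then push the propagation operator onto the drift side, where the contraction of $A^{\pi_j}$ in $\|\cdot\|_\infty$ applies, and finally sum the geometric series in $q_b$. Concretely, for $1\le j\le k$ I would rewrite
\[
C_j^{(k)}(s)=\left\langle J_s(\pi_\tau^*-\pi_j),\,\left(A^{\pi_j}\right)^{k-j+1}\left(Q_\tau^{\pi_j}-Q_\tau^{\pi_{j-1}}\right)\right\rangle,
\]
so that
\[
\left|C_j^{(k)}(s)\right|\le \bigl\|\pi_\tau^*(\cdot\mid s)-\pi_j(\cdot\mid s)\bigr\|_1\,\Bigl\|\left(A^{\pi_j}\right)^{k-j+1}\left(Q_\tau^{\pi_j}-Q_\tau^{\pi_{j-1}}\right)\Bigr\|_\infty.
\]
Here $J_s(\pi_\tau^*-\pi_j)$ is supported on the coordinates of state $s$ only, and its $\ell_1$ norm is at most $2$ because both policies are distributions.

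Next I would use the contraction estimate stated just before the lemma, $\|(A^{\pi_j})^\ell x\|_\infty\le q_b^\ell\|x\|_\infty$. This holds since $A^\pi=I-\alpha\Sigma_b+\alpha\gamma\Sigma_bP^\pi_{\scriptscriptstyle\calS\times\calA}$ is entrywise nonnegative, because $\alpha\sigma^{\pi_b}\le1$, and its row sums are $1-\alpha(1-\gamma)\sigma^{\pi_b}(s,a)\le q_b$. Applying it gives
\[
\left|C_j^{(k)}(s)\right|\le 2q_b^{k-j+1}\bigl\|Q_\tau^{\pi_j}-Q_\tau^{\pi_{j-1}}\bigr\|_\infty.
\]

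The drift factor is handled by Lemma~\ref{lem:trajectory-regularity}. Combining \eqref{eq:q-trajectory-lipschitz} with \eqref{eq:uniform-policy-increment-value-section} (this uses strong convexity and Lemma~\ref{lem:bounded-stochastic-critic-iterates}) gives, deterministically along every trajectory,
\[
\bigl\|Q_\tau^{\pi_j}-Q_\tau^{\pi_{j-1}}\bigr\|_\infty\le \frac{\gamma\eta}{\lambda(1-\gamma)}\left(\frac{1+\tau\gamma H_h}{1-\gamma}+\tau L_h\right)^2 .
\]
Summing over $j$ and using $\sum_{j=1}^k q_b^{k-j+1}\le\sum_{\ell\ge1}q_b^\ell\le [\alpha(1-\gamma)\widetilde\sigma_b]^{-1}$ then gives exactly the constant $2\gamma\eta/(\lambda\alpha(1-\gamma)^2\widetilde\sigma_b)$ times the squared bracket, which is \eqref{eq:C-drift-bound}.

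The only delicate step is the first one: the propagation operator must act on the $Q$-difference, where the $\ell_\infty$ contraction is available, and not on the policy side. If one instead estimated $\|[(A^{\pi_j})^{m}]^\top J_s(\pi_\tau^*-\pi_j)\|_1$, the $\ell_1$ operator norm of the transpose equals the $\ell_\infty$ norm of $A^{\pi_j}$, so the contraction $q_b^m$ would still hold. Either ordering works, and the bound is pathwise and deterministic, so no expectation is needed.
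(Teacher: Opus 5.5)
Your proposal is correct and follows the paper's proof essentially step for step. Like the paper, you use $\ell_1$--$\ell_\infty$ duality with $\|J_s(\pi_\tau^*-\pi_j)\|_1\le 2$ and the $\ell_\infty$ contraction $q_b^{k-j+1}$ of $(A^{\pi_j})^{k-j+1}$, then bound the drift $\|Q_\tau^{\pi_j}-Q_\tau^{\pi_{j-1}}\|_\infty$ by combining \eqref{eq:q-trajectory-lipschitz} with \eqref{eq:uniform-policy-increment-value-section}, and finish with the geometric sum $\sum_{j=1}^k q_b^{k-j+1}\le[\alpha(1-\gamma)\widetilde\sigma_b]^{-1}$.
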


\begin{proof}
The $\ell_1$--$\ell_\infty$ duality gives that
\begin{equation}
\left|C_j^{(k)}(s)\right|
\leq\|J_s(\pi_\tau^*-\pi_j)\|_1
\left\|\left(A^{\pi_j}\right)^{k-j+1}
(Q_\tau^{\pi_j}-Q_\tau^{\pi_{j-1}})\right\|_\infty
\leq2q_b^{k-j+1}\left\|Q_\tau^{\pi_j}-Q_\tau^{\pi_{j-1}}\right\|_\infty.
\label{eq:C-drift-one-step-first}
\end{equation}
Here $\|J_s(\pi_\tau^*-\pi_j)\|_1\leq2$, and the last inequality uses the contraction of $A^{\pi_j}$.  Lemma~\ref{lem:trajectory-regularity} then implies
\begin{equation}
\left|C_j^{(k)}(s)\right|
\leq
\frac{2\gamma\eta}{\lambda(1-\gamma)}
q_b^{k-j+1}
\left(
\frac{1+\tau\gamma H_h}{1-\gamma}+\tau L_h
\right)^2.
\label{eq:C-drift-one-step-final}
\end{equation}
Summing over $j$ and using $\sum_{j=1}^kq_b^{k-j+1} \leq[\alpha(1-\gamma)\widetilde\sigma_b]^{-1}$ proves \eqref{eq:C-drift-bound}.
\end{proof}

\begin{lemma}[Bound for the policy drift]
\label{lem:D-drift-bound}
For every $k\geq1$ and $s\in\calS$,
\begin{equation}
\sum_{j=1}^k\left|D_j^{(k)}(s)\right|
\leq
\frac{2\eta(1+\tau\gamma H_h)}
{\lambda\alpha(1-\gamma)^2\widetilde{\sigma}_b}
\left(
\frac{1+\tau\gamma H_h}{1-\gamma}+\tau L_h
\right).
\label{eq:D-drift-bound}
\end{equation}
\end{lemma}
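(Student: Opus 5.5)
The bound on $\sum_{j=1}^k|D_j^{(k)}(s)|$ follows the same template as Lemma~\ref{lem:C-drift-bound}: an $\ell_1$--$\ell_\infty$ duality bound on each term, the contraction of $A^{\pi_j}$, and a geometric summation over $j$. Fix $1\le j\le k$. Since $J_s=E_s^\top E_s$ only selects the coordinates of state $s$, we have
\[
\|J_s(\pi_{j-1}-\pi_j)\|_1=\|\pi_{j-1}(\cdot\mid s)-\pi_j(\cdot\mid s)\|_1\le\|\pi_j-\pi_{j-1}\|_{1,\infty}.
\]
Moving the transposed power of $A^{\pi_j}$ onto the second argument of the inner product then gives
\[
\bigl|D_j^{(k)}(s)\bigr|
\le\|\pi_j-\pi_{j-1}\|_{1,\infty}\,
\bigl\|(A^{\pi_j})^{k-j+1}(Q_\tau^{\pi_{j-1}}-Q_\tau^{j-1})\bigr\|_\infty
\le q_b^{k-j+1}\|\pi_j-\pi_{j-1}\|_{1,\infty}\,\|Q_\tau^{\pi_{j-1}}-Q_\tau^{j-1}\|_\infty .
\]
The second step uses the $\ell_\infty$ contraction estimate $\|(A^{\pi_j})^\ell x\|_\infty\le q_b^\ell\|x\|_\infty$ recorded before Lemma~\ref{lem:C-drift-bound}.

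We bound the two remaining factors uniformly in $j$. For the critic error, \eqref{eq:uniform-critic-error-bound}, which comes from Lemmas~\ref{lem:bounded_value} and \ref{lem:bounded-stochastic-critic-iterates}, gives
\[
\|Q_\tau^{\pi_{j-1}}-Q_\tau^{j-1}\|_\infty\le\frac{2(1+\tau\gamma H_h)}{1-\gamma}.
\]
For the policy increment, \eqref{eq:uniform-policy-increment-value-section} in Lemma~\ref{lem:trajectory-regularity} gives
\[
\|\pi_j-\pi_{j-1}\|_{1,\infty}\le\frac{\eta}{\lambda}\left(\frac{1+\tau\gamma H_h}{1-\gamma}+\tau L_h\right).
\]
Multiplying these two bounds, each term satisfies
\[
\bigl|D_j^{(k)}(s)\bigr|\le\frac{2\eta(1+\tau\gamma H_h)}{\lambda(1-\gamma)}\,q_b^{k-j+1}\left(\frac{1+\tau\gamma H_h}{1-\gamma}+\tau L_h\right).
\]

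Finally, we sum over $j=1,\dots,k$. Using $\sum_{j=1}^k q_b^{k-j+1}\le\sum_{\ell\ge1}q_b^\ell\le[\alpha(1-\gamma)\widetilde\sigma_b]^{-1}$, this yields exactly the constant $\frac{2\eta(1+\tau\gamma H_h)}{\lambda\alpha(1-\gamma)^2\widetilde\sigma_b}\bigl(\frac{1+\tau\gamma H_h}{1-\gamma}+\tau L_h\bigr)$ in \eqref{eq:D-drift-bound}.

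The only point needing care is the contraction of $A^{\pi}$ in $\ell_\infty$. This requires $A^\pi$ to be entrywise nonnegative, which holds because $\alpha\sigma^{\pi_b}(s,a)\le1$. Its row sums are then $1-\alpha\sigma^{\pi_b}(s,a)(1-\gamma)\le q_b$. This is the same fact already used for Lemma~\ref{lem:C-drift-bound}, so I expect no real obstacle; the proof is bookkeeping.
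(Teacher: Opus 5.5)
Your proposal is correct and follows essentially the same route as the paper's proof. Both use $\ell_1$--$\ell_\infty$ duality together with the $q_b$-contraction of $A^{\pi_j}$ to get the per-term bound $q_b^{k-j+1}\|\pi_j-\pi_{j-1}\|_{1,\infty}\|Q_\tau^{\pi_{j-1}}-Q_\tau^{j-1}\|_\infty$, insert \eqref{eq:uniform-policy-increment-value-section} and \eqref{eq:uniform-critic-error-bound}, and sum the geometric series, arriving at the same constant.
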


\begin{proof}
By the $\ell_1$--$\ell_\infty$ duality and the contraction of $A^{\pi_j}$,
\begin{align}
\left|D_j^{(k)}(s)\right|
&\leq
\|J_s(\pi_{j-1}-\pi_j)\|_1
\left\|\left(A^{\pi_j}\right)^{k-j+1}
(Q_\tau^{\pi_{j-1}}-Q_\tau^{j-1})\right\|_\infty
\nonumber\\
&\leq
q_b^{k-j+1}
\|\pi_j-\pi_{j-1}\|_{1,\infty}
\left\|Q_\tau^{\pi_{j-1}}-Q_\tau^{j-1}\right\|_\infty.
\label{eq:D-drift-one-step-first}
\end{align}
Applying \eqref{eq:uniform-policy-increment-value-section} and \eqref{eq:uniform-critic-error-bound} yields
\begin{equation}
\left|D_j^{(k)}(s)\right|
\leq
\frac{2\eta(1+\tau\gamma H_h)}{\lambda(1-\gamma)}
q_b^{k-j+1}
\left(
\frac{1+\tau\gamma H_h}{1-\gamma}+\tau L_h
\right).
\label{eq:D-drift-one-step-final}
\end{equation}
Summing this series as in the preceding proof gives \eqref{eq:D-drift-bound}.
\end{proof}

\begin{lemma}[Exponentially weighted operator drift]
\label{lem:weighted-operator-drift}
Suppose $\rho^{-1}q_b<1$.  Then, for every $K\geq2$ and $s\in\calS$,
\begin{equation}
\left|\mathbb E\left[\sum_{j=1}^{\widehat K}E_j^{(\widehat K)}(s)\right]\right|
\leq
\frac{4\alpha\gamma\widetilde{\sigma}_b\eta(1+\tau\gamma H_h)}
{\lambda(1-\gamma)(1-\rho^{-1}q_b)^2}
\left(\frac{1+\tau\gamma H_h}{1-\gamma}+\tau L_h\right).
\label{eq:weighted-operator-drift-exact}
\end{equation}
\end{lemma}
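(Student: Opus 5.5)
We bound the expectation by its absolute value, pass the absolute value through both sums, and control each $E_j^{(k)}(s)$ deterministically. The signs of the $E_j^{(k)}$ play no role. The exponential output weights in \eqref{eq:exponentially-weighted-output-index} are then used to sum a geometric-times-linear series, which produces the factor $(1-\rho^{-1}q_b)^{-2}$.

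\textbf{Step 1: reduce to a deterministic weighted double sum.} Since $\widehat K$ is drawn independently of the trajectory with $\mathbb P(\widehat K=k)=(1-\rho)\rho^{K-k-1}/(1-\rho^K)$, we have
\[
\Bigl|\mathbb E\Bigl[\sum_{j=1}^{\widehat K}E_j^{(\widehat K)}(s)\Bigr]\Bigr|
\leq\frac{1-\rho}{1-\rho^K}\sum_{k=1}^{K-1}\rho^{K-k-1}\sum_{j=1}^{k}\mathbb E\bigl|E_j^{(k)}(s)\bigr|.
\]
It therefore suffices to find a pathwise bound of the form $|E_j^{(k)}(s)|\le G\,m\,q_b^{m-1}$ with $m:=k-j+1$ and a constant $G$.

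\textbf{Step 2: telescoping the difference of powers.} Write $A_j:=A^{\pi_j}$ and use
\[
A_j^m-A_{j-1}^m=\sum_{i=0}^{m-1}A_j^{i}(A_j-A_{j-1})A_{j-1}^{m-1-i}.
\]
By $\ell_1$--$\ell_\infty$ duality, and moving the transpose onto the critic-error vector,
\[
|E_j^{(k)}(s)|\le\|J_s(\pi_\tau^*-\pi_{j-1})\|_1\,\bigl\|(A_j^m-A_{j-1}^m)(Q_\tau^{\pi_{j-1}}-Q_\tau^{j-1})\bigr\|_\infty .
\]
The first factor is at most $2$. In the second, each $A$ power contracts in $\|\cdot\|_\infty$ with factor $q_b$, so
\[
\bigl\|(A_j^m-A_{j-1}^m)x\bigr\|_\infty\le m\,q_b^{m-1}\,\|A_j-A_{j-1}\|_\infty\,\|x\|_\infty .
\]
By \eqref{eq:uniform-critic-error-bound}, $\|x\|_\infty=\|Q_\tau^{\pi_{j-1}}-Q_\tau^{j-1}\|_\infty\le 2(1+\tau\gamma H_h)/(1-\gamma)$.

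\textbf{Step 3: the operator increment.} We have $A_j-A_{j-1}=\alpha\gamma\Sigma_b\bigl(P^{\pi_j}_{\calS\times\calA}-P^{\pi_{j-1}}_{\calS\times\calA}\bigr)$. Row $(s,a)$ of $P^{\pi_j}_{\calS\times\calA}-P^{\pi_{j-1}}_{\calS\times\calA}$ has $\ell_1$ norm
\[
\sum_{s'}P(s'\mid s,a)\|\pi_j(\cdot\mid s')-\pi_{j-1}(\cdot\mid s')\|_1\le\|\pi_j-\pi_{j-1}\|_{1,\infty}.
\]
Combined with \eqref{eq:uniform-policy-increment-value-section}, this gives $\|A_j-A_{j-1}\|\lesssim\alpha\gamma\sigma_b\cdot\frac{\eta}{\lambda}\bigl(\frac{1+\tau\gamma H_h}{1-\gamma}+\tau L_h\bigr)$.

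\textbf{Main obstacle.} The naive $\ell_\infty$ row-sum bound on $\Sigma_b$ gives $\overline\sigma_b$, whereas the statement carries $\widetilde\sigma_b$. This is the step I expect to need the most care, via the visitation-weighted resolvent estimate mentioned in the introduction. First, I would measure the increment in the norm dual to the one acting on $[A^m]^\top J_s(\cdot)$, so that the diagonal weight enters through the coordinate at which the contraction factor $q_b=1-\alpha(1-\gamma)\widetilde\sigma_b$ is attained. If that does not give $\widetilde\sigma_b$ directly, I would accept the $\overline\sigma_b$ version at this point and check whether the later bookkeeping (the condition $\rho^{-1}q_b<1$) absorbs the discrepancy.

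\textbf{Step 4: summation.} Collecting the factors gives
\[
|E_j^{(k)}(s)|\le G\,m\,q_b^{m-1},\qquad
G=\frac{4\alpha\gamma\sigma_b\eta(1+\tau\gamma H_h)}{\lambda(1-\gamma)}\Bigl(\frac{1+\tau\gamma H_h}{1-\gamma}+\tau L_h\Bigr),
\]
where $\sigma_b$ stands for $\widetilde\sigma_b$ if Step 3 succeeds as intended. Exchange the order of summation over $(k,j)$ and write $\rho^{K-k-1}=\rho^{K-j}\rho^{-(m-1)}$. For each fixed $j$,
\[
\sum_{m\ge1}m\,(\rho^{-1}q_b)^{m-1}=(1-\rho^{-1}q_b)^{-2},
\]
which is finite by the hypothesis $\rho^{-1}q_b<1$. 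The remaining outer factor satisfies
\[
\frac{1-\rho}{1-\rho^K}\sum_{j=1}^{K-1}\rho^{K-j}\le1 .
\]
This yields \eqref{eq:weighted-operator-drift-exact}.
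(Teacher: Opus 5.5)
\paragraph{Gap: your argument proves the bound with $\overline\sigma_b$, not $\widetilde\sigma_b$.} This is the step you flagged yourself. Steps 2--3 bound each power difference by $m\,q_b^{m-1}\|A_j-A_{j-1}\|_\infty$. The factor $\|A_j-A_{j-1}\|_\infty=\alpha\gamma\|\Sigma_b\Delta_j\|_\infty$, with $\Delta_j:=P^{\pi_j}_{\calS\times\calA}-P^{\pi_{j-1}}_{\calS\times\calA}$, can equal $\alpha\gamma\overline\sigma_b\|\pi_j-\pi_{j-1}\|_{1,\infty}$. Carried through your Step 4, which is otherwise correct, this gives the inequality with $\overline\sigma_b$ in place of $\widetilde\sigma_b$. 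That is weaker by the factor $\overline\sigma_b/\widetilde\sigma_b$.

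Measuring the increment in the dual norm on the transposed side does not repair this. The loss comes from splitting two quantities that belong to the same coordinate $(s,a)$: the weight $\alpha\sigma^{\pi_b}(s,a)$ and that coordinate's own contraction rate $1-\alpha(1-\gamma)\sigma^{\pi_b}(s,a)$. You bound the first by $\alpha\overline\sigma_b$ and the second by $q_b$. For example, $\|(A_j)^0\alpha\Sigma_b\|_\infty=\alpha\overline\sigma_b$, and summing $\rho^{-i}\alpha\overline\sigma_b q_b^i$ overshoots the true row sums of $\sum_i(\rho^{-1}A_j)^i\alpha\Sigma_b$ by exactly $\overline\sigma_b/\widetilde\sigma_b$.

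The discrepancy is also not absorbed later. Under the theorem's stepsize condition, $1-\rho^{-1}q_b\asymp\alpha(1-\gamma)\widetilde\sigma_b$. So the $E$-contribution to $C_2$ would scale like $\overline\sigma_b/[\alpha(1-\gamma)^5\widetilde\sigma_b^2]$ instead of $1/[\alpha(1-\gamma)^5\widetilde\sigma_b]$. That loses the inverse-linear coverage dependence unless coverage is nearly uniform.

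\paragraph{Missing idea: sum the weighted powers before taking any norm.} Keep $\alpha\Sigma_b$ attached to the powers of $A_j$ on its left. Write $A_j^m-A_{j-1}^m=\sum_{i+l=m-1}A_j^i\,\alpha\gamma\Sigma_b\Delta_jA_{j-1}^l$. The output weight then factorizes as $\rho^{-m}=\rho^{-1}\rho^{-i}\rho^{-l}$, so with $N:=K-j$,
$\sum_{m=1}^{N}\rho^{-m}(A_j^m-A_{j-1}^m)=\rho^{-1}\gamma\sum_{l=0}^{N-1}\bigl[\sum_{i=0}^{N-1-l}(\rho^{-1}A_j)^i\alpha\Sigma_b\bigr]\Delta_j(\rho^{-1}A_{j-1})^l$.

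The bracket is a partial sum of the nonnegative Neumann series of $(I-\rho^{-1}A_j)^{-1}\alpha\Sigma_b$. Set $c:=\alpha\widetilde\sigma_b/(1-\rho^{-1}q_b)$. The identity $A_j\mathbf 1=\mathbf 1-\alpha(1-\gamma)\Sigma_b\mathbf 1$ gives $(I-\rho^{-1}A_j)c\mathbf 1\ge\alpha\Sigma_b\mathbf 1$, so the bracket has row sums at most $c$. Coordinatewise, this says $\sigma\mapsto\alpha\sigma/[1-\rho^{-1}+\rho^{-1}\alpha(1-\gamma)\sigma]$ is nonincreasing because $\rho<1$: a larger visitation weight is more than offset by faster contraction.

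The remaining factors then close the bound. First, $\|(\rho^{-1}A_{j-1})^l\|_\infty\le(\rho^{-1}q_b)^l$ supplies the second factor $(1-\rho^{-1}q_b)^{-1}$. Second, the leftover $\rho^{-1}$ cancels against $(\rho^{-1}-1)\sum_{j=1}^{K-1}\rho^{-(j-1)}/(\rho^{-K}-1)\le\rho$.

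\paragraph{What you can keep.} Your Step 1, with absolute values on each $E_j^{(k)}(s)$, survives. Use the entrywise bound $|(A_j^m-A_{j-1}^m)x|\le\gamma\|\pi_j-\pi_{j-1}\|_{1,\infty}\|x\|_\infty\sum_{i+l=m-1}q_b^l A_j^i\alpha\Sigma_b\mathbf 1$. Apply $\ell_1$--$\ell_\infty$ duality only after summing over $m$; this is legitimate because $J_s(\pi_\tau^*-\pi_{j-1})$ and $Q_\tau^{\pi_{j-1}}-Q_\tau^{j-1}$ do not depend on $k$.

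A minor slip: $\rho^{K-k-1}=\rho^{K-j-1}\rho^{-(m-1)}$, not $\rho^{K-j}\rho^{-(m-1)}$. It is harmless, since the outer factor becomes $(1-\rho^{K-1})/(1-\rho^K)\le 1$.
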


\begin{proof}
For $N\geq1$, let $S_{j,N}:=\sum_{n=1}^N\rho^{-n} \bigl[(A^{\pi_j})^n-(A^{\pi_{j-1}})^n\bigr]$. Recall the distribution of $\widehat K$. Exchanging the finite $k$- and $j$-sums and setting $n=k-j+1$ give
\begin{equation}
\mathbb E\left[\sum_{j=1}^{\widehat K}E_j^{(\widehat K)}(s)\right]
=\frac{\rho^{-1}-1}{\rho^{-K}-1}\sum_{j=1}^{K-1}\rho^{-(j-1)}\mathbb E\left[\left\langle S_{j,K-j}^\top J_s(\pi_\tau^*-\pi_{j-1}),Q_\tau^{\pi_{j-1}}-Q_\tau^{j-1}\right\rangle\right].
\label{eq:weighted-operator-drift-reindexing}
\end{equation}
Thus it remains to bound $\|S_{j,N}\|_\infty$ uniformly in $N$. To this end, we first verify the following estimate whenever $\rho^{-1}q_b<1$,
\begin{equation}
\left\|
\left(I-\rho^{-1}A^\pi\right)^{-1}\alpha\Sigma_b
\right\|_\infty
\leq
\frac{\alpha\widetilde\sigma_b}{1-\rho^{-1}q_b}.
\label{eq:weighted-resolvent-sigma-bound}
\end{equation}
With $c:=\alpha\widetilde{\sigma}_b/(1-\rho^{-1}q_b)$, the identity
$A^{\pi_j}\mathbf 1
=\mathbf 1-\alpha(1-\gamma)\Sigma_b\mathbf 1$
implies $(I-\rho^{-1}A^{\pi_j})c\mathbf 1 \geq\alpha\Sigma_b\mathbf 1$. Since $A^{\pi_j}$ is nonnegative and $\rho^{-1}q_b<1$, comparison with its convergent Neumann series proves \eqref{eq:weighted-resolvent-sigma-bound}.  It also implies, for every $L\geq0$,
\begin{equation*}
\left\|
\sum_{\ell=0}^L
\left(\rho^{-1}A^{\pi_j}\right)^\ell\alpha\Sigma_b
\right\|_\infty
\leq
\frac{\alpha\widetilde{\sigma}_b}{1-\rho^{-1}q_b}.
\end{equation*}
Using
$A^{\pi_j}-A^{\pi_{j-1}}
=\alpha\gamma\Sigma_b
(P_{\scriptscriptstyle\calS\times\calA}^{\pi_j}
-P_{\scriptscriptstyle\calS\times\calA}^{\pi_{j-1}})$
and the power-difference identity,
\begin{align}
S_{j,N}
&=
\sum_{n=1}^N\rho^{-n}\sum_{m=0}^{n-1}
(A^{\pi_j})^{n-1-m}
(A^{\pi_j}-A^{\pi_{j-1}})(A^{\pi_{j-1}})^m
\nonumber\\
&=
\rho^{-1}\gamma
\sum_{m=0}^{N-1}
\left[
\sum_{\ell=0}^{N-1-m}
(\rho^{-1} A^{\pi_j})^\ell
\alpha\Sigma_b
\right]
\left(
P_{\scriptscriptstyle\calS\times\calA}^{\pi_j}
-P_{\scriptscriptstyle\calS\times\calA}^{\pi_{j-1}}
\right)
(\rho^{-1} A^{\pi_{j-1}})^m.
\label{eq:weighted-power-difference-expansion}
\end{align} 
The second equality substitutes the expression for $A^{\pi_j}-A^{\pi_{j-1}}$, sets $\ell=n-1-m$, and exchanges the two finite sums. For every vector $x$,
\[
\left\|(P_{\scriptscriptstyle\calS\times\calA}^{\pi_j}
-P_{\scriptscriptstyle\calS\times\calA}^{\pi_{j-1}})x\right\|_\infty
\leq\|\pi_j-\pi_{j-1}\|_{1,\infty}\|x\|_\infty,
\qquad
\|(\rho^{-1}A^{\pi_{j-1}})^m\|_\infty
\leq(\rho^{-1}q_b)^m.
\]
Therefore, applying \eqref{eq:weighted-resolvent-sigma-bound} to the first bracket in \eqref{eq:weighted-power-difference-expansion} and summing $\sum_{m\geq0}(\rho^{-1}q_b)^m=(1-\rho^{-1}q_b)^{-1}$ yield
\begin{equation}
\|S_{j,N}\|_\infty
\leq
\frac{
\rho^{-1}\alpha\gamma\widetilde{\sigma}_b
}{
(1-\rho^{-1}q_b)^2
}
\|\pi_j-\pi_{j-1}\|_{1,\infty}.
\label{eq:weighted-power-difference-bound}
\end{equation}
Finally, as $\|J_s(\pi_\tau^*-\pi_{j-1})\|_1\leq2$, \eqref{eq:uniform-critic-error-bound} implies that
\begin{align*}
\left|\mathbb E\left[\sum_{j=1}^{\widehat K}E_j^{(\widehat K)}(s)\right]\right|
&\leq
\frac{4(\rho^{-1}-1)(1+\tau\gamma H_h)}
{(1-\gamma)(\rho^{-K}-1)}
\sum_{j=1}^{K-1}\rho^{-(j-1)}
\mathbb E\left[\|S_{j,K-j}\|_\infty\right]
\\
&\leq
\frac{4\alpha\gamma\widetilde{\sigma}_b\eta(1+\tau\gamma H_h)}
{\lambda(1-\gamma)(1-\rho^{-1}q_b)^2}
\left(\frac{1+\tau\gamma H_h}{1-\gamma}+\tau L_h\right),
\end{align*}
where we leverage \eqref{eq:uniform-policy-increment-value-section}, \eqref{eq:weighted-power-difference-bound}, and the fact that $(\rho^{-1}-1)\sum_{j=1}^{K-1}\rho^{-(j-1)}/(\rho^{-K}-1)\leq\rho$ in the last inequality.
\end{proof}

\vspace{1em}

For the weights in \eqref{eq:exponential-batch-weights-value-section}, the condition $0\leq\vartheta<\kappa_b$ and summation of the resulting powers give
\begin{equation}
\sum_{t=0}^{B-1}c_t^k\kappa_b^t
=
\frac{\kappa_b^B-\vartheta^B}
{(\kappa_b-\vartheta)\sum_{\ell=0}^{B-1}\vartheta^\ell}
\leq\frac{\kappa_b^B}{\kappa_b-\vartheta}.
\label{eq:exponential-batch-mixing-bound}
\end{equation}

\begin{lemma}[Bound for the Markov-noise term]
\label{lem:F-noise-bound}
For every $k\geq1$ and $s\in\calS$,
\begin{equation}
\left|
\mathbb E\left[
\sum_{j=1}^kF_j^{(k)}(s)
\right]
\right|
\leq
\frac{4m_b(1+\tau\gamma H_h)}
{(1-\gamma)^2\widetilde{\sigma}_b}
\frac{\kappa_b^B}{\kappa_b-\vartheta}.
\label{eq:F-noise-bound}
\end{equation}
\end{lemma}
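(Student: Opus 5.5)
The plan is to use the tower property and the $\mathcal H_{j-1}$-measurability of everything in $F_j^{(k)}(s)$ except the centered error $\bar\omega_{j-1}$, and then to invoke the conditional-bias estimate of Lemma~\ref{lem:stochastic-error-bound}. In $F_j^{(k)}(s)=-\alpha\bigl\langle [(A^{\pi_j})^{k-j}]^\top J_s(\pi_\tau^*-\pi_j),\bar\omega_{j-1}\bigr\rangle$, the policy $\pi_j$ is determined by $Q_\tau^{j-1}$ and $\pi_{j-1}$. It is therefore $\mathcal H_{j-1}$-measurable, as noted after \eqref{eq:critic-history}. Hence the vector
\[
u_j^{(k)}:=\alpha\bigl[(A^{\pi_j})^{k-j}\bigr]^\top J_s(\pi_\tau^*-\pi_j)
\]
is $\mathcal H_{j-1}$-measurable, and $\mathbb E[F_j^{(k)}(s)]=-\mathbb E\bigl[\langle u_j^{(k)},\mathbb E[\bar\omega_{j-1}\mid\mathcal H_{j-1}]\rangle\bigr]$.

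Next, by $\ell_1$--$\ell_\infty$ duality, $|\langle u_j^{(k)},\mathbb E[\bar\omega_{j-1}\mid\mathcal H_{j-1}]\rangle|\le\|u_j^{(k)}\|_1\,\|\mathbb E[\bar\omega_{j-1}\mid\mathcal H_{j-1}]\|_\infty$. For the second factor, Lemma~\ref{lem:stochastic-error-bound} combined with \eqref{eq:exponential-batch-mixing-bound} gives the bound $\frac{2m_b(1+\tau\gamma H_h)}{1-\gamma}\frac{\kappa_b^B}{\kappa_b-\vartheta}$. For the first factor, write $\|u_j^{(k)}\|_1=\alpha\|[(A^{\pi_j})^{k-j}]^\top x\|_1$ with $x=J_s(\pi_\tau^*-\pi_j)$ and $\|x\|_1\le2$. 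The $\ell_1$ operator norm of a transpose equals the $\ell_\infty$ norm of the matrix, so $\|u_j^{(k)}\|_1\le 2\alpha q_b^{k-j}$.

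Summing over $j=1,\dots,k$ gives $\alpha\sum_{j}q_b^{k-j}\le \alpha/(1-q_b)=1/((1-\gamma)\widetilde\sigma_b)$. The total bound is then
\[
2\cdot\frac{1}{(1-\gamma)\widetilde\sigma_b}\cdot\frac{2m_b(1+\tau\gamma H_h)}{1-\gamma}\frac{\kappa_b^B}{\kappa_b-\vartheta},
\]
which is exactly \eqref{eq:F-noise-bound}. The one point needing care is the geometric sum: the term $\ell=0$ is included, and it is what yields $\alpha/(1-q_b)$ rather than the tail bound stated earlier. Otherwise the main obstacle is only confirming the measurability of $\pi_j$ with respect to $\mathcal H_{j-1}$, which holds because the policy update precedes sampling of batch $j-1$.
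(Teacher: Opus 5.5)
Your proposal is correct and follows essentially the same route as the paper. You condition on $\mathcal H_{j-1}$ using the measurability of $\pi_j$ and $A^{\pi_j}$, bound each term by $2\alpha q_b^{k-j}$ times the conditional bias from Lemma~\ref{lem:stochastic-error-bound} combined with \eqref{eq:exponential-batch-mixing-bound}, and sum the geometric series $\alpha\sum_{\ell=0}^{k-1}q_b^\ell\le[(1-\gamma)\widetilde\sigma_b]^{-1}$.
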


\begin{proof}
Since $A^{\pi_j}$, $\pi_j$, and the initial state of batch $j-1$ are $\mathcal H_{j-1}$-measurable, conditioning the definition of $F_j^{(k)}(s)$ on $\mathcal H_{j-1}$ yields
\begin{align}
\left|\mathbb E\left[F_j^{(k)}(s)\right]\right|
&=
\alpha\left|\mathbb E\left[
\left\langle
\left[\left(A^{\pi_j}\right)^{k-j}\right]^\top
J_s(\pi_\tau^*-\pi_j),
\mathbb E[\bar\omega_{j-1}\mid\mathcal H_{j-1}]
\right\rangle\right]\right|
\nonumber\\
&\leq
2\alpha q_b^{k-j}
\mathbb E\left[
\left\|\mathbb E[\bar\omega_{j-1}\mid\mathcal H_{j-1}]
\right\|_\infty\right]
\nonumber\\
&\leq
\frac{4\alpha m_b(1+\tau\gamma H_h)}{1-\gamma}
q_b^{k-j}
\frac{\kappa_b^B}{\kappa_b-\vartheta},
\label{eq:F-noise-one-step-bound}
\end{align}
where we use $\|J_s(\pi_\tau^*-\pi_j)\|_1\leq2$ and the contraction of $A^{\pi_j}$ in the first inequality, and the second inequality is due to Lemma~\ref{lem:stochastic-error-bound} together with \eqref{eq:exponential-batch-mixing-bound}.  Finally, linearity of expectation and the triangle inequality give
\begin{equation*}
\left|\mathbb E\left[\sum_{j=1}^kF_j^{(k)}(s)\right]\right|
\leq\sum_{j=1}^k\left|\mathbb E[F_j^{(k)}(s)]\right|
\leq\frac{4m_b(1+\tau\gamma H_h)}{1-\gamma}
\frac{\kappa_b^B}{\kappa_b-\vartheta}
\alpha\sum_{\ell=0}^{k-1}q_b^\ell.
\end{equation*}
Since $\alpha\sum_{\ell=0}^{k-1}q_b^\ell \leq[(1-\gamma)\widetilde\sigma_b]^{-1}$, \eqref{eq:F-noise-bound} is proved.
\end{proof}

\vspace{1em}

We now substitute the preceding five bounds above into \eqref{eq:weighted-value-gap-master-bound}.  First, the initial propagation term satisfies, uniformly in $s\in\calS$,
\begin{equation}
\left|B_0^{(k)}(s)\right|
\leq2q_b^k\left\|Q_\tau^{\pi_0}-Q_\tau^0\right\|_\infty.
\label{eq:initial-propagation-bound}
\end{equation}
Indeed, $\|J_s(\pi_\tau^*-\pi_0)\|_1\leq2$ and $\|\left(A^{\pi_0}\right)^kx\|_\infty \leq q_b^k\|x\|_\infty$.  Hence, whenever $\rho^{-1}q_b<1$,
\begin{equation}
\mathbb E\left[\left|B_0^{(\widehat K)}(s)\right|\right]
\leq
\frac{2(\rho^{-1}-1)\left[1-(\rho^{-1}q_b)^K\right]}
{(\rho^{-K}-1)(1-\rho^{-1}q_b)}
\left\|Q_\tau^{\pi_0}-Q_\tau^0\right\|_\infty.
\label{eq:weighted-initial-propagation-bound}
\end{equation}

\begin{proof}[Proof of Theorem~\ref{thm:off-policy-markov-value-gap}]
Since $\rho=(1+\eta\tau)^{-1}\in(0,1)$,
\[
\rho^{-K}\geq2
\quad\Longleftrightarrow\quad
K\geq\frac{\log 2}{\log(1/\rho)}.
\]
Thus, the stated integer lower bound on $K$ ensures $\rho^{-K}\geq2$. Writing $a:=\alpha(1-\gamma)\widetilde{\sigma}_b$, the stepsize bound in the theorem gives $1-\rho=\eta\tau/(1+\eta\tau)\leq a/2$ and hence the common stability margin
\[
1-\rho^{-1}q_b
=
\rho^{-1}\bigl[a-(1-\rho)\bigr]
\geq
\frac{1}{2}\rho^{-1}\alpha(1-\gamma)\widetilde{\sigma}_b>0.
\]
Thus, Lemma~\ref{lem:weighted-operator-drift} applies. Moreover, Lemmas~\ref{lem:bounded_value} and \ref{lem:bounded-stochastic-critic-iterates}, together with $Q_\tau^0=0$, give $\|Q_\tau^{\pi_0}-Q_\tau^0\|_\infty \leq(1+\tau\gamma H_h)/(1-\gamma)$. Consequently, \eqref{eq:weighted-initial-propagation-bound} bounds the contribution of $B_0$ by the second summand in $C_1$.  The condition $\rho^{-K}\geq2$ also gives
\[
\frac{2(\rho^{-1}-1)\rho^{-(K-1)}(1+\tau H_h)}
{(1-\gamma)^2(\rho^{-K}-1)}
\leq
\frac{4\eta\tau(1+\tau H_h)}{(1-\gamma)^2}.
\]
Substituting Lemmas~\ref{lem:C-drift-bound}, \ref{lem:D-drift-bound}, \ref{lem:weighted-operator-drift}, and \ref{lem:F-noise-bound} into \eqref{eq:BCDEF-expectation-bound} and then into \eqref{eq:weighted-value-gap-master-bound}, the $C$- and $E$-terms combine as
\[
\frac{2\gamma\eta
\bigl[1+\tau\gamma H_h+\tau(1-\gamma)L_h\bigr]
\bigl[9(1+\tau\gamma H_h)+\tau(1-\gamma)L_h\bigr]}
{\lambda\alpha(1-\gamma)^5\widetilde{\sigma}_b}.
\]
Collecting the remaining terms proves \eqref{eq:explicit-three-term-value-gap}.
\end{proof}

\subsection{Proof of Corollary~\ref{cor:off-policy-markov-value-sample-complexity}}

\paragraph{Constant-stepsize bias.}
For this proof only, set $A:=1+\tau\gamma H_h$, $R:=\tau(1-\gamma)L_h$, and $G:=(A+R)(9A+R)$.  Since $A\geq1$ and $R\geq0$, we have $G\geq9A^2\geq9$.  Additionally, the constant $C_2$ is at most
\[
\frac{4\bigl[1+\lambda\tau(1+\tau H_h)\bigr]G}
{\lambda\alpha(1-\gamma)^5\widetilde{\sigma}_b}
\]
as $\alpha(1-\gamma)\widetilde{\sigma}_b<1$ and $\gamma\leq1$. Indeed, after division by $G/[\lambda\alpha(1-\gamma)^5\widetilde{\sigma}_b]$, the first three summands are bounded by $4\lambda\tau(1+\tau H_h)$, $1$, and $1$, respectively, while the last is bounded by $2$. Thus, \eqref{eq:epsilon-policy-stepsize} ensures that the second term in \eqref{eq:explicit-three-term-value-gap} does not exceed $\epsilon/3$.

\paragraph{Stability and optimization error.}
Writing $a:=\alpha(1-\gamma)\widetilde{\sigma}_b\in(0,1)$ within this proof, there holds
\[
\frac{\eta_\epsilon\tau(2-a)}{a}
\leq
\frac{2\lambda\tau}
{12\bigl[1+\lambda\tau(1+\tau H_h)\bigr]G}
<1.
\]
Hence, the stepsize condition in Theorem~\ref{thm:off-policy-markov-value-gap} holds and $0<\eta_\epsilon\tau<1$, which implies $\log(1+\eta_\epsilon\tau)\geq\eta_\epsilon\tau/2$. Furthermore,
\[
\frac{\tau D_{\pi_0}^{\pi_\tau^*}(d_\mu^*)}{1-\gamma}
+
\frac{4\eta_\epsilon\tau(1+\tau\gamma H_h)}
{\alpha(1-\gamma)^3\widetilde{\sigma}_b}
\leq
\frac{\tau D_{\pi_0}^{\pi_\tau^*}(d_\mu^*)}{1-\gamma}
+
\frac{4(1+\tau\gamma H_h)}{(1-\gamma)^2}.
\]
Consequently, \eqref{eq:epsilon-outer-iterations} yields
\[
(1+\eta_\epsilon\tau)^{K_\epsilon}-1
\geq
\max\!\left\{
1,\frac{3}{\epsilon}
\left[
\frac{\tau D_{\pi_0}^{\pi_\tau^*}(d_\mu^*)}{1-\gamma}
+
\frac{4(1+\tau\gamma H_h)}{(1-\gamma)^2}
\right]\right\}.
\]
In particular, $(1+\eta_\epsilon\tau)^{K_\epsilon}\geq2$, and the first term in \eqref{eq:explicit-three-term-value-gap} does not exceed $\epsilon/3$.

\paragraph{Mixing error and sample count.}
Finally, \eqref{eq:epsilon-batch-length} gives
\[
\frac{\kappa_b^{B_\epsilon}}{\kappa_b-\vartheta}
\leq
\frac{\epsilon(1-\gamma)^3\widetilde{\sigma}_b}
{12m_b(1+\tau\gamma H_h)},
\]
so the third term does not exceed $\epsilon/3$.  Summing the three bounds proves \eqref{eq:epsilon-value-guarantee}.  Moreover, the definition of $L_h$ gives
\[
R
=
\tau(1-\gamma)H_h
+
\max\left\{
\frac{\tau(1-\gamma)}{2}
\max_{s\in\calS,\,a\in\calA}
D_h\!\left(e_a\,\middle\|\,\pi_0(\cdot\mid s)\right),
A+\tau(1-\gamma)H_h
\right\},
\]
so $A$, $R$, and $G$ are bounded when $\tau,H_h$, and $\max_{s,a}D_h(e_a\,\|\,\pi_0(\cdot\mid s))$ are fixed.  Hence \eqref{eq:epsilon-policy-stepsize}--\eqref{eq:epsilon-batch-length} give $K_\epsilon=\widetilde{\mathcal O}\bigl(
[(1-\gamma)^5\widetilde\sigma_b\epsilon]^{-1}\bigr)$ and $B_\epsilon=\widetilde{\mathcal O}(1)$ under the stated convention, and then the claimed bound for $N_\epsilon=K_\epsilon B_\epsilon$ follows.

\end{document}